\documentclass[conference]{IEEEtran}
\usepackage[numbers]{natbib}
\usepackage{times}
\usepackage{graphicx}
\usepackage{amsmath,amssymb,amsfonts,amsthm,amssymb}
\usepackage{bbm}
\usepackage{thmtools}
\usepackage{mathtools}
\usepackage{lipsum}  
\usepackage{xspace}
\usepackage{diagbox}

\usepackage{enumitem}
\usepackage{units}
\usepackage{booktabs} 
\usepackage{tabulary}
\usepackage{algorithm}
\usepackage{algpseudocode}
\usepackage{csquotes}

\algtext*{EndWhile}
\algtext*{EndIf}
\algtext*{EndFor}

\algblock[Input]{Input}{EndInput}
\algtext*{EndInput}
\algblock[Output]{Output}{EndOutput}
\algtext*{EndOutput}
\algblock[Variables]{Variables}{EndVariables}
\algtext*{EndVariables}
\usepackage{ifthen,version}
\usepackage{color}
\usepackage{soul}
\usepackage{moreverb,url}
\usepackage{lipsum}
\usepackage[pdftex,colorlinks,bookmarks=true]{hyperref} 

\newtheorem{theorem}{Theorem}
\newtheorem*{theorem*}{Theorem}
\newtheorem{lemma}{Lemma}
\newtheorem*{lemma*}{Lemma}
\newtheorem{problem}{Problem}
\newtheorem{definition}{Definition}
\newtheorem{observation}{O}
\makeatletter
\@addtoreset{observation}{section}
\makeatother

\DeclareMathOperator*{\argmin}{arg\,min}
\DeclareMathOperator*{\argmax}{arg\,max}

\newcommand{\argmaxprob}[1]{\argmax\limits_{#1}}
\newcommand{\argminprob}[1]{\argmin\limits_{#1}}
\newcommand{\norm}[2]{\left|\left| #1 \right|\right|_{#2}}
\newcommand{\abs}[1]{\left|#1 \right|}
\newcommand{\card}[1]{\left|#1\right|}

\DeclareMathOperator*{\suchthat}{\;\; \mbox{s.t.} \;\;}
\newcommand{\expect}[2]{\mathbb{E}_{#1}\left[#2\right]}
\newcommand{\expectS}[1]{\mathbb{E}_{#1}}

\newcommand{\real}[0]{\mathbb{R}}

\newcommand{\bbm}{\begin{bmatrix}}
\newcommand{\ebm}{\end{bmatrix}}

\newcommand{\pair}[2]{\left( #1, #2\right)}
\newcommand{\set}[1]{\left\lbrace #1\right\rbrace}
\newcommand{\seq}[2]{\left(#1_{1}, #1_{2}, \ldots, #1_{#2}\right)}
\newcommand{\seqset}[2]{\left\lbrace#1_{1}, #1_{2}, \ldots, #1_{#2}\right\rbrace}
\newcommand{\setst}[2]{\left\lbrace #1\;\;\middle|\;\;#2\right\rbrace}

\newcommand{\bigo}[1]{\mathcal{O}\left(#1\right)}
\newcommand{\graph}[0]{\mathcal{G}}
\newcommand{\edgeSet}[0]{\mathcal{E}}
\newcommand{\vertexSet}[0]{\mathcal{V}}
\newcommand{\vertex}[0]{v}

\newcommand{\vertexStart}[0]{\vertex_s}
\newcommand{\Path}[0]{\xi}
\newcommand{\PathSet}[0]{\Xi}

\newcommand{\world}[0]{\phi}

\newcommand{\worldSet}[0]{\mathcal{M}}
\newcommand{\meas}[0]{y}
\newcommand{\measSet}[0]{\mathcal{Y}}
\newcommand{\measFnDef}[0]{\mathcal{H}}
\newcommand{\measFn}[2]{\measFnDef\left(#1, #2\right)}

\newcommand{\utilityFnDef}[0]{\mathcal{F}}
\newcommand{\utilityFn}[2]{\utilityFnDef\left(#1, #2\right)}
\newcommand{\marginalGain}[2]{\Delta_\utilityFnDef\left(#1, #2\right)}

\newcommand{\costFnDef}[0]{\mathcal{T}}
\newcommand{\costFn}[2]{\costFnDef\left(#1, #2\right)}
\newcommand{\costBudget}[0]{B}

\newcommand{\state}[0]{s}
\newcommand{\stateSet}[0]{\mathcal{S}}
\newcommand{\action}[0]{a}

\newcommand{\actionSet}[0]{\mathcal{A}}
\newcommand{\actionSetFeas}[1]{\actionSet_{\mathrm{feas}}\left(#1\right)}
\newcommand{\transFnDef}[0]{\Omega}
\newcommand{\transFn}[3]{\transFnDef{}\left(#1, #2, #3\right)}
\newcommand{\rewardFnDef}[0]{R}
\newcommand{\rewardFn}[2]{\rewardFnDef{}\left(#1, #2\right)}

\newcommand{\policy}[0]{\pi}
\newcommand{\policySet}[0]{\Pi}

\newcommand{\valueFn}[2]{V^{#1}_{#2}}

\newcommand{\QFn}[2]{Q^{#1}_{#2}}

\newcommand{\valuePol}[1]{J\left(#1\right)}

\newcommand{\obs}[0]{o}
\newcommand{\obsSet}[0]{\mathcal{O}}
\newcommand{\obsFnDef}[0]{Z}
\newcommand{\obsFn}[3]{\obsFnDef{}\left(#1, #2, #3\right)}

\newcommand{\belief}[0]{\psi}
\newcommand{\beliefSpace}[0]{\Psi}

\newcommand{\VBel}[0]{\tilde{V}}
\newcommand{\QBel}[0]{\tilde{Q}}

\newcommand{\valueFnBel}[2]{\tilde{V}^{#1}_{#2}}
\newcommand{\QFnBel}[2]{\tilde{Q}^{#1}_{#2}}

\newcommand{\dataset}[0]{\mathcal{D}}
\newcommand{\policyLEARN}[0]{\hat{\pi}}
\newcommand{\policyLEARNAvg}[0]{\policyLEARN_{\mathrm{avg}}}

\newcommand{\numLearnIter}[0]{N}
\newcommand{\mixfrac}[1]{\beta_{#1}}
\newcommand{\numDatapoints}[0]{m}

\newcommand{\policyOR}[0]{\pi_{\mathrm{OR}}}
\newcommand{\policyMix}[1]{\pi_{\mathrm{mix},#1}}

\newcommand{\policyORBel}[0]{\tilde{\pi}_{\mathrm{OR}}}

\newcommand{\QVal}[0]{Q}
\newcommand{\QOR}[0]{\QVal^\mathrm{OR}}

\newcommand{\lossFnPolicyDef}[0]{\mathcal{L}}
\newcommand{\lossFnPolicy}[2]{\lossFnPolicyDef{}\left( #1, #2\right)}

\newcommand{\errcs}[0]{\varepsilon_{\mathrm{cs}}}
\newcommand{\errhor}[0]{\varepsilon_{\mathrm{or}}}

\newcommand{\errclass}[0]{\varepsilon_{\mathrm{class}}}

\newcommand{\errreg}[0]{\varepsilon_{\mathrm{reg}}}

\newcommand{\policyGR}[0]{\pi_{\mathrm{GR}}}

\newcommand{\feature}[0]{f}
\newcommand{\featureIG}[0]{\feature{}_{\mathrm{IG}}}
\newcommand{\featureMot}[0]{\feature{}_{\mathrm{mot}}}

\newcommand{\ray}[0]{r}
\newcommand{\raySet}[0]{\mathcal{R}}
\newcommand{\occGridDef}[0]{\mathcal{X}}
\newcommand{\occGrid}[1]{\occGridDef{}\left(#1\right)}
\newcommand{\gridCell}[0]{x}
\newcommand{\probOcc}[1]{P_o\left(#1\right)}
\newcommand{\probFree}[1]{\bar{P}_o\left(#1\right)}
\newcommand{\infoGainFnDef}[0]{\mathcal{I}}
\newcommand{\infoGainFn}[2]{\infoGainFnDef_{#1}\left(#2\right)}
\newcommand{\infoGainVertFn}[2]{\mathcal{I}\mathcal{G}_{#1}\left(#2\right)}

\newcommand{\Graph}[0]{\graph}

\newcommand{\edge}[0]{e}
\newcommand{\start}[0]{\vertex_s}
\newcommand{\goal}[0]{\vertex_g}
\newcommand{\succFnDef}[0]{\mathtt{Succ}}
\newcommand{\succFn}[1]{\succFnDef(#1)}
\newcommand{\evalFn}[0]{\mathtt{Eval}}

\newcommand{\plannerTuple}[6]{\langle#1, #2, #3, #4, #5, #6\rangle}

\newcommand{\vertexSucc}[0]{\vertexSet_\mathrm{succ}}
\newcommand{\invalidEdge}[0]{\edgeSet_\mathrm{inv}}

\newcommand{\openList}[0]{\mathcal{O}}
\newcommand{\closedList}[0]{\mathcal{C}}
\newcommand{\closedObsList}[0]{\mathcal{I}}

\newcommand{\planTime}[0]{T}

\newcommand{\costToGoOracle}{Q^{\textsc{OR}}}

\newcommand{\featureVec}{f}

\newcommand{\featureSpace}{\mathcal{F}}

\newcommand{\greedyEuc}{h_{\textsc{EUC}}}
\newcommand{\greedyMan}{h_{\textsc{MAN}}}
\newcommand{\distObs}{d_{OBS}}

\newcommand{\gVal}{g_{\vertex}}
\newcommand{\treeDepth}{d_{TREE}}

\newcommand{\mixParam}[0]{\beta}
\newcommand{\dataPointsParam}{k}
\newcommand{\policyOracle}[0]{\policy_\mathrm{OR}}

\newcommand{\aggrevate}[0]{\textsc{AggreVaTe}\xspace}
\newcommand{\Dagger}[0]{\textsc{DAgger}\xspace}
\newcommand{\FT}[0]{\textsc{ForwardTraining}\xspace}
\newcommand{\RearSideVoxel}[0]{Rear Side Voxel\xspace}
\newcommand{\AverageEntropy}[0]{Average Entropy\xspace}
\newcommand{\OcclusionAware}[0]{Occlusion Aware\xspace}

\newcommand{\knownunc}[0]{\textsc{Known-Unc}\xspace}
\newcommand{\knowncon}[0]{\textsc{Known-Con}\xspace}
\newcommand{\hiddenunc}[0]{\textsc{Hidden-Unc}\xspace}
\newcommand{\hiddencon}[0]{\textsc{Hidden-Con}\xspace}
\newcommand{\algRewFT}[0]{\textsc{RewardFT}\xspace}
\newcommand{\algQvalFT}[0]{\textsc{QvalFT}\xspace}
\newcommand{\algRewAgg}[0]{\textsc{RewardAgg}\xspace}
\newcommand{\algQvalAgg}[0]{\textsc{QvalAgg}\xspace}

\newcommand{\search}[0]{\mathtt{Search}}
\newcommand{\select}[0]{\mathtt{Select}}
\newcommand{\expand}[0]{\mathtt{Expand}}

\newcommand{\algName}[0]{\textsc{NONAME}}

\newcommand{\optHeur}[0]{\textsc{Opt-Heur}\xspace}
\newcommand{\algSail}[0]{\textsc{SaIL}\xspace}

\usepackage[utf8]{inputenc}

\date{}

\title{\LARGE \bf
Data-driven Planning via Imitation Learning
}

\author{\authorblockN{Sanjiban Choudhury\authorrefmark{1},
Mohak Bhardwaj\authorrefmark{1},
Sankalp Arora\authorrefmark{1},\\
Ashish Kapoor\authorrefmark{2},
Gireeja Ranade\authorrefmark{2}, 
Sebastian Scherer\authorrefmark{1} and
Debadeepta Dey\authorrefmark{2}}
\authorblockA{\authorrefmark{1}The Robotics Institute\\
Carnegie Mellon University,
Pittsburgh, PA 15232\\ Email: \{sanjibac,asankalp, mbhardwa, basti\}@andrew.cmu.edu}
\authorblockA{\authorrefmark{2}Microsoft Research, Redmond USA\\
Email: \{akapoor,giranade,dedey\}@microsoft.com}}

\begin{document}

\maketitle
\thispagestyle{empty}
\pagestyle{empty}

\thispagestyle{empty}
\pagestyle{empty}

\makeatletter
\begin{abstract}

Robot planning is the process of selecting a sequence of actions that optimize for a task specific objective. For instance, the objective for a navigation task would be to find collision free paths, while the objective for an exploration task would be to map unknown areas. The optimal solutions to such tasks are heavily influenced by the implicit structure in the environment, i.e. the configuration of objects in the world. State-of-the-art planning approaches, however, do not exploit this structure, thereby expending valuable effort searching the action space instead of focusing on potentially good actions. In this paper, we address the problem of enabling planners to adapt their search strategies by inferring such good actions in an efficient manner using only the information uncovered by the search up until that time.

We formulate this as a problem of sequential decision making under uncertainty where at a given iteration a planning policy must map the state of the search to a planning action. Unfortunately, the training process for such partial information based policies is slow to converge and susceptible to poor local minima. Our key insight is that if we could fully observe the underlying world map, we would easily be able to disambiguate between good and bad actions. We hence present a novel data-driven imitation learning framework to efficiently train planning policies by imitating a \emph{clairvoyant oracle} - an oracle that at train time has full knowledge about the world map and can compute optimal decisions. We leverage the fact that for planning problems, such oracles can be efficiently computed and derive performance guarantees for the learnt policy. We examine two important domains that rely on partial information based policies - informative path planning and search based motion planning. We validate the approach on a spectrum of environments for both problem domains, including experiments on a real UAV, and show that the learnt policy consistently outperforms state-of-the-art algorithms. Our framework is able to train policies that achieve upto $39\%$ more reward than state-of-the art information gathering heuristics and a $70$x speedup as compared to A* on search based planning problems. 
Our approach paves the way forward for applying data-driven techniques to other such problem domains under the umbrella of robot planning.

\end{abstract}
\IEEEpeerreviewmaketitle


\section{Introduction}

Motion planning, the task of computing a sequence of collision-free motions for a robotic system from a start to a goal configuration, has a rich and varied history~\citep{Lav06}. Up until now, the bulk of the prominent research has focused on the development of tractable planning algorithms with provable \emph{worst-case performance guarantees} such as computational complexity~\citep{canny1988complexity}, probabilistic completeness~\citep{lavalle2001randomized} or asymptotic optimality~\citep{karaman2011sampling}. 
In contrast, analysis of the \emph{expected performance} of these algorithms on real world planning problems a robot encounters has received considerably less attention, primarily due to the lack of standardized datasets or robotic platforms.

Informative path planning, the task of computing an optimal sequence of sensing locations to visit so as to maximize information gain,  has also had an extensive amount of prior work on algorithms with provable worst-case performance guarantees such as computational complexities ~\citep{singh2007efficient} and the probabilistic completeness~\citep{hollinger2013sampling} of information theoretic planning. While these algorithms use heuristics to approximate information gain using variants of Shannon’s entropy, their expected performance on real world planning problems is heavily influenced by the geometric distribution of objects encountered in the world.

\begin{figure*}[!t]
    \centering
    \includegraphics[width=\textwidth]{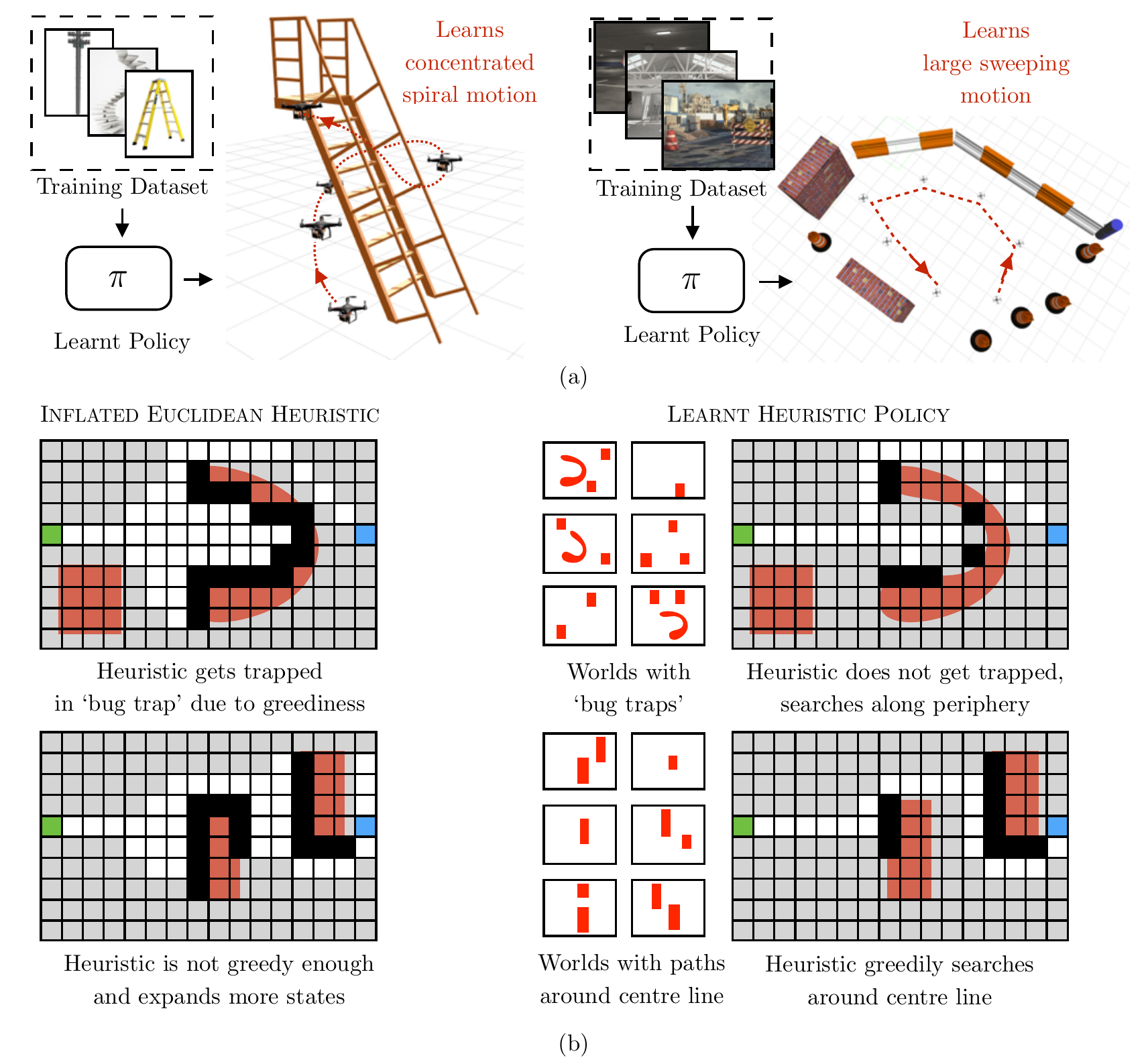}
    \caption{%
    Sequential decision making in informative path planning and search based planning. The implicit structure of the environment affects the performance of policies in both tasks. 
    (a) The effectiveness of a policy to gather information depends on the distribution of worlds.
    (left) When the distribution corresponds to a scene containing ladders, the learnt policy executes a helical motion around parts of the ladder already observed as it is unlikely that there is information elsewhere.
    (right) When the distribution corresponds to a scene from a construction site, the learnt policy executes a large sweeping motion as information is likely to be dispersed. 
    (b) A learnt heuristic policy adapts to different obstacle configurations to minimize search effort. All schematics show the evolution of a search algorithm as the expansion of a search wavefront (expanded(white), invalid(black), unexpanded(grey)) from start (green) to goal (blue). 
    A commonly used inflated Euclidean heuristic cannot adapt to different environments, e.g it gets stuck in bugtraps. 
    On the other hand, the learnt policy is able to infer the presence of a bug trap when trained on such a distribution and switch to greedy behaviour when trained on other distributions.
        \label{fig:marquee}}
\end{figure*}%

A unifying theme for both these problem domains is that as robots break out of contrived laboratory settings and operate in the real world, the scenarios encountered by them vary widely and have a significant impact on performance. 
Hence, a key requirement for autonomous systems is a \emph{robust planning module} that maintains \emph{consistent performance} across the diverse range of scenarios it is likely to encounter.
To do so, planning modules must possess the ability to leverage information about the implicit structure of the world in which the robot operates and adapt the planning strategy accordingly.
Moreover, this must occur in a pure \emph{data-driven fashion} without the need for human intervention.
Fortunately, recent advances in affordable sensors and actuators have enabled mass deployment of robots that navigate, interact and collect real data. This motivates us to examine the following question: 

\begin{displayquote}
How can we design planning algorithms that, subject to on-board computation and sensing constraints, maximize their expected performance on the actual distribution of problems that a robot encounters?
\end{displayquote}

\subsection{Motivation}

We look at two domains - informative path planning and search based planning.
We briefly delve into these motivations and make the case for data-driven approaches in both.
\subsubsection {Informative Path Planning}
We consider the following information gathering problem - given a hidden world map, sampled from a prior distribution, the goal is to successively visit sensing locations such that the amount of relevant information uncovered is maximized while not exceeding a specified fuel budget. This problem fundamentally recurs in mobile robot applications such as autonomous mapping of environments using ground and aerial robots~\citep{Charrow-RSS-15,heng2015efficient}, monitoring of water bodies~\citep{hollinger2013sampling} 
and inspecting models for 3D reconstruction~\citep{isler2016information,hollinger2011active}.

The nature of ``interesting'' objects in an environment and their spatial distribution influence the optimal trajectory a robot might take to explore the environment. As a result, it is important that a robot learns about the type of environment it is exploring as it acquires more information and adapts its exploration trajectories accordingly. 

To illustrate our point, we sketch out two extreme examples of environments for a particular mapping problem, shown in  Fig.~\ref{fig:marquee}(a). Consider a robot equipped with a sensor (RGBD camera) that needs to generate a map of an unknown environment. It is given a prior distribution about the geometry of the world, but has no other information. This geometry could include very diverse settings. First it can include a world where there is only one ladder, but the form of the ladder must be explored, which is a very dense setting. Second, it could include a sparse setting with spatially distributed objects, such as a construction site.

The important task for the robot is to now try to infer which type of environment it is in based on the history of measurements, and thus plan an efficient trajectory. At every time step, the robot visits a sensing location and receives a sensor measurement (e.g. depth image) that has some amount of information utility (e.g. surface coverage of objects with point cloud). As opposed to naive lawnmower-coverage patterns, it will be more efficient if the robot could use a policy that maps the history of locations visited and measurements received to decide which location to visit next such that it maximizes the amount of information gathered in the finite amount of battery time available.

The ability of such a learnt policy to gather information efficiently depends on the prior distribution of worlds in which the robot has been shown how to navigate optimally. Fig.~\ref{fig:marquee}(a) (left) shows an efficient learnt policy for inspecting a ladder, which executes a helical motion around parts of the ladder already observed to efficiently uncover new parts without searching naively. This is efficient because given the prior distribution the robot learns that information is likely to be geometrically concentrated in a particular volume given its initial observations of parts of the ladder. Similarly Fig.~\ref{fig:marquee}(a) (right) shows an effective policy for exploring construction sites by executing large sweeping motions. Here again the robot learns from prior experience that wide, sweeping motions are efficient since it has learnt that information is likely to be dispersed in such scenarios. We wish to arrive at an efficient procedure for training such a policy.

\subsubsection {Search Based Planning}

Search based motion planning offers a comprehensive framework for reasoning about a vast number of motion planning algorithms~\citep{Lav06}. In this framework, an algorithm grows a \emph{search tree} of feasible robot motions from a start configuration towards a goal~\citep{pearl1984heuristics}. 
This is done in an incremental fashion by first selecting a leaf node of the tree, \emph{expanding} this node by computing outgoing edges, checking each edge for validity and finally updating the tree with potentially new leaf nodes.
It is useful to visualize this search process as a \emph{wavefront of expanded nodes} that grows from the start outwards till it finds the goal as illustrated in Fig.~\ref{fig:marquee}(b). 

This paper addresses a class of robotic motion planning problems where edge evaluation dominates the search effort, such as for robots with complex geometries like robot arms~\citep{dellin2016guided} or for robots with limited onboard computation like UAVs~\citep{cover2013sparse}.
In order to ensure real-time performance, algorithms must prioritize minimizing the search effort, i.e. keeping the volume of the search wavefront as small as possible while it grows towards the goal. 
This is typically achieved by heuristics, which guide the search towards promising areas by selecting which nodes to expand.
As shown in Fig.~\ref{fig:marquee}, this acts as a force stretching the search wavefront towards the goal.

A good heuristic must balance the bi-objective criteria of finding a good solution and minimizing the search effort. 
The bulk of the prior work has focused on the former objective of guaranteeing that the search returns a near-optimal solution~\citep{pearl1984heuristics}. 
These approaches define a heuristic function as a \emph{distance metric} that estimates the cost-to-go value of a node~\citep{pohl1970first}.
However, estimation of this distance metric is difficult as it is a complex function of robot geometry, dynamics and obstacle configuration. 
Commonly used heuristics such as the euclidean distance do not adapt to different robot configurations or different environments.
On the other hand, by trying to compute a more accurate distance the heuristic should not end up doing more computation than the original search. 
While state-of-the-art methods propose different relaxation-based~\citep{likhachev2009planning, dolgov2008practical} and learning-based approaches~\citep{paden2017verification} to estimate the distance metric they run into a much more fundamental limitation - \emph{a small estimation error can lead to a large search wavefront}. Minimizing the estimation error does not necessarily minimize search effort.  

Instead, we focus on the latter objective of designing heuristics that explicitly reduce search effort in the interest of real-time performance. 
Our key insight is that \emph{heuristics should adapt during search} - as the search progresses, they should actively infer the structure of the valid configuration space, and focus the search on potentially good areas. Moreover, we want to learn this behaviour from data - changing the data distribution should change the heuristic automatically. 
Consider the example shown in Fig.~\ref{fig:marquee}(b). When a heuristic is trained on a world with `bug traps', it learns to recognize when the search is trapped and circumvent it. On the other hand, when it is trained on a world with narrow gaps, it learns a greedy behaviour that drives the search to the goal.

\subsection{Key Idea}

It is natural to think of both these problems as a Partially Observable Markov Decision Process (POMDP). However the POMDP is defined on a belief over possible world maps, which is very large in size rendering even the most efficient of online POMDP solvers impractical. 

Our key insight is that if the policies could fully observe and process the world map during decision making, they could quite easily disambiguate good actions from bad ones. This motivates us to frame the problem of learning a planning policy as a novel data-driven imitation~\citep{ross2014reinforcement} of a \emph{clairvoyant oracle}. During the training process, the oracle has full knowledge about the world map (hence clairvoyant) and selects actions that maximize cumulative rewards. The policy is then trained to imitate these actions as best as it can using partial knowledge from the current history of actions and observations. As a result of our novel formulation, we are able to sidestep a number of challenging issues in POMDPs like explicitly computing posterior distribution over worlds and planning in belief space. 

We empirically show that training such policies using imitation learning of clairvoyant oracles leads to much faster convergence and robustness to poor local minima than training policies via model free policy improvement. We leverage the fact that such oracles can be efficiently computed for our domains once the source of uncertainty is removed. We show in our analysis that imitation of such clairvoyant oracles during training is equivalent to being competitive with a \emph{hallucinating oracle} at test time, i.e. an oracle that implicitly maintains a posterior over world maps and selects the best action at every time step. This offers some valuable insight behind the success of this approach as well as instances where such an approach would lead to a near-optimal policy.

\subsection{Contributions}

Our contributions are as follows:
\begin{enumerate}
\item We motivate the need to learn a planning policy that adapts to the environment in which the robot operates. We examine two domains - informative path planning and search based planning. We examine both problems through the lens of sequential decision making under uncertainty (Section~\ref{sec:background}).
\item We present a novel mapping of both these problems to a common POMDP framework (Section~\ref{sec:problem_formulation}).
\item We propose a novel framework for training such POMDP policies via imitation learning of a clairvoyant oracle. We analyze the implications of imitating such an oracle (Section~\ref{sec:imitation_learning}).
\item We present training procedures that deal with the non i.i.d distribution of states induced by the policy itself along with performance guarantees. We present concrete instances of the algorithm for both problem domains. We also show that for a certain class of informative path planning problems, policies trained in this fashion possess near-optimality properties (Section~\ref{sec:approach}).
\item We extensively evaluate the approach on both problem domains. In each domain, we evaluate on a spectrum of environments and show that policies outperform state-of-the-art approaches by exhibiting adaptive behaviours. We also demonstrate the impact of this framework on real world problems by presenting flight test results from a UAV (Section~\ref{sec:res_ipp} and Section~\ref{sec:res_search}).
\end{enumerate}

This paper is an unification of previous works on adaptive information gathering~\citep{choudhury2017adaptive,choudhury2016learning} and learning heuristic search~\citep{bhardwaj2017heuristic}. We present a unified framework for reasoning about both problems. We compare and contrast training procedures due to both domains. We present new results in learning heuristics on 4D planning problems and present flight test results from a UAV. We present new results on comparing the imitation learning with policy search and comparing sample efficiency of \aggrevate and \FT. We present more details on implementation and analysis of results. We provide comprehensive discussions on shortcomings of this approach and directions for future work in Section~\ref{sec:discussion}.

\section{Background}
\label{sec:background}

\subsection{Informative Path Planning}
\label{sec:background:ipp}

We now present a framework for informative path planning where the objective is to visit maximally informative sensing locations subjected to time and travel constraints. We use this framework to pose the problem of computing a information gathering policy for a given distribution over worlds and briefly discuss prior work on this topic.

\subsubsection{Framework} 
\label{sec:background:ipp:framework}

We now introduce a framework and set of notations to express the IPP problems of interest. The specific implementation details of the problem are described in detail in Section~\ref{sec:res_ipp:problem}.

We have a robot that is constrained to move on a graph $\graph = (\vertexSet, \edgeSet)$ where $\vertexSet$ is the set of nodes corresponding to all sensing locations. The start node is $\vertexStart$.
Let $\Path = \seq{\vertex}{p}$ be a sequence of connected nodes (a path) such that $\vertex_1 = \vertexStart$. Let $\PathSet$ be the set of all such paths.

Let $\world \in \worldSet$ be the world map in which the robot operates. The world map is usually represented in practice as a binary grid map where grid cells are either occupied or free. We assume that the world map is fixed during an episode. 

Let $\meas \in \measSet$ be a measurement received by the robot. Let $\measFnDef{}: \vertexSet \times \worldSet \to \measSet$ be a measurement function. When the robot is at node $\vertex$ in a world map $\world$, the measurement $\meas$ received by the robot is $\meas = \measFn{\vertex}{\world}$. The measurement function is defined by a sensor model, e.g. a range limited sensor. A measurement is obtained by projecting the sensor model on the sensing node $\vertex$ and ray-casting to determine the surfaces of the underlying world $\world$ that intersect with the sensor rays.

The objective of the robot is to move on the graph and maximize utility. Let $\utilityFnDef{}: 2^\vertexSet \times \worldSet \to \real_{\geq 0}$ be a utility function. For a path $\Path$ and a world map $\world$, $\utilityFn{\Path}{\world}$ assigns a utility to executing the path on the world. The utility of a measurement from a node is usually the amount of surface of the world covered by it. In such an instance, the function does not depend on the sequence of vertices in the path, i.e. is a set function. 
For simplicity, we assume that the measurement and utility function is deterministic. However, this assumption can easily be relaxed in our approach and is discussed in Section.~\ref{sec:discussion:noisy}. 

As the robot moves on the graph, the travel cost is captured by the cost function $\costFnDef{}: \PathSet \times \worldSet \to \real_{\geq 0}$. For a path $\Path$ and a world map $\world$, $\costFn{\Path}{\world}$ assigns a travel cost for executing the path on the world. In a practical setting, the total number of timesteps is bounded by $T$ and the travel cost is bounded by $\costBudget$. 
Fig.~\ref{fig:problem} shows an illustration of the framework.

\begin{figure}[t!]
    \centering
    \includegraphics[width=\columnwidth]{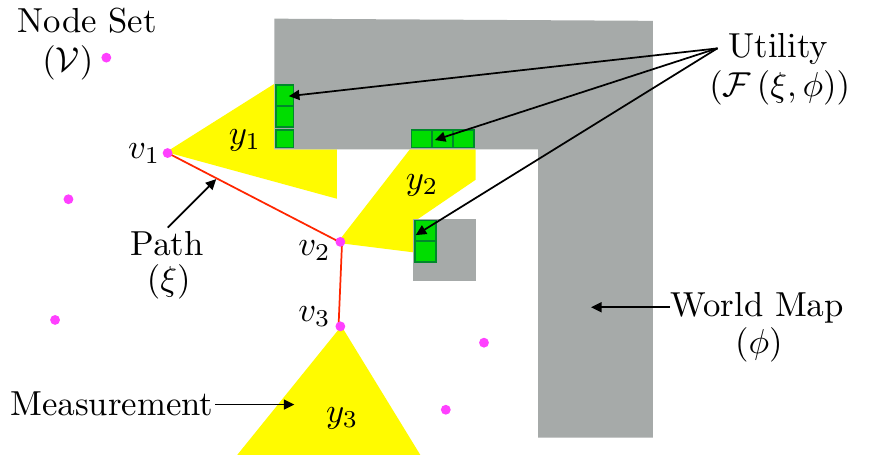}
    \caption{%
    The informative path planning problem. Given a world map $\world$, the robot plans a path $\Path$ which visits a node $\vertex_i \in \vertexSet$ and receives measurement $y_i$, such that utility (information gathered) $\utilityFn{\Path}{\world}$ is maximized. Here the utility is the cardinality of all the cells uncovered (green), which is a union of the cells uncovered at each location (and hence a set cover function)
    \label{fig:problem}
}
\end{figure}

We are now ready to define the informative path planning problems. There are two axes of variations
\begin{enumerate}
\item Constraint on the motion of the robot
\item Observability of the world map
\end{enumerate}

The first axis arises from whether the robot is subject to any travel constraints. For problems such as sensor placement, the agent is free to select any sequence of nodes and the travel cost between nodes is $0$. For such situations, the graph is also fully connected to permit any sequence. For problems involving physical movements, the agent is constrained by a budget on the travel cost. Additionally the graph may also not be fully connected. 

The second axis arises from different task specifications which result in the world map being observable or being hidden. We categorize the problems on this axis to aid future discussions on imitating clairvoyant oracles in Section~\ref{sec:approach}.

\subsubsection{Problems with Known World Maps}
\label{sec:background:ipp:problem_known}
For the first two variants, the world map $\world$ is known and can be evaluated while computing a path $\Path$.

\begin{problem}[\knownunc: Known World Map; Unconstrained Travel Cost] \label{prob:known:unc}
Given a world map $\world$, a fully connected graph $\graph$ and a time horizon $T$, find a path $\Path$ that maximizes utility
\begin{equation}
\begin{aligned}
\argmaxprob{\Path \in \PathSet} \quad & \utilityFn{\Path}{\world} \\
\suchthat                             & \card{\Path} \leq T+1
\end{aligned}
\end{equation}
\end{problem}

In the case where the utility function is a set function, Problem~\ref{prob:known:unc} is a set function maximization problem which in general can be NP-Hard~\citep{krause2012submodular}). Such problems occur commonly in the sensor placement problem \cite{krause2008efficient}. However, in many instances the utility function can be shown to posses the powerful property of \emph{monotone submodularity}. This property implies the following
\begin{enumerate}
\item \emph{Monotonic improvement}: The value of the utility can only increase on adding nodes, i.e. 
\begin{equation*}
	\utilityFn{ \vertexSet_1 \cup \vertexSet_2 }{\world} \geq \utilityFn{ \vertexSet_1 }{\world} 
\end{equation*}
for all $\vertexSet_1, \vertexSet_2 \subseteq \vertexSet$
\item \emph{Diminishing returns}: The gain in adding a set of nodes diminshes
\begin{equation*}
\begin{aligned}
	\utilityFn{ \vertexSet_1 \cup \vertexSet_3 }{\world} - \utilityFn{\vertexSet_3 }{\world} \leq & \utilityFn{ \vertexSet_1 \cup \vertexSet_2 }{\world} \\ 
	& - \utilityFn{\vertexSet_2 }{\world}
\end{aligned}
\end{equation*}
for all $\vertexSet_1, \vertexSet_2, \vertexSet_3 \subseteq \vertexSet$ where $\vertexSet_2 \subseteq \vertexSet_3$.
\end{enumerate}
 For such functions, it has been shown that a greedy algorithm achieves near-optimality~\citep{krause2008efficient,Krause:2007:NOS:1619797.1619913}.

\begin{problem}[\knowncon: Known World Map; Constrained Travel Cost] \label{prob:known:cons}
Given a world map $\world$, a time horizon $T$ and a travel cost budget $B$, find a path $\Path$ that maximizes utility
\begin{equation}
\begin{aligned}
\argmaxprob{\Path \in \PathSet} \quad & \utilityFn{\Path}{\world} \\
\suchthat                             & \costFn{\Path}{\world} \leq \costBudget \\
                                      &  \card{\Path} \leq T+1
\end{aligned}
\end{equation}
\end{problem}

Problem~\ref{prob:known:cons} introduces a routing constraint (due to $\costFnDef$) for which greedy approaches can perform arbitrarily poorly. Such problems occur when a physical system has to travel between nodes. 
\citet{chekuri2005recursive,singh2007efficient} propose a quasi-polynomial time recursive greedy approach to solving this problem. \citet{iyer2013submodular} solve a related problem (submodular knapsack constraints) using an iterative greedy approach which is generalized by \citet{zhang2016submodular}. \citet{yu2014correlated} propose a mixed integer approach to solve a related correlated orienteering problem. \citet{hollinger2013sampling} propose a sampling based approach. \citet{arora2017rapidly} use an efficient TSP with a random sampling approach.

\subsubsection{Problems with Hidden World Maps}
\label{sec:background:ipp:problem_hidden}

We now consider the setting where the world map $\world$ is hidden. Given a prior distribution $P(\world)$, it can be inferred only via the measurements $\meas_i$ received as the robot visits nodes $\vertex_i$. Hence, instead of solving for a fixed path, we compute a policy that maps history of measurements received and nodes visited to decide which node to visit. 

\begin{problem}[\hiddenunc: Hidden World Map; Unconstrained Travel Cost] \label{prob:hidden:unc}
Given a distribution of world maps, $P(\world)$, a fully connected graph $\graph$, a time horizon $T$, find a policy that at time $t$, maps the history of nodes visited $\{ \vertex_i \}_{i=1}^{t}$ and measurements received $\{ \meas_i \}_{i=1}^{t}$ to compute the next node $\vertex_{t+1}$ to visit at time $t+1$, such that the expected utility is maximized. 
\end{problem}

Such a problem occurs for sensor placement where sensors can optionally fail~\cite{golovin2011adaptive}.
Due to the hidden world map $\world$, it is not straight forward to apply the approaches of Problem \knownunc - we have to reason both about $P(\world \; | \; \{ \vertex_i \}_{i=1}^{t} , \{ \meas_i \}_{i=1}^{t})$ and how the function will evolve. 
 However, in some instances the utility function $\utilityFnDef$  has an additional property of \emph{adaptive submodularity}~\citep{golovin2011adaptive}. This is an extension of the submodularity property where the gain of the function is measured in expectation over the conditional distribution over world maps $P(\world \; | \; \{ \vertex_i \}_{i=1}^{t} , \{ \meas_i \}_{i=1}^{t})$. Under such situations, applying greedy strategies to Problem~\ref{prob:hidden:unc} has near-optimality guarantees~\citep{golovin2010near, Javdani_2013_7419,Javdani_2014_7555, AAAI159841,DBLP:journals/corr/ChenHK16a} ). However, these strategies require explicitly sampling from the posterior distribution over $\world$ which make it intractable to apply for our setting.

\begin{problem}[\hiddencon: Hidden World Map; Constrained Travel Cost] \label{prob:hidden:cons}
Given a distribution of world maps, $P(\world)$, a time horizon $T$, and a travel cost budget $B$, find a policy that at time $t$, maps the history of nodes visited $\{ \vertex_i \}_{i=1}^{t}$ and measurements received $\{ \meas_i \}_{i=1}^{t}$ to compute the next node $\vertex_{t+1}$ to visit at time $t+1$, such that the expected utility is maximized. \end{problem}

Such problems crop up in a wide number of areas such as sensor planning for 3D surface reconstruction~\cite{isler2016information} and indoor mapping with UAVs~\cite{Charrow-RSS-15,nelson2015information}.
Problem~\ref{prob:hidden:cons} does not enjoy the adaptive submodularity property due to the introduction of travel constraints. \citet{hollinger2011active,hollinger2012active} propose a heuristic based approach to select a subset of informative nodes and perform minimum cost tours. \citet{Singh:2009:NAI:1661445.1661741} replan every step using a non-adaptive information path planning algorithm. Inspired by adaptive TSP approaches by \citet{gupta2010approximation}, \citet{lim2016adaptive,NIPS2015_6005} propose recursive coverage algorithms to learn policy trees. However such methods cannot scale well to large state and observation spaces. \citet{heng2015efficient} make a modular approximation of the objective function. \citet{isler2016information} survey a broad number of myopic information gain based heuristics that work well in practice but have no formal guarantees.


\subsection{Search Based Planning}
\label{sec:background:search}

We now present a framework for search based planning where the objective is to find a feasible path from start to goal while minimizing search effort. We use this framework to pose the problem of learning the optimal heuristic for a given distribution over worlds and briefly discuss prior work on this topic.

\subsubsection{Framework} 
\label{sec:background:search:framework}

We consider the problem of search on a graph, $\Graph = \pair{\vertexSet}{\edgeSet}$, where vertices $\vertexSet$ represent robot configurations and edges $\edgeSet$ represent potentially valid movements of the robot between these configurations. 
Given a pair of start and goal vertices, $\pair{\start}{\goal} \in \vertexSet$, the objective is to compute a path $\Path \subseteq \edgeSet$ - a connected sequence of valid edges.
The implicit graph $\Graph$ can be compactly represented by $\pair{\start}{\goal} $ and a successor function $\succFn{\vertex}$ which returns a list of outgoing edges and child vertices for a vertex $\vertex \in \vertexSet$. Hence a graph $\Graph$ is constructed during search by repeatedly \emph{expanding} vertices using $\succFn{\vertex}$.
Let $\world \in \worldSet$ be a representation of the world that is used to ascertain the validity of an edge. 
An edge $\edge \in \edgeSet$ is checked for validity by invoking an evaluation function $\evalFn(\edge, \world)$ which is an expensive operation and may require complex geometric intersection operations \citep{dellin2016unifying}.

Alg.~\ref{alg:search} defines a general search based planning algorithm $\search$ which takes as input the tuple $\plannerTuple{\start}{\goal}{\succFnDef}{\evalFn}{\world}{\select}$ and returns a valid path $\Path$. To ensure systematic search, the algorithm maintains the following lists - an open list $\openList \subset \vertexSet$ of candidate vertices to be expanded and a closed list $\closedList \subset \vertexSet$ of vertices which have already been expanded. It also retains an additional invalid list $\closedObsList \subset \edgeSet$ of edges found to be in collision. These $3$ lists together represent the complete information available to the algorithm at any given point of time. At a given iteration, the algorithm uses this information to select a vertex $\vertex \in \openList$ to expand by invoking $\select(\openList)$. It then expands $\vertex$ by invoking $\succFn{\vertex}$ and checking validity of edges using $\evalFn(\edge, \world)$ to get a set of valid successor vertices $\vertexSucc$ as well as invalid edges $\invalidEdge$. The lists are then updated and the process repeated till the goal vertex $\goal$ is uncovered. Fig.~\ref{fig:search_problem} illustrates this framework.

\begin{figure}[!t]
    \centering
    \includegraphics[width=\columnwidth]{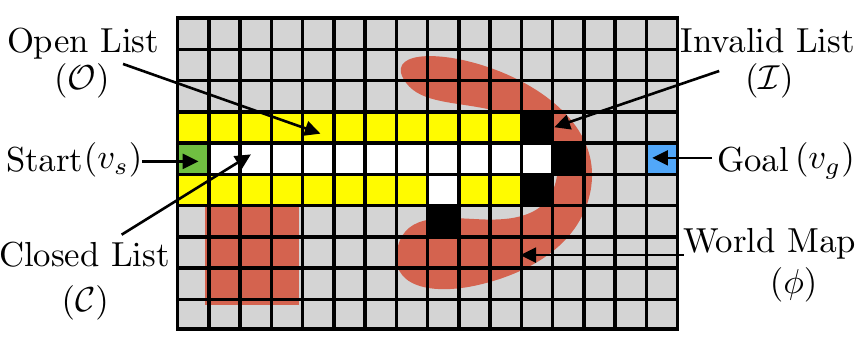}
    \caption{%
    The search based planning problem. Given a world map $\world$, the agent has to guide a search tree from start $\start$ to goal $\goal$ by expanding vertices. At any given iteration, the open list $\openList$ represents the set of candidate vertices that can be expanded. The closed list $\closedList$ represents the set of vertices already expanded. The invalid list represents the set of edges that were found to be in collision with the world. The status of every other vertex is unknown. The search continues till the goal belongs to the open list, i.e. a feasible path to goal has been found. 
    \label{fig:search_problem}
}
\end{figure}

\subsubsection{The Optimal Heuristic Problem} 
In this work, we focus on the \emph{feasible path problem} and ignore the optimality of the path. Although this is a restrictive setting, quickly finding the feasible path is a very important problem in robotics. Efficient feasible path planners such as RRT-Connect~\citep{kuffner2000rrt} has proven highly effective in high dimensional motion planning applications such as robotic arm planning~\citep{Lav06} and mobile robot planning~\citep{laumond1998guidelines}. Hence we ignore the traversal cost of an edge and deal with unweighted graphs. We defer discussions on how to relax this restriction to Section \ref{sec:discussion:anytime_search}.

We view a heuristic policy as a \emph{selection function} (Alg.~\ref{alg:search}, Line~\ref{alg:search:select}) that selects a vertex $\vertex$ from the open list $\openList$. The objective of the policy is to minimize the number of expansions until the search terminates. Note that the evolution of the open list $\openList$ depends on the underlying world map $\world$ which is hidden. Given a prior distribution over world maps $P(\world)$, it can be inferred only via the outcome of the expansion operation $\pair{\vertexSucc}{\invalidEdge}$. The history of outcomes is captured by the state of the search, i.e. the combination of the 3 lists $\{ \openList, \closedList, \closedObsList\}$. 

\begin{problem}[\optHeur] \label{prob:opt_heur}
Given a distribution of world maps, $P(\world)$, find a heuristic policy that at time $t$, maps the state of the search $\{ \openList_t, \closedList_t, \closedObsList_t\}$ to select a vertex $\vertex_t \in \openList_t$ to expand, such that the expected number of expansions till termination is minimized.
\end{problem}

\begin{algorithm}[!t]
    \caption{$\search \plannerTuple{\start}{\goal}{\succFnDef}{\evalFn}{\world}{\select}$ }\label{alg:search}
    \begin{algorithmic}[1]
    \State $\openList \gets \start,\; \closedList \gets \emptyset,\; \closedObsList \gets \emptyset$
    \While{$\goal \notin \openList$}
      \State $\vertex \gets \select(\openList)$ \label{alg:search:select} 
      \State $\pair{\vertexSucc}{\invalidEdge} \gets \expand(\vertex, \succFnDef, \evalFn, \world)$  
      \State $\openList \gets \openList \cup \vertexSucc, \; \closedList \gets \closedList \cup \vertex, \; \closedObsList \gets \closedObsList \cup \invalidEdge $
    \EndWhile
    \State \textbf{Return} $\mathtt{Path}\pair{\start}{\goal}$
    \end{algorithmic}
\end{algorithm}

The problem of heuristic design has a lot of historical significance. A common theme is ``Optimism Under Uncertainty''. A spectrum of techniques exist to manually design good heuristics by relaxing the problem to obtain guarantees with respect to optimality and search effort~\citep{pearl1984heuristics}. To get practical performance, these heuristics are inflated, as has been the case in the applications in mobile robot planning~\citep{likhachev2009planning}. However, being optimistic under uncertainty is not a foolproof approach and could be disastrous in terms of search efforts depending on the environment (See Fig 2.5, \citet{Lav06}).  

Learning heuristics falls under machine learning for general purpose planning~\citep{jimenez2012review}. \citet{yoon2006learning} propose using regression to learn residuals over FF-Heuristic~\citep{hoffmann2001ff}. \citet{xu2007discriminative,xu2010iterative, xu2009learning} improve upon this in a beam-search framework. \citet{arfaee2011learning} iteratively improve heuristics. \citet{us2013learning} learn combination of heuristic to estimate cost-to-go. Kendall rank coefficient is used to learn open list ranking~\citep{wilt2015building,garrettlearning}. \citet{thayer2011learning} learn heuristics online during search. \citet{paden2017verification} learn admissible heuristics as S.O.S problems. However, these methods do not address minimization of search effort and also ignore the non i.i.d nature of the problem.


\subsection{Partially Observable Markov Decision Process}

POMDPs~\cite{kaelbling1998planning} provide a rich framework for sequential decision making under uncertainty. However, solving a POMDP is often intractable - finite horizon POMDPs are PSPACE-complete~\citep{papadimitriou1987complexity} and infinite horizon POMDPs are undecidable~\citep{madani2003undecidability}. Despite this challenge, the field has forged on and produced a vast amount of work by investigating effective approximations and analyzing the structure of the optimal solution. We refer the reader to \cite{ross2008online} for a concise survey of modern approaches. 

There are two main approaches to POMDP planning: offline policy computation and online search. In offline planning, the agent computes before hand a policy by considering all possible scenarios and executes the policy based on the observation received. Athough offline methods have shown success in planning near-optimal policies in several domains~\citep{smith2012point,kurniawati2008sarsop,spaan2005perseus}, they are difficult to scale up due to the exponential number of future scenarios that must be considered. 

Online methods interleave planning and execution. The agent plans with the current belief, executes the action and updates the belief. Monte-carlo sampling methods explicitly maintain probability over states and plan via monte carlo roll-outs~\citep{mcallester1999approximate,asmuth2011approaching}. This limits scalability since belief update can take time. In contrast, POMCP~\citep{silver2010monte} maintains a set of particles to represent belief and employ UCT methods to plan with these particles. This allows the method to scale up for larger state spaces. 

However, the disadvantage of purely online methods is that they require a lot of search effort online and can lead to poor performance due to evaluation on a small number of particles. \cite{somani2013despot} present a state-of-the-art algorithm DESPOT that combines the best aspects of many algorithms. First it uses determinized sampling techniques to ensure that the branching factor of the tree is bounded~\cite{ng2000pegasus,kearns2000approximate}. Secondly, it uses offline precomputed policies to roll-out from a vertex, thus lower bounding its value. Finally, it tries to regularize the search by weighing the utility of a node to be robust against the fact that a finite number of samples is being used. 

The methods we have talked about explicitly models the belief. For large scale POMDPs, this might be an issue. Model free approaches and representation learning offer attractive alternatives. Model free policy improvement has been successfully used to solve POMDPs~\cite{liu2013online,li2009multi}. Predictive state representations ~\cite{littman2002predictive,boots2011closing} that minimize prediction loss of future observations offer more compact representations than maintaining belief. There also has been a lot of success in employing deep learning to learn powerful representations~\cite{hausknecht2015deep,karkus2017qmdp}.


\subsection{Reinforcement Learning and Imitation Learning}
\label{sec:background:il}
Reinforcement Learning (RL)~\citep{sutton1998reinforcement} especially deep RL has dramatically advanced the capabilities of sequential decision making in high dimensional spaces such as controls~\cite{duan2016benchmarking}, video games~\cite{silver2016mastering} and strategy games~\citep{silver2016mastering}. Several conventional supervised learning tasks are now being solved using deep RL to achieve higher performance~\citep{ranzato2015sequence, li2016deep}. In sequential decision making, the prediction of a learner is dependent on the history of previous outcomes. Deep RL algorithms are able to train such predictors by reasoning about the future accumulated cost in a principle manner. 

We refer the reader to \cite{kober2013reinforcement} for a concise survey on RL and to \cite{arulkumaran2017brief} for a survey on deep RL. Training such policies can be classified into two approaches - either \emph{value function-based approach}, where a value function for an action is learnt, or \emph{policy search}, where a policy is directly learnt. The value function methods can themselves be categorized in two categories - \emph{model-free} algorithms and \emph{model-based} algorithms. 

Model-free methods are computationally cheap but ignore the dynamics of the world thus requiring a lot of samples. 
Q-learning~\citep{watkins1992q} is a representative algorithm for estimating the long-term expected return for executing an action from a given state. When the number of state action pairs are too large in number to track each uniquely, a function approximator is required to estimate the value. Deep Q-learning~\citep{mnih-dqn-2015,wang2016dueling} addresses such a need by employing a neural-network as a function approximator and learning these network weights. However, the process of using the same network to generate both target values and update Q-values results in oscillations. Hence a number of remedies are required to maintain stability such as having a buffer of experience, a separate target network and an adaptive learning rate. These are indicative of the underlying sample inefficiency problem of a model-free approach.

Model-based methods such as R-Max~\citep{brafman2002r} learn a model of the world which is then used to plan for actions. While such methods are sample efficient, they require a lot of exploration to learn the model. Even in the case when the model of the environment is known, solving for the optimal policy might be computationally expensive for large spaces. Policy search approaches are commonly used where its easier to parameterize a policy than learn a value function~\citep{peters2006policy}, however such approaches are sensitive to initialization and can lead to poor local minima.

In contrast with RL methods, imitation learning (IL) algorithms~\citep{daume2009search,venkatraman2014data,chang2015learning,ross2014reinforcement} reduce the sequential prediction problem to supervised learning by leveraging the fact that, for many tasks, at training time we usually have a (near) optimal cost-to-go oracle. This oracle can either come from a human expert guiding the robot~\cite{abbeel2004apprenticeship} or from ground truth data as in natural language processing~\cite{chang2015learning}. 
The existence of such oracles can be exploited to alleviate learning by trial and error - imitation of an oracle can significantly speed up learning. A traditional approach to using such oracles is to learn a policy or value function from a pre-collected dataset of oracle demonstrations~\citep{ratliff2009learning,ziebart2008maximum,finn2016guided}. A problem with these methods is that they require training and test data to be sampled from the same distribtution which is difficult in practice. In contrast, interactive approaches to data collection and training has been shown to overcome stability issues and works well empirically~\citep{ross2011reduction,ross2014reinforcement,sun2017deeply}. Furthermore, these approaches lead to strong performance through a reduction to no-regret online learning.

Recent approaches have also employed imitation of clairvoyant oracles, that has access to more information than the learner during training, to improve reinforcement learning as they offer better sample efficiency and safety. \citet{zhang2016mpcgps,kahn2016plato} train policies that map current observation to action by extending guided policy search~\citep{levine2013guided} for imitation of model predictive control oracles. \citet{tamar2016hindsight} consider a cost-shaping approach for short horizon MPC by offline imitation of long horizon MPC which is closest to our work. \citet{gupta2017cmp} develop a holistic mapping and planner framework trained using feedback from optimal plans on a graph. 

\cite{sun2017deeply} also theoretically analyze the question of why imitation learning aids in reinforcement learning. They develop a comprehensive theoretical study of IL on discrete MDPs and construct scenarios to show that IL acheives better sample efficiency than any RL algorithm. Concretely, they conclude that one can expect atleast a polynomial gap ad a possible exponential gap in regret between IL and RL when one has access to unbiased estimates of the optimal policy during training. 


\section{Problem Formulation}
\label{sec:problem_formulation}

\subsection{POMDPs}
A discrete-time finite horizon POMDP is defined by the tuple $(\stateSet, \actionSet, \transFnDef, \rewardFnDef, \obsSet, \obsFnDef, T)$ where
\begin{itemize}
\item $\stateSet$ is a set of states
\item $\actionSet$ is a set of actions
\item $\transFnDef$ is a set of state transition probabilities
\item $\rewardFnDef: \stateSet \times \actionSet$ is the reward function
\item $\obsSet$ is the set of observations
\item $\obsFnDef$ is a set of conditional observation probabilities
\item $T$ is the time horizon
\end{itemize}

At each time period, the environment is in some state $\state \in \stateSet$ which cannot be directly observed. The initial state is sampled from a distribution $P(\state)$. The agent takes an action $\action \in \actionSet$ which causes the environment to transition to state $\state' \in \stateSet$ with probability $\transFn{\state}{\action}{\state'} = P(\state_{t+1} = \state' | \state_t = \state, \action_t = \action)$. The agent receives a reward $\rewardFn{\state}{\action}$. On reaching the new state $\state'$, it receives an observation $\obs \in \obsSet$ according to the probability $\obsFn{\state'}{\action}{\obs} = P(\obs_{t+1} = \obs | \state_{t+1} = \state', \action_t = \action)$. 

A \emph{history} $\belief \in \beliefSpace$ is a sequence of actions and observations $\belief_t = \{ <\obs_1>, <\action_1, \obs_2>, \dots, <\action_{t-1}, \obs_t> \}$. Note that the initial history $\belief_t = <\obs_1>$ is simply the observation at the initial timestep. The history $\psi_t$ captures all information required to express the belief over state. The belief $P(\state_{t+1} | \belief_{t+1})$ can be computed recursively applying Bayes' rule
\begin{equation*}
\eta \;\obsFn{\state_{t+1}}{\action_t}{\obs_{t+1}} \sum\limits_{\state_t \in \stateSet} \transFn{\state_t}{\action_t}{\state_{t+1}} P(\state_t | \belief_t)
\end{equation*}
where $\eta$ is a normalization constant.

The history can then also be used to compute an update $P(\belief_{t+1} | \belief_t, \action_t)$:
\begin{equation*}
 \sum\limits_{\state_t \in \stateSet} \sum\limits_{\state_{t+1} \in \stateSet}  P(\state_t | \belief_t) \transFn{\state_t}{\action_t}{\state_{t+1}} \obsFn{\state_{t+1}}{\action_t}{\obs_{t+1}}
\end{equation*}

The agent's action selection behaviour can be explained by a policy $\policy(\belief_t) \in \policySet$ that maps history $\belief_t$ to action $\action_t$.

Let the state and history distribution induced by a policy $\policy$ after $t$ timesteps be $P(\state, \belief | \policy, t)$. 
The value of a policy $\policy$ is the expected cumulative reward for executing $\policy$ for $T$ timesteps on the induced state and history distribution
\begin{equation}
	\valuePol{\policy} = \sum\limits_{t=1}^{T} \expect{\state_t, \belief_t \sim P(\state, \belief | \policy, t)}{\rewardFn{\state_t}{\policy(\belief_t)}}
\end{equation}

The optimal policy maximizes the expected cumulative reward, i.e $\policy^* \in \argmaxprob{\policy \in \policySet} \valuePol{\policy}$.

Given a starting history $\belief$, let  $P(\state', \belief' | \belief, \policy, i)$ be the induced state history distribution after $i$ timesteps. 
The value of executing a policy $\policy$ for $t$ time steps from a history $\belief$ is the expected cumulative reward:
\begin{equation}
\valueFnBel{\policy}{t}(\belief) = \sum\limits_{i=1}^{t} 
\expect{\state_i, \belief_i \sim P(\state', \belief' | \belief, \policy, i)}
{\rewardFn{\state_i}{\policy(\belief_i)}}
\end{equation}

The state-action value function $\QFnBel{\policy}{t}(\belief_t, \action_t)$ is defined as the expected sum of one-step-reward and value-to-go:
\begin{equation}
\begin{aligned}
\label{eq:pomdp:q}
\QFnBel{\policy}{t}(\belief, \action) =& \expect{\state \sim P(\state | \belief)}{\rewardFn{\state}{\action}} + \\
                        &\expect{\belief' \sim P(\belief' | \belief, \action)}{\valueFnBel{\policy}{t-1}(\belief')}
\end{aligned}
\end{equation}

\subsection{Mapping Informative Path Planning to POMDPs}
\label{sec:problem_formulation:ipp_mapping}

We now map IPP problems \hiddenunc and \hiddencon to a POMDP. 
The state is defined to contain all information that is required to define the reward, observation and transition functions. 
Let the state be the set of nodes visited and the underlying world, $\state_t = \{ \vertex_1, \dots, \vertex_t, \world\}$.
At the start of an episode, a world is sampled from a prior distribution $\world \sim P(\world)$ along with a graph $\graph \sim P(\graph)$. 
The initial state is assigned by setting $\state_1 = \{ \vertex_1, \world\}$.
Note that the state $\state_t$ is partially observable due to the hidden world map $\world$.

We define the action $\action_t = \vertex_{t+1}$ to be the next node to visit. 
We are now ready to map the utility and travel cost to the reward function definition.
Given the agent is in state $\state_t$ and has executed $\action_t$, we can extract the path $\Path = \seq{\vertex}{t+1}$ and the underlying world $\world$.
Hence we can compute the utility function $\utilityFn{\Path}{\world}$. 
We can also compute the travel cost function $\costFn{\Path}{\world}$. 

Before we define the reward function, we note that for Problem \hiddencon not all actions are feasible at all times due to connectivity of the graph and constraints due to travel cost.
Hence we can define a feasible set of actions $\actionSetFeas{\state} \subset \actionSet$ for a state as follows
\begin{equation}
   \actionSetFeas{\state} = \setst{\action}{\action \in \actionSet, (\vertex_t, \vertex_{t+1}) \in \edgeSet, \costFn{\Path}{\world} \leq \costBudget}
\end{equation} 
For Problem \hiddenunc, let $\actionSetFeas{\state} = \actionSet$.

Since the objective is to maximize the cumulative reward function, we define the reward to be proportional to the marginal utility of visiting a node. 
Given a node $\vertex \in \vertexSet$, a path $\Path$ and world $\world$, the marginal gain of the utility function $\utilityFnDef$ is $\marginalGain{\vertex \mid \Path}{\world} = \utilityFn{\Path \cup \{ \vertex \}}{\world} - \utilityFn{\Path}{\world}$. 
The one-step-reward function, $\rewardFn{\state}{\action}$, is defined as the marginal gain of the utility function.
Additionally, the reward is set to $-\infty$ whenever an infeasible action is selected. Hence:
\begin{equation}
	\rewardFn{\state}{\action} = .
	\begin{cases}
	\marginalGain{\action \mid \Path}{\world} &\text{if $\action \in \actionSetFeas{\state}$} \\
	-\infty & \text{otherwise}
	\end{cases}
\end{equation}

The state transition function, $\transFn{\state}{\action}{\state'}$, is defined as the deterministic function which sets $\vertex_{t+1} = \action_t$.
We define the observation to be the measurement $\obs_t = \meas_t$ and the observation model $\obsFnDef$ to be a deterministic function $\obs_{t} = \measFn{\vertex_t}{\world}$.

Note that the history $\belief_t$, the sequence of actions and observations, is captured in the sequence of nodes visited $\{\vertex_i\}_{i=1}^t$ and measurements received $\{\meas_i\}_{i=1}^t$. In our implementation, we encode this information in an occupancy map as described later in Section~\ref{sec:res_ipp:problem}. The information gathering policy $\policy(\belief_t)$ maps this history to an action $\action_t$, the sensing location to visit.

\subsection{Mapping Search Based Planning to POMDPs}

We now map the problem of computing a heuristic policy to a POMDP setting. 
Let the state be the open list and the underlying world, $\state_t = \{ \openList_t, \world\}$.
At the start of an episode, a world is sampled from a prior distribution $\world \sim P(\world)$ along with a start state $\start$. 
The initial state is assigned by setting $\state_1 = \{ \start, \world\}$.
Note that the state $\state_t$ is partially observable due to the hidden world map $\world$.

We define the action $\action_t$ as the vertex $\vertex \in \openList_t$ that is to be expanded by the search. 
The state transition function, $\transFn{\state}{\action}{\state'}$, is defined as the deterministic function which sets $\openList_{t+1}$ by querying $\expand(\vertex, \succFnDef, \evalFn, \world)$.
The one-step-reward function, $\rewardFn{\state}{\action}$, is defined as $-1$ for every $\pair{\state_t}{\action_t}$ until the goal is added to the open list.
Additionally, the reward is set to $-\infty$ whenever an infeasible action is selected. Hence:
\begin{equation}
	\rewardFn{\state}{\action} = .
	\begin{cases}
	-\infty & \text{if $\action \notin \openList$} \\
	0 &\text{if $\goal \in \openList$} \\
	-1 & \text{otherwise}
	\end{cases}
\end{equation}

We define the observation to be the successor nodes and invalid edges, i.e. $\obs_t = \{ \vertexSucc, \invalidEdge \}$ and the observation model $\obsFnDef$ to be a deterministic function $\pair{\vertexSucc}{\invalidEdge} = \expand(\vertex, \succFnDef, \evalFn, \world)$.

Note that the history, the sequence of actions and observations, is contained in the information present in the concatenation of all lists, i.e $\belief_t = \{ \openList, \closedList, \closedObsList \}$. The heuristic is a policy $\policy(\belief_t)$ that maps this history to an action $\action_t$, the vertex to expand.

Note that it is more natural to think of this problem as minimizing a one-step-cost than maximizing a reward. Hence when we subsequently refer to this problem instance, we refer to the cost $c(\state, \action) = -\rewardFn{\state}{\action}$ and the cost-to-go $\QFnBel{\policy}{t}(\belief, \action)$. This only results in a change from maximization to minimization.

\subsection{What makes these POMDPs intractable?}
\label{sec:problem:hardness}
A natural question to ask if these problems can be solved by state-of-the-art POMDP solvers such as POMCP~\cite{silver2010monte} or DESPOT~\cite{somani2013despot}. While such solvers are very effective at scaling up and solving large scale POMDPs, there are a few reasons why there are not immediately applicable to our problem. 

Firstly, these methods require a lot of online effort. In the case of search based planning, the effort required to plan in belief space defeats the purpose of a heuristic all together. In the case of informative path planning, the observation space is very large and belief updates would be time consuming. 

Secondly, since both methods employ a particle filter based approach to tracking plausible world maps, they both are susceptible to a realizability problem. Its unlikely that there will be a world map particle that will explain all observations. That being said, the world maps can explain local correlations in observations. For example, when planning indoors the world maps can explain correlations in observations made at intersection of corridors. Hence, we would like to generalize across these local submaps.


\section{Imitation of Clairvoyant Oracles}
\label{sec:imitation_learning}

A possible approach is to employ model free Q-learning~\citep{mnih-dqn-2015} by featurizing the history $\belief_t$ and collecting on-policy data. However, given the size of $\beliefSpace$, this may require a large number of samples. Another strategy is to parameterize the policy class and employ policy improvement~\citep{peters2006policy} techniques. However, such techniques when applied to POMDP settings may lead to poor local minima due to poor initialization.
We discussed in Section~\ref{sec:background:il} how imitation learning offers a more effective strategy than reinforcement learning in scenarios where there exist good policies for the original problem, however these policies cannot be executed online (e.g due to computational complexity) hence requiring imitation via an offline training phase. In this section, we extend this principle and show how imitation of \emph{clairvoyant oracles} enables efficient learning of POMDP policies. 

\subsection{Imitation Learning}

We now formally define imitation learning as applied to our setting. Given a policy $\policy$, we define the distribution of histories $P(\belief | \policy)$ induced by it (termed as \emph{roll-in}). Let $\lossFnPolicy{\belief}{\policy}$ be a loss function that captures how well policy $\policy$ imitates an oracle. Our goal is to find a policy $\policyLEARN$ which minimizes the expected loss as follows.

\begin{equation}
\label{eq:imitation_learning}
\policyLEARN = \argmin\limits_{\policy \in \policySet} \expect{ 
\belief \sim P(\belief | \policy)}
{\lossFnPolicy{\belief}{\policy}}
\end{equation} 

This is a non-i.i.d supervised learning problem. \citet{ross2011reduction} propose \FT to train a non-stationary policy (one policy $\policyLEARN_t$ for each timestep), where each policy $\policyLEARN_t$ can be trained on distributions induced by previous policies ($\policyLEARN_1, \dots, \policyLEARN_{t-1}$). While this solves the problem exactly, it is impractical given a different policy is needed for each timestep. For training a single policy, \citet{ross2011reduction} show how such problems can be reduced to no-regret online learning using dataset aggregation (\Dagger). The loss function they consider $\lossFnPolicyDef$ is a mis-classification loss with respect to what the expert demonstrated. \citet{ross2014reinforcement} extend the approach to the reinforcement learning setting where $\lossFnPolicyDef$ is the reward-to-go of an oracle reference policy by aggregating \emph{values} to imitate (\aggrevate).

\subsection{Solving POMDP via Imitation of a Clairvoyant Oracle}

To examine the applicability of imitation learning in the POMDP framework, we compare the loss function (\ref{eq:imitation_learning}) to the action value function (\ref{eq:pomdp:q}).
We see that a good candidate loss function $\lossFnPolicy{\belief}{\policy}$ should incentivize maximization of $\QFnBel{\policy}{T-t+1}(\belief, \policy(\belief))$. 
A suitable approximation of the optimal value function $\QFnBel{\policy^*}{T-t+1}$ that can be computed at train time would suffice. 
However, we cannot resort to oracles that explicitly reasoning about the belief over states $P(\state_t | \belief_t)$, let alone planning in this belief space due to tractability issues. 

In this work, we leverage the fact that for our problem domains, we have access to the true state $\state_t$ at train time. This allows us to define oracles that are \emph{clairvoyant} - that can observe the state at training time and plan actions using this information. 

\begin{definition}[Clairvoyant Oracle]
A clairvoyant oracle $\policyOR(\state)$ is a policy that maps state $\state$ to action $\action$ with an aim to maximize the cumulative reward of the underlying MDP $(\stateSet, \actionSet, \transFnDef, \rewardFnDef, T)$.
\end{definition}

The oracle policy defines an equivalent action value function \emph{defined on the state} as follows 

\begin{equation}
\label{eq:qvaloracle}
\QFn{\policyOR}{t}(\state, \action) = \rewardFn{\state}{\action} + \expect{\state' \sim P(\state' \mid \state, \action)}{
\valueFn{\policyOR}{t-1}(\state')} 
\end{equation}

Our approach is to imitate the oracle during training. This implies that we train a policy $\policyLEARN$ by solving the following optimization problem

\begin{equation}
\label{eq:imitateClairvoyantOracle}
\policyLEARN = \argmax\limits_{\policy \in \policySet} \expect{
\substack{t\sim U(1:T), \\
\state_t, \belief_t \sim P(\state, \belief | \policy, t)}}
{\QFn{\policyOR}{T-t+1}(\state_t, \policy(\belief_t))}
\end{equation}

While we will define training procedures to concretely realize (\ref{eq:imitateClairvoyantOracle}) later in Section~\ref{sec:approach}, we offer some intuition behind this approach. Since the oracle $\policyOR$ knows the state $\state$, it has appropriate information to assign a value to an action $a$. The policy $\policyLEARN$ attempts to imitate this action from the partial information content present in its history $\belief$. Due to this realization error, the policy $\policyLEARN$ visits a different state, updates the history, and queries the oracle for the best action. Hence while the learnt policy can make mistakes in the beginning of an episode, with time it gets better at imitating the oracle. 
\subsection{Analysis using a Hallucinating Oracle}
\label{sec:pomdp_imitate:hallucinating}
The learnt policy imitates a clairvoyant oracle that has access to more information (state $\state$ compared to history $\belief$). This results in a large realizability error
which is due to two terms - firstly the information mismatch between $\state$ and $\belief$, and secondly the expressiveness of feature space. This realizability error can be hard to bound making it difficult to apply the performance guarantee analysis of \cite{ross2014reinforcement}. It is also not desirable to obtain a performance bound with respect to the \emph{clairvoyant oracle} $\valuePol{\policyOR}$.

To alleviate the information mismatch, we take an alternate approach to analyzing the learner by introducing a purely hypothetical construct - a \emph{hallucinating oracle}.

\begin{definition}[Hallucinating Oracle]
\label{def:halluc}
A hallucinating oracle $\policyORBel$ computes the instantaneous posterior distribution over state $P(\state | \belief)$ and returns the expected clairvoyant oracle action value.
\begin{equation}
\label{eq:hallucinating_oracle}
\QFnBel{\policyORBel}{T-t+1}(\belief, \action) =  \expect{\state \sim P(\state | \belief)}{\QFn{\policyOR}{T-t+1}(\state, \action)}
\end{equation}
\end{definition}

We show that by imitating a clairvoyant oracle, the learner effectively imitates the corresponding hallucinating oracle 

\begin{lemma}
\label{lemma:hallucinating}
The \textbf{offline} imitation of \textbf{clairvoyant} oracle (\ref{eq:imitateClairvoyantOracle}) is equivalent to \textbf{online} imitation of a \textbf{hallucinating} oracle as shown

\begin{equation*}
\policyLEARN = \argmax\limits_{\policy \in \policySet} \expect{
\substack{
t\sim U(1:T), \\
\belief_t \sim P(\belief | \policy, t)}}
{\QFnBel{\policyORBel}{T-t+1}(\belief_t, \policy(\belief_t))}
\end{equation*}

\end{lemma}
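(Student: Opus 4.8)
The plan is to recognize this as a straightforward application of the tower property of conditional expectation, where the single nontrivial ingredient is that the state posterior $P(\state \mid \belief)$ is a \emph{policy-independent} sufficient statistic of the history. Starting from the offline objective in (\ref{eq:imitateClairvoyantOracle}), I would factor the joint roll-in distribution as $P(\state, \belief \mid \policy, t) = P(\belief \mid \policy, t)\, P(\state \mid \belief, \policy, t)$ and rewrite the expectation as an iterated expectation: first over $t \sim U(1:T)$, then over $\belief_t \sim P(\belief \mid \policy, t)$, and finally an inner expectation over $\state_t \sim P(\state \mid \belief_t, \policy, t)$. Note that $\QFn{\policyOR}{T-t+1}$ is defined on states through the underlying MDP (clairvoyant oracle definition), so taking its expectation against a posterior over states is well-posed.

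The key step is to argue that $P(\state_t \mid \belief_t, \policy, t) = P(\state_t \mid \belief_t)$, i.e. that conditioned on a fixed history the distribution over the true state does not depend on the roll-in policy. This follows from the recursive Bayes' filter displayed earlier in the POMDP preliminaries: the belief $P(\state_{t} \mid \belief_{t})$ is assembled purely from the transition model $\transFnDef$, the observation model $\obsFnDef$, the prior $P(\state)$, and the action/observation sequence recorded in $\belief_t$. The policy $\policy$ influences only which histories are realized — hence the marginal $P(\belief \mid \policy, t)$ — but never the posterior over states once the history, including every action taken, has been fixed. This is precisely the statement that the history is a sufficient statistic for the belief.

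Given this, the inner expectation simplifies. Because $\policy(\belief_t)$ is a deterministic function of $\belief_t$, it is constant inside the expectation over $\state_t$ and may be treated as a fixed action $\action = \policy(\belief_t)$; then
\[
\expect{\state_t \sim P(\state \mid \belief_t)}{\QFn{\policyOR}{T-t+1}(\state_t, \policy(\belief_t))} = \QFnBel{\policyORBel}{T-t+1}(\belief_t, \policy(\belief_t))
\]
by the definition of the hallucinating oracle in (\ref{eq:hallucinating_oracle}). Substituting this back into the outer expectations over $t$ and $\belief_t$ yields exactly the online hallucinating-oracle objective, so the two objective functionals coincide pointwise over $\policy \in \policySet$ and therefore share the same $\argmax$, which establishes the claimed equivalence.

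I expect the main obstacle to be making the policy-independence of $P(\state \mid \belief)$ fully rigorous rather than merely asserting it — in particular, being careful that conditioning on $\belief_t$ already fixes the realized actions, so no hidden dependence on $\policy$ survives in the conditional. A secondary detail worth stating explicitly is that pulling $\policy(\belief_t)$ out of the inner integral uses determinism of the policy map $\belief \mapsto \action$; were stochastic policies allowed, the same conclusion would hold by additionally conditioning on the sampled action, since the state posterior still depends on the history alone.
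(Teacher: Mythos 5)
Your proposal is correct and follows essentially the same route as the paper's proof: both arguments reduce to the factorization $P(\state_t, \belief_t \mid \policy, t) = P(\belief_t \mid \policy, t)\, P(\state_t \mid \belief_t)$ and an iterated expectation that identifies the inner integral with the hallucinating oracle's value $\QFnBel{\policyORBel}{T-t+1}(\belief_t, \policy(\belief_t))$, the paper simply running the chain of equalities in the opposite direction (from the hallucinating-oracle objective back to the clairvoyant one). Your explicit justification that $P(\state_t \mid \belief_t, \policy, t) = P(\state_t \mid \belief_t)$ via the Bayes filter is a point the paper's proof uses silently, so you have if anything been more careful on the one step that actually carries the argument.
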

\begin{proof} Refer to Appendix~\ref{appendix:lemma_hallucinating}.
\end{proof}

Note that a hallucinating oracle uses the same information content as the learnt policy. Hence the realization error is purely due to the expressiveness of the feature space. The empirical risk of imitating the hallucinating oracle will be significantly lower than the risk of imitating the clairvoyant oracle. 

Lemma~\ref{lemma:hallucinating} now allows us to express the performance of the learner with respect to a hallucinating oracle. This brings us to the key question - how good is a hallucinating oracle? Upon examining (\ref{eq:hallucinating_oracle}) we see that this oracle is equivalent to the well known QMDP policy first proposed by \cite{Littman95learningpolicies}. The QMDP policy ignores observations and finds the $Q_{\mathrm{MDP}}(\state, \action)$ values of the underlying MDP. It then estimates the action value by taking an expectation on the current belief over states $P(\state | \belief)$. This estimate amounts to assuming that any uncertainty in the agent's current belief state will be gone after the next action. Thus, the action where long-term reward from all states (weighed by the probability) is largest will be the one chosen.

\cite{Littman95learningpolicies} points out that policies based on this approach are remarkably effective. This has been verified by other works such as \citet{Koval-RSS-14} and \citet{javdani2015shared}. This naturally leads to the question of why we cannot directly apply QMDP to our problem. The QMDP approach requires explicitly sampling from the posterior over states online - a step that we cannot tractably compute as discussed in Section~\ref{sec:problem:hardness}. However, by imitating clairvoyant oracles, we implicitly obtain such a behaviour. 

Imitation of clairvoyant oracles has been shown to be effective in other domains such as receding horizon control via imitating MPC methods that have full information~\citep{kahn2016plato}. \cite{sun2017deeply} show how the partially observable acrobot can be solved by imitation of oracles having full state. \cite{karkus2017qmdp} introduce imitation of QMDP in a deep learning architecture to train POMDP policies end to end. 

The connection with a hallucinating oracle also provides valuable insight into potential failure situations. \cite{Littman95learningpolicies} point out that policies based on this approach will not take actions to gain information. We discuss such situations in Section~\ref{sec:discussion:success_failures}.


\section{Approach}
\label{sec:approach}

\subsection{Algorithms}
\begin{figure*}[t]
    \centering
    \includegraphics[width=\textwidth]{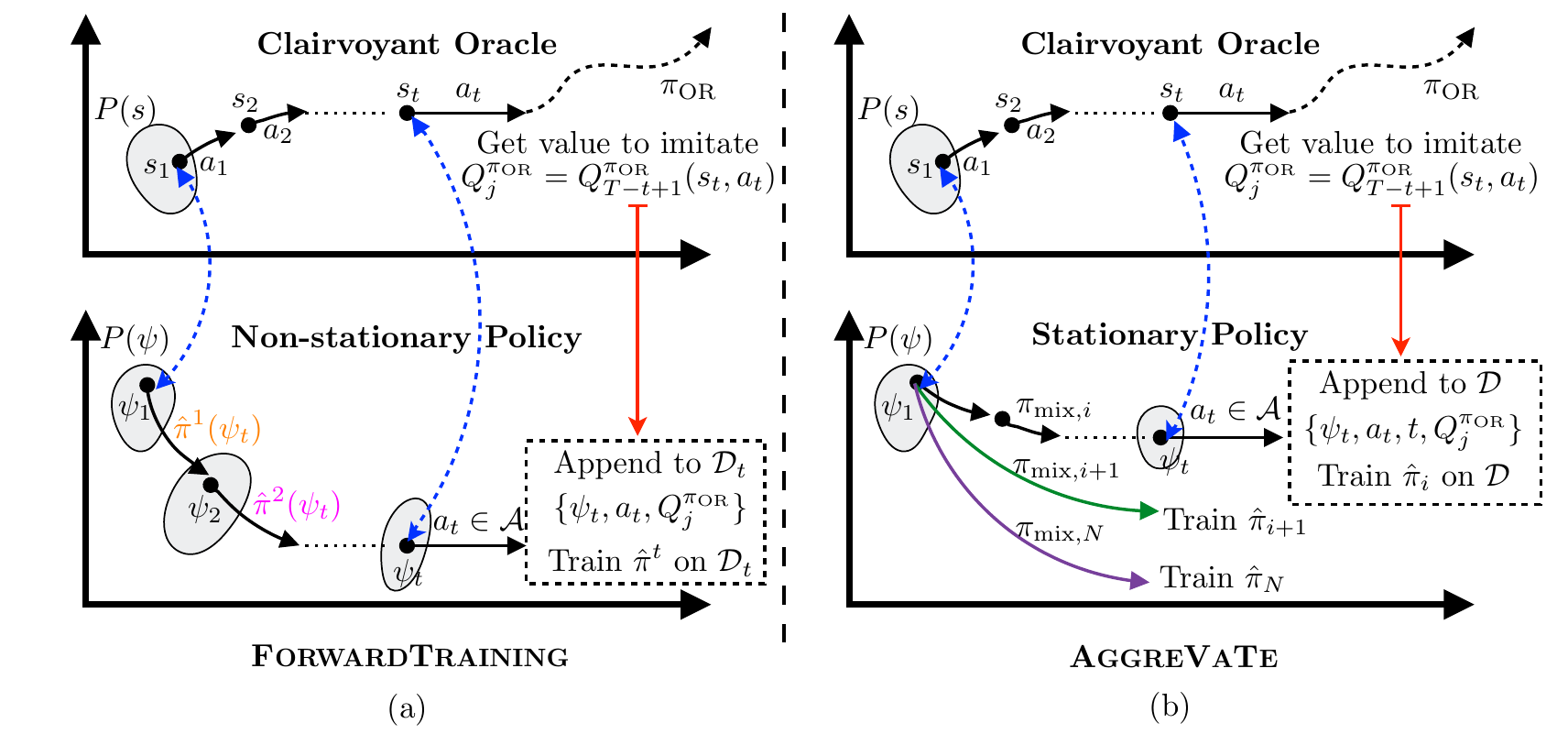}
    \caption{%
    Overview of the two approaches for training policies. 
    (a) \FT is used to train a non-stationary policy, i.e a sequence of policies $\policyLEARN^1, \ldots, \policyLEARN^{T}$ at each time-step. To train a policy at time-step $t$, a state $\state$ is sampled from initial distribution $P(\state)$. The policies  $\policyLEARN^1, \ldots, \policyLEARN^{t-1}$ are then used to roll-in to get $(\state_t, \belief_t)$. The oracle is queried to get $\QFn{\policyOR}{T-t+1}(\state_t, \action_t)$ which is then used to update the dataset and train policy $\policyLEARN^t$.
    (b) \aggrevate is used to train a stationary policy. The training process is iterative where dataset collection is interleaved with learning. At iteration $i$, a mixture policy $\policyMix{i}$ is used to roll-in to get $(\state_t, \belief_t)$. The oracle is queried to get $\QFn{\policyOR}{T-t+1}(\state_t, \action_t)$. The data is then aggregated to the whole dataset which is used to update the entire policy $\policyLEARN^{i}$. 
    \label{fig:ft_aggrevate}}
\end{figure*}%

We introduced imitation learning and its applicability to POMDPs in Section~\ref{sec:imitation_learning}. We now present a set of algorithms to concretely realize the process. The overall idea is as follows - we are training a policy $\policyLEARN(\belief)$ that maps features extracted from the history $\belief$ to an action $\action$. The training objective is to imitate a clairvoyant oracle that has access to the corresponding full state $\state$. In order to define concrete algorithms, we need to reason about two classes of policies - non-stationary and stationary.

\subsubsection{Non-stationary policy}

For the non-stationary case, we have a policy for each timestep $\policyLEARN^1, \ldots, \policyLEARN^{T}$. The motivation for adopting such a policy class is that the problems arising from the non i.i.d distribution immediately disappears. Such a policy class can be trained using the \FT algorithm \cite{ross2011reduction} which sequentially trains each policy on the distribution of features induced from the previous set of policies. Hence the training problem for each policy at timestep $t$ is reduced to supervised learning. 

\begin{algorithm}
\caption{\FT (Non-stationary policy) \label{alg:FT}}
\begin{algorithmic}[1]
\For{$t=1$ \textbf{to} $T$} \label{alg:FT:init}
\State Initialize $\dataset_t \gets \emptyset$.
\For{$j=1$ \textbf{to} $\numDatapoints$}
\State Sample initial state $\state_1$ from dataset $P(\state)$
\State Execute policy $\policyLEARN^1, \ldots, \policyLEARN^{t-1}$ to reach $\pair{\state_t}{\belief_t}$.\label{alg:FT:rollin} 
\State Execute any action $\action_t \in \actionSet$.
\State Collect value to go $\QVal_j^{\policyOR} =  \QFn{\policyOR}{T-t+1}(\state_t, \action_t)$ \label{alg:FT:oracle}
\State $\dataset_t \gets \dataset_t \cup \{\belief_t, \action_t, \QVal_j^{\policyOR}\}$ 
\EndFor
\State Train cost-sensitive classifier $\policyLEARN^t$ on $\dataset_t$
\EndFor
\State \textbf{Return} Set of policies for each time step $\policyLEARN^1, \ldots, \policyLEARN^{T}$ .
\end{algorithmic}
\end{algorithm}

Alg.~\ref{alg:FT} describes the \FT procedure to train the non-stationary policy. The policies are trained in a sequential manner. At each time-step $t$, the previously trained policies $\policyLEARN^1, \ldots, \policyLEARN^{t-1}$ are used to create a dataset of $\belief_t$ by rolling-in (Lines~\ref{alg:FT:init}--\ref{alg:FT:rollin}). For each such datapoint $\psi_t$, there is a corresponding state $\state_t$. A random action $\action_t$ is sampled and the oracle is queried for the cost-to-go $\QFn{\policyOR}{T-t+1}(\state_t, \action_t)$ (Line~\ref{alg:FT:oracle}). This is then added to the dataset $\dataset_t$ which is used to train the policy $\policyLEARN^t$. This is illustrated in Fig.~\ref{fig:ft_aggrevate}.

We can state the following property about the training process
\begin{theorem}
\label{theorem:ft}
\FT has the following guarantee
\begin{equation*}
\begin{aligned}
  \valuePol{\policyLEARN} \geq \valuePol{\policyORBel} -2 T \sqrt{\actionSet \; \errclass} + T\errhor
\end{aligned}
\end{equation*}
where $\errclass$ is the regression error of the learner and $\errhor$ is the local oracle suboptimality.
\end{theorem}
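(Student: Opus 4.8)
The plan is to prove the guarantee by a performance-difference (telescoping) argument over the non-stationary policy, comparing the learner $\policyLEARN = (\policyLEARN^1, \ldots, \policyLEARN^T)$ against the hallucinating oracle $\policyORBel$. The crucial enabling fact is Lemma~\ref{lemma:hallucinating}: since \FT collects the clairvoyant value-to-go $\QFn{\policyOR}{T-t+1}(\state_t, \action_t)$ and regresses onto it, in expectation over the posterior $P(\state\mid\belief)$ the learner is imitating the hallucinating value $\QFnBel{\policyORBel}{T-t+1}(\belief_t, \action_t)$. I may therefore conduct the entire analysis in history space using $\QFnBel{\policyORBel}{}$ and never explicitly reason about the belief.

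First I would introduce the hybrid policies $\policy^{(k)}$ that execute $\policyLEARN^1, \ldots, \policyLEARN^k$ for the first $k$ steps and then follow $\policyORBel$ for the remaining $T-k$ steps, so that $\policy^{(0)} = \policyORBel$ and $\policy^{(T)} = \policyLEARN$. Telescoping $\valuePol{\policyLEARN} - \valuePol{\policyORBel} = \sum_{k=1}^T (\valuePol{\policy^{(k)}} - \valuePol{\policy^{(k-1)}})$ and observing that consecutive hybrids differ only in the action taken at step $k$ (both agree before and after, and induce the same history distribution at step $k$) gives
\begin{equation*}
\valuePol{\policyLEARN} - \valuePol{\policyORBel} = \sum_{k=1}^{T} \expect{\belief_k \sim P(\belief\mid\policyLEARN, k)}{\QFnBel{\policyORBel}{T-k+1}(\belief_k, \policyLEARN^k(\belief_k)) - \QFnBel{\policyORBel}{T-k+1}(\belief_k, \policyORBel(\belief_k))}.
\end{equation*}
The structural point particular to \FT is that each $\policyLEARN^k$ is trained \emph{precisely} on the roll-in distribution $P(\belief\mid\policyLEARN, k)$ appearing here, so no distribution-mismatch (no-regret) term is incurred, unlike the \aggrevate analysis.

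Next I would split each summand by inserting $\max_{\action}\QFnBel{\policyORBel}{T-k+1}(\belief_k,\action)$:
\begin{equation*}
\begin{aligned}
\QFnBel{\policyORBel}{T-k+1}(\belief_k, \policyLEARN^k(\belief_k)) - \QFnBel{\policyORBel}{T-k+1}(\belief_k, \policyORBel(\belief_k)) &= -\Big(\max_{\action}\QFnBel{\policyORBel}{T-k+1}(\belief_k,\action) - \QFnBel{\policyORBel}{T-k+1}(\belief_k, \policyLEARN^k(\belief_k))\Big) \\
&\quad + \Big(\max_{\action}\QFnBel{\policyORBel}{T-k+1}(\belief_k,\action) - \QFnBel{\policyORBel}{T-k+1}(\belief_k, \policyORBel(\belief_k))\Big).
\end{aligned}
\end{equation*}
The second bracket, averaged over steps and states, is exactly the local oracle suboptimality $\errhor$, and it enters with a $+T\errhor$ sign for the right reason: because the learner imitates the \emph{greedy} action under the oracle's own value, it need not reproduce the oracle's possibly suboptimal choice and can gain $\errhor$ per step. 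The first bracket is the action-selection regret of $\policyLEARN^k$, which I would bound by the learner's regression error. Writing $\hat{Q}$ for the learner's estimate of $\QFnBel{\policyORBel}{}$ and $\action^\star$ for the maximizing action, the fact that $\policyLEARN^k(\belief_k) = \argmax_\action \hat{Q}(\belief_k,\action)$ makes the $\hat{Q}$-terms cancel favorably, leaving the regret bounded by $|\hat{Q}(\belief_k,\action^\star) - \QFnBel{\policyORBel}{}(\belief_k,\action^\star)| + |\hat{Q}(\belief_k,\policyLEARN^k(\belief_k)) - \QFnBel{\policyORBel}{}(\belief_k,\policyLEARN^k(\belief_k))|$; applying Jensen/Cauchy--Schwarz to convert each absolute deviation into the root-mean-square regression error over the $\card{\actionSet}$ actions yields $2\sqrt{\card{\actionSet}\,\errclass}$ per step. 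Summing the two contributions over $k=1,\ldots,T$ gives $\valuePol{\policyLEARN} - \valuePol{\policyORBel} \geq -2T\sqrt{\card{\actionSet}\,\errclass} + T\errhor$.

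I expect the main obstacle to be the regression-to-regret reduction rather than the telescoping. Two points require care: (i) the maximizing action $\action^\star$ and the learner's action are themselves data-dependent, so each absolute deviation must be uniformized into $\sqrt{\sum_\action(\hat{Q} - \QFnBel{\policyORBel}{})^2}$ before pulling the expectation through the square root, which is precisely where the $\sqrt{\card{\actionSet}}$ factor arises; and (ii) the reduction is stated for $\QFnBel{\policyORBel}{}$, so I must invoke Lemma~\ref{lemma:hallucinating} to equate the empirical regression target (the clairvoyant $\QFn{\policyOR}{}$ on states) with $\QFnBel{\policyORBel}{}$ on histories, ensuring that $\errclass$ measured at train time genuinely controls the history-space regret appearing in the telescoping.
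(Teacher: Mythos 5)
Your proposal is correct and follows essentially the same route as the paper: the hybrid-policy telescoping you describe is exactly the performance difference lemma the paper states and proves, the split via $\max_{\action}\QFnBel{\policyORBel}{T-t+1}(\belief_t,\action)$ into a cost-sensitive classification regret and a local-suboptimality gain is identical, and the $2\sqrt{\card{\actionSet}\,\errclass}$ bound you sketch is the Langford--Beygelzimer regression-to-classification reduction that the paper simply cites. The only nuance is that the paper defines $\errhor$ with a $\min_{\belief_t}$ so the second bracket is lower-bounded by (rather than equal to) $T\errhor$, but this does not affect the direction or validity of your bound.
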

\begin{proof} Refer to Appendix~\ref{appendix:theorem_ft}.
\end{proof}

However, there are several drawbacks to using a non-stationary policy. Firstly, it is impractical to have a different policy for each time-step as it scales with $T$. While this might be a reasonable approach when $T$ is small (e.g. sequence classification problems~\citep{cohen2005stacked}), in our applications $T$ can be fairly large. 
Secondly, and more importantly, each policy operates on data for only that time-step, thus preventing generalizations across timesteps. Each policy sees  only $\frac{\mathcal{D}}{T}$ fraction of the training data. This leads to a high empirical risk. 

\subsubsection{Stationary policy}

A single stationary policy $\policyLEARN$ enjoys the benefit of learning on data across all timesteps. However, the non i.i.d data distribution implies the procedure of data collection and training cannot be decoupled - the learner must be involved in the data collection process. \citet{ross2014reinforcement} show that such policies can be trained by reducing the propblem to a no-regret online learning setting. They present an algorithm, \aggrevate that trains the policy in an interactive fashion where data is collected by a mixture policy of the learner and the oracle, the data is then \emph{aggregated} and the learner is trained on this aggregated data. This process is repeated.

\begin{algorithm}
\caption{\aggrevate (Stationary policy) \label{alg:Agg}}
\begin{algorithmic}[1]
\State Initialize $\dataset \gets \emptyset$, $\policyLEARN_1$ to any policy in $\policySet$ \label{alg:qvalAgg:init}
\For{$i=1$ \textbf{to} $\numLearnIter$}
\State Initialize sub-dataset $\dataset_i \gets \emptyset$\; \label{alg:qvalAgg:initSub}
\State Let roll-in policy be $\policyMix{i} = \mixfrac{i} \policyOR + (1-\mixfrac{i}) \policyLEARN_{i-1}$ \label{alg:qvalAgg:mixPol}
\State Collect $m$ data points as follows:
\For{$j=1$ \textbf{to} $\numDatapoints$}
\State Sample initial state $\state_1$ from dataset $P(\state)$ \label{alg:qvalAgg:sampleWorld}
\State Sample uniformly $t \in \{1,2,\dots,T\}$ \label{alg:qvalAgg:sampleTime}
\State Execute $\policyMix{i}$ up to time $t-1$ to reach $\pair{\state_t}{\belief_t}$ \label{alg:qvalAgg:rollin}
\State Execute any action $\action_t \in \actionSet$ \label{alg:qvalAgg:takeAction}
\State Collect value-to-go $\QVal_j^{\policyOR} =  {\QFn{\policyOR}{T-t+1}(\state_t, \action_t)}$ \label{alg:qvalAgg:collectVal}
\State $\dataset_i \gets \dataset_i \cup \{\belief_t, \action_t, t, \QVal_j^{\policyOR}\}$ \label{alg:qvalAgg:aggrSubData}
\EndFor
\State Aggregate datasets: $\dataset \gets \dataset \bigcup \dataset_i$ \label{alg:qvalAgg:aggrData}
\State Train cost-sensitive classifier $\policyLEARN_{i+1}$ on $\dataset$ \label{alg:qvalAgg:updateLearner}
\EndFor
\State \textbf{Return} best $\policyLEARN_i$ on validation
\end{algorithmic}
\end{algorithm}

Alg.~\ref{alg:Agg} describes the \aggrevate procedure to train the stationary policy. To overcome the non i.i.d distribution issue, the algorithm interleaves data-collection with learning and iteratively trains a set of policies $\seq{\policyLEARN}{\numLearnIter}$. Note that these iterations are not to be confused with time steps - they are simply learning iterations. A policy $\policyLEARN_i$ is valid for all timesteps. At iteration $i$, data is collected by rolling-in with a mixture of the learner and the oracle policy (Lines~\ref{alg:qvalAgg:init}--\ref{alg:qvalAgg:rollin}). The mixing fraction is chosen to be $\mixfrac{i} = (1 - \alpha)^{i-1}$. Mixing implies flipping a coin with bias $\mixfrac{i}$ and executing the oracle if heads comes up. A random action $\action_t$ is sampled and the oracle is queried for the cost-to-go $\QFn{\policyOR}{T-t+1}(\state_t, \action_t)$ (Line~\ref{alg:qvalAgg:collectVal}). 

The key step is to ensure that \emph{data is aggregated}. The motivation for doing so arises from the fact that we want the learner to do well on the distribution it induces. \cite{ross2014reinforcement} show that this can be posed as the mixture of learners $\seq{\policyLEARN}{\numLearnIter}$ doing well on the induced loss sequences $l_i(\policy)$ at every iteration. If we were to treat each iteration as a game in an online adversarial learning setting, this would be equivalent to having bounded regret with respect to the best policy in hindsight on the loss sequence $\seq{l}{\numLearnIter}$. The strategy of dataset aggregation is an instance of follow the leader and hence has bounded regret. Hence, data is appended to the original dataset and used to train an updated learner $\policyLEARN_{i+1}$ (Lines~\ref{alg:qvalAgg:aggrData}--\ref{alg:qvalAgg:updateLearner}).

\aggrevate can be shown to have the following guarantee
\begin{theorem}
\label{theorem:aggrevate}
$N$ iterations of \aggrevate, collecting $m$ regression examples per iteration guarantees that with probability at least $1-\delta$
\begin{equation*}
\begin{aligned}
  \valuePol{\policyLEARN} \geq & \valuePol{\policyORBel} \\
  & - 2 T \sqrt{\abs{\actionSet} \left( \errclass + \errreg + \bigo{\sqrt{\nicefrac{\log \nicefrac{1}{\delta}}{N m}}} \right) } \\
  & - \bigo{\frac{R \; T \log T}{N}} + T\errhor\\
\end{aligned}
\end{equation*}
where $\errclass$ is the empirical regression regret of the best regressor in the regression class on the aggregated dataset, $\errreg$ is the empirical online learning average regret on the sequence of training examples, $R$ is the range of oracle action value and $\errhor$ is the local oracle suboptimality.
\end{theorem}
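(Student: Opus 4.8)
The plan is to reduce the statement to the no-regret analysis of \aggrevate due to \citet{ross2014reinforcement}, with one conceptual modification supplied by Lemma~\ref{lemma:hallucinating}. Although the algorithm regresses onto the \textbf{clairvoyant} values $\QFn{\policyOR}{T-t+1}(\state_t, \action_t)$ computed on the full state, the lemma lets me replace the learning target, in expectation, by the \textbf{hallucinating} values $\QFnBel{\policyORBel}{T-t+1}(\belief_t, \action_t)$, which depend only on the history $\belief_t$ — precisely the information available to $\policyLEARN$. The key observation making this rigorous is that the roll-in first draws $\world \sim P(\world)$ and only then reveals observations, so the joint $(\state_t, \belief_t)$ it produces satisfies $\state_t \sim P(\state \mid \belief_t)$; hence each single-draw target $\QVal_j^{\policyOR}$ collected on Line~\ref{alg:qvalAgg:collectVal} is an \emph{unbiased} estimate of $\QFnBel{\policyORBel}{T-t+1}(\belief_t, \action_t)$. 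This is what allows the whole argument to be run against $\policyORBel$, so the realizability error is measured against a predictor on the learner's own information and the final bound is stated relative to $\valuePol{\policyORBel}$ rather than the (much stronger) $\valuePol{\policyOR}$.

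With this in place, I would first invoke the performance difference lemma to write
\begin{equation*}
\begin{aligned}
\valuePol{\policyORBel} - \valuePol{\policyLEARN} &= \sum_{t=1}^{T} \expect{\belief_t \sim P(\belief \mid \policyLEARN, t)}{\delta_t(\belief_t)}, \\
\delta_t(\belief) &= \QFnBel{\policyORBel}{T-t+1}(\belief, \policyORBel(\belief)) \\
&\quad - \QFnBel{\policyORBel}{T-t+1}(\belief, \policyLEARN(\belief)),
\end{aligned}
\end{equation*}
so the value gap becomes a sum over the $T$ timesteps of the per-step suboptimality of $\policyLEARN$'s action, scored under the oracle's value function, on the learner's own induced history distribution. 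Because the hallucinating value $\QFnBel{\policyORBel}{}$ from (\ref{eq:hallucinating_oracle}) is the QMDP value and not the true value-to-go of executing $\policyORBel$, this telescoping identity is only approximate: the per-step Bellman residual is what the term $T\errhor$ is introduced to absorb, each timestep contributing at most the local oracle suboptimality $\errhor$.

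Next I would carry out the no-regret reduction. Define the per-iteration loss $\ell_i(\policy)$ as the expected cost-sensitive regression loss of $\policy$ on the roll-in distribution of $\policyMix{i}$ with the oracle values as targets. Dataset aggregation (Line~\ref{alg:qvalAgg:aggrData}) is an instance of Follow-The-Leader and hence no-regret, so the collected learners satisfy $\frac{1}{\numLearnIter}\sum_i \ell_i(\policyLEARN_i) \leq \errclass + \errreg$, splitting into the best-in-class regression error and the empirical online-learning regret. A uniform concentration bound over the $\numLearnIter\numDatapoints$ sampled targets, each of range $R$, contributes the additive $\bigo{\sqrt{\nicefrac{\log \nicefrac{1}{\delta}}{\numLearnIter\numDatapoints}}}$ inside the square root with probability at least $1-\delta$. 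Finally I would convert regression error into action-selection regret: for a fixed $\belief$, if $\hat{\QVal}$ is the learned estimate and $a^\star = \policyORBel(\belief)$, then $\delta_t(\belief)$ is bounded, via a Cauchy--Schwarz step over actions, by $\sqrt{2\abs{\actionSet}}$ times the root-mean-square regression error; applying Jensen's inequality pulls the square root outside the expectation and summing over $t$ yields the $2T\sqrt{\abs{\actionSet}(\errclass + \errreg + \bigo{\cdots})}$ term.

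The remaining discrepancy is that each $\ell_i$ is evaluated on the mixture roll-in of $\policyMix{i}$, whereas the performance difference decomposition needs the loss on the learner's own roll-in $P(\belief \mid \policyLEARN, t)$. I would bound the total-variation gap between these distributions by the mixing fraction $\mixfrac{i} = (1-\alpha)^{i-1}$ and sum the resulting bias, weighted by $R$ and $T$, over the $\numLearnIter$ iterations; the decaying geometric schedule keeps this small and produces the $\bigo{\nicefrac{R\,T\log T}{\numLearnIter}}$ correction. I expect this mixture-roll-in bookkeeping, together with verifying that the single-draw targets concentrate to the \emph{hallucinating} (not the clairvoyant) values, to be the main obstacle: everything else is a faithful transcription of the \aggrevate analysis, but the equivalence of Lemma~\ref{lemma:hallucinating} must be threaded consistently through both the bias and the concentration arguments for the guarantee to hold against $\valuePol{\policyORBel}$.
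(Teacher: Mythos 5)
Your proposal is correct and follows essentially the same route as the paper's proof: the performance difference lemma applied against the hallucinating oracle $\policyORBel$, the $T\errhor$ term absorbing the gap between $\max_{\action}\QFnBel{\policyORBel}{}(\belief,\action)$ and $\valueFnBel{\policyORBel}{}(\belief)$, the total-variation bound between the mixture and learner roll-in distributions summed over the geometric schedule to give $\bigo{\nicefrac{R\,T\log T}{\numLearnIter}}$, and the cost-sensitive-to-regression reduction with concentration yielding the $2T\sqrt{\abs{\actionSet}(\errclass+\errreg+\bigo{\cdot})}$ term. The one point you make explicit that the paper leaves implicit in this proof — that the single-draw clairvoyant targets are unbiased estimates of the hallucinating values because the roll-in induces $\state_t \sim P(\state \mid \belief_t)$ — is exactly the content of Lemma~\ref{lemma:hallucinating}, so your account is a faithful (and slightly more self-contained) rendering of the same argument.
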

\begin{proof} Refer to Appendix~\ref{appendix:theorem_aggrevate}.
\end{proof}

\subsection{Application to Informative Path Planning}
\label{sec:approach:ipp}

\begin{figure*}[!htp]
    \centering
    \includegraphics[width=\textwidth]{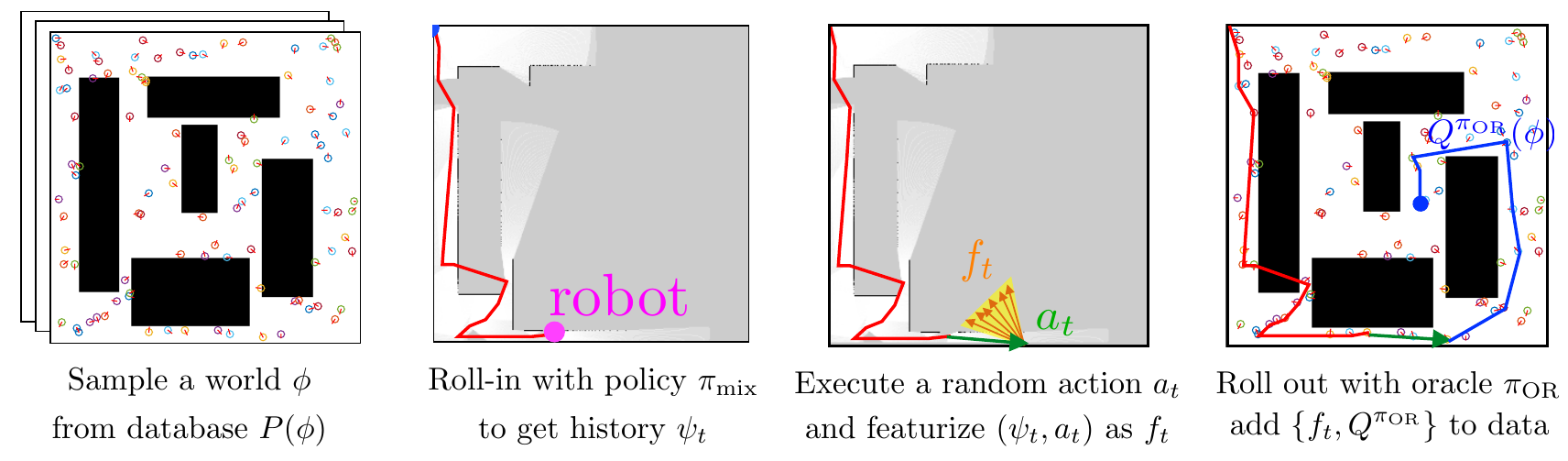}
    \caption{%
    An overview of \algQvalAgg in IPP where a learner $\policyLEARN$ is trained to imitate a clairvoyant oracle $\policyOR$. There are 4 key steps. Step 1: A world map $\world$ is sampled from database representing $P(\world)$. Step 2: A mixture policy $\policyMix{}$ of the learner and oracle is used to roll-in on $\world$ to a timestep $t$ to get history $\belief_t$. Step 3: A random action $\action_t$ is chosen and $(\belief_t, \action_t)$ is featurized as $f_t$. Step 4: A clairvoyant oracle $\policyOR$ is given full access to world map $\world$ to compute the cumulative reward to go $Q^{\policyOR}$. The pair $(f_t, Q^{\policyOR})$ is added to data to update the learner. This process is repeated to train a sequence of learners.}
    \label{fig:algorithm_qvalagg}
\end{figure*}%

We now consider the applicability of Alg.~\ref{alg:FT} and Alg.~\ref{alg:Agg} for learning a policy to plan informative paths. We refer to the mapping of the IPP problem to a POMDP defined in Section~\ref{sec:problem_formulation:ipp_mapping}. We first need to define a clairvoyant oracle in this context. 
Recall that the state $\state_t = \{\vertex_1, \dots, \vertex_t, \world\}$ is the set of nodes visited and the underlying world. A clairvoyant oracle takes a state action pair $(\state_t, \action_t)$ as input and computes a value. Depending on whether we are solving Problem \hiddenunc or \hiddencon, we explore two different kinds of oracles:
\begin{enumerate}
    \item \textit{Clairvoyant One-step-reward}
    \item \textit{Clairvoyant Reward-to-go}
\end{enumerate}

\subsubsection{Solving \hiddenunc by Imitating Clairvoyant One-step-reward}
\label{sec:approach:ipp:one_step_reward}
We first define a Clairvoyant One-step-reward oracle in the IPP framework.

\begin{definition}[Clairvoyant One-step-reward]
\label{def:clair_onestep_rew}
A Clairvoyant One-step-reward returns an action value $\QFn{\policyOR}{t}(\state, \action) = \rewardFn{\state}{\action}$ that considers only the one-step-reward. In the context of \hiddenunc, it uses the world map $\world$, the curent path $\{\vertex_1, \dots, \vertex_t\}$, the next node to visit $\vertex_{t+1} = \action_t$ to compute the value $\QOR(\world, \{\vertex_1, \dots, \vertex_t\}, \vertex_{t+1})$ as the marginal gain in  utility, i.e.
\begin{equation*}
  \marginalGain{\vertex_{t+1} \mid \{\vertex_1, \dots, \vertex_t\}  }{\world}
\end{equation*}
\end{definition}

To motive the use of Clairvoyant One-step-reward, we refer to the discussion on the structure of the Problem \hiddenunc in Section~\ref{sec:background:ipp:problem_hidden}. We assume that the utility function is \emph{adaptive monotone submodular} - it has the property of montonicity and diminishing returns under the belief over world maps. 
 This property implies the following
\begin{enumerate}
\item \emph{Adaptive Monotonicity}: The expected value of the utility can only increase on adding a node, i.e. 
\begin{equation*}
  \expect{\world \sim P(\world | \belief )}{ \marginalGain{\vertex \mid \vertexSet_\belief }{\world} }  \geq 0
\end{equation*}
for all $\vertex \in \vertexSet$, where $\belief = \{ \vertex_i \}_{i=1}^p, \{ \meas_i \}_{i=1}^p$, and $\vertexSet_\belief = \{\vertex_i \}_{i=1}^p$.
\item \emph{Adaptive Submodularity}: The expected gain in adding a node diminshes as more nodes are visited, i.e.
\begin{equation*}
\begin{aligned}
  \expect{\world \sim P(\world | \belief )}{ \marginalGain{\vertex \mid \vertexSet_\belief }{\world} } \geq \\
  \expect{\world \sim P(\world | \belief' )}{ \marginalGain{\vertex \mid \vertexSet_{\belief'} }{\world} } 
\end{aligned}
\end{equation*}
for all $\vertex \in \vertexSet$, where $\belief \subseteq \belief'$ (history $\belief$ is contained in history $\belief'$)
\end{enumerate}

For such functions, \cite{golovin2011adaptive} show that greedily selecting vertices to visit is near-optimal. We use this property to show that the Clairvoyant One-step-reward induces a one-step-oracle which is equivalent to the greedy policy and hence near optimal. This implies the following Lemma

\begin{theorem}
\label{theorem:hidden_unc}
$N$ iterations of \aggrevate with Clairvoyant one-step-reward collecting $m$ regression examples per iteration guarantees that with probability at least $1-\delta$
\begin{equation*}
\begin{aligned}
  \valuePol{\policyLEARN} \geq & \left(1 - \frac{1}{e}\right)\valuePol{\policy^*} \\
  & - 2 T \sqrt{\abs{\actionSet} \left( \errclass + \errreg + \bigo{\sqrt{\nicefrac{\log \nicefrac{1}{\delta}}{N m}}} \right) } \\
  & - \bigo{\frac{R \; T \log T}{N}}\\
\end{aligned}
\end{equation*}
where $\errclass$ is the empirical regression regret of the best regressor in the regression class on the aggregated dataset, $\errreg$ is the empirical online learning average regret on the sequence of training examples, $R$ is the maximum range of one-step-reward.
\end{theorem}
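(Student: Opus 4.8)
The plan is to derive this statement directly from the general \aggrevate guarantee of Theorem~\ref{theorem:aggrevate}, replacing the additive dependence on the hallucinating oracle value $\valuePol{\policyORBel}$ with a multiplicative $(1-\frac{1}{e})$ dependence on the optimal value $\valuePol{\policy^*}$ by exploiting adaptive submodularity. First I would instantiate Theorem~\ref{theorem:aggrevate} for the Clairvoyant One-step-reward oracle of Definition~\ref{def:clair_onestep_rew}. Since this oracle's action value is exactly the one-step reward $\rewardFn{\state}{\action} = \marginalGain{\action \mid \vertexSet_\belief}{\world}$ and carries no reward-to-go term, the relevant range $R$ is the maximum one-step reward (the marginal utility gain), which is exactly what the statement quotes in place of the range of the full action value. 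The error terms $\errclass$, $\errreg$, and the $\bigo{\sqrt{\nicefrac{\log\nicefrac{1}{\delta}}{Nm}}}$ concentration term carry over verbatim, so the only work is to relate $\valuePol{\policyORBel}$ to $\valuePol{\policy^*}$ and to account for the disappearance of the $T\errhor$ term.

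The crux is the lower bound on $\valuePol{\policyORBel}$. By Lemma~\ref{lemma:hallucinating}, imitating the clairvoyant one-step-reward oracle is equivalent to imitating the corresponding hallucinating oracle, whose action value is $\expect{\world \sim P(\world \mid \belief)}{\marginalGain{\action \mid \vertexSet_\belief}{\world}}$, i.e. the expected marginal gain over the posterior on worlds. The policy that at each step greedily maximizes this quantity is precisely the adaptive greedy policy, and $\valuePol{\policyORBel}$ is its accumulated value over the horizon $T$. Here I would invoke the assumed adaptive monotone submodularity of the utility --- the adaptive monotonicity and adaptive submodularity properties stated in Section~\ref{sec:approach:ipp:one_step_reward} --- and apply the near-optimality result of \cite{golovin2011adaptive}, which certifies that the adaptive greedy policy gathers at least a $(1-\frac{1}{e})$ fraction of the value of the optimal policy over the same horizon. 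This gives $\valuePol{\policyORBel} \geq (1-\frac{1}{e})\valuePol{\policy^*}$, and substituting this into the instantiated Theorem~\ref{theorem:aggrevate} bound yields the claim. The $T\errhor$ term does not appear because the one-step-reward oracle is imitated with exact labels (its action value \emph{is} the reward, so there is no local lookahead mismatch); the entire gap between the greedy oracle and the optimal policy is instead captured multiplicatively by the $(1-\frac{1}{e})$ factor.

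I expect the main obstacle to be the clean identification of the hallucinating one-step oracle with the adaptive greedy policy, together with a careful matching of the horizon-$T$ guarantee of \cite{golovin2011adaptive} to the cumulative-reward value $\valuePol{\cdot}$ used here. In particular I would need to verify that greedily maximizing the expected marginal gain under the posterior $P(\world \mid \belief)$ is exactly the greedy selection rule to which the adaptive submodularity theorem applies, and that adaptive monotonicity rules out the infeasible-action ($-\infty$ reward) branch so that the value accumulates over all $T$ steps in a well-behaved way. Once that correspondence is pinned down, the remainder is a direct substitution into Theorem~\ref{theorem:aggrevate}.
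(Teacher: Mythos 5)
Your proposal is correct and follows essentially the same route as the paper: instantiate the general \aggrevate bound of Theorem~\ref{theorem:aggrevate}, identify the hallucinating one-step-reward oracle with the adaptive greedy policy of \cite{golovin2011adaptive} to replace $\valuePol{\policyORBel}$ by $\left(1-\frac{1}{e}\right)\valuePol{\policy^*}$, and observe that $\errhor=0$ (the oracle is locally optimal for its own action value) and that $R$ reduces to the range of the one-step reward.
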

\begin{proof} Refer to Appendix~\ref{appendix:theorem_hiddenunc}.
\end{proof}

We will shown in Section~\ref{sec:res_ipp} that such policies are remarkably effective. An added benefit of imitating the Clairvoyant One-step-reward is that the empirical classification loss $\errclass$ is lower since only the expected one-step-reward of an action needs to be learnt. 

\subsubsection{Solving \hiddencon by Imitating Clairvoyant Reward-to-go}
Unforutunately, Problem \hiddencon does not posses the adaptive-submodular property of \hiddenunc due to the introduction of the travel cost. Hence imitating the one-step-reward is no longer appropriate. We define the Clairvoyant Reward-to-go oracle for this problem class
\begin{definition}[Clairvoyant Reward-to-go]
A Clairvoyant Reward-to-go returns an action value $\QFn{\policyOR}{t}(\state, \action)$ that corresponds to the cumulative reward obtained by executing $\action$ and then following the oracle policy $\policyOR$. In the context of \hiddencon, it uses the world map $\world$, the curent path $\{\vertex_1, \dots, \vertex_t\}$, the next node to visit $\vertex_{t+1} = \action_t$ to solve the problem \knowncon and compute a future sequence of nodes $\{\vertex_{t+2}, \dots, \vertex_T\}$. This provides the value $\QOR(\world, \{\vertex_1, \dots, \vertex_t\}, \vertex_{t+1})$ as the marginal gain
\begin{equation*}
  \marginalGain{\{\vertex_{t+1}, \dots, \vertex_T\} \mid \{\vertex_1, \dots, \vertex_t\}  }{\world}
\end{equation*}
The correspoding oracle policy $\policyOR$ is obtained by following the computed path. 
\end{definition}

Note that solving \knowncon is NP-Hard and even the best approximation algorithms require some computation time. Hence the calls to the oracle must be minimized. 

\subsubsection{Training and Testing Procedure}

We now present concrete algorithms to realize the training procedure. Given the two axes of variation - problem and policy type - we have four possible algorithms 
\begin{enumerate}
    \item \algRewFT: Imitate one-step-reward using non-stationary policy by \FT (Alg.~\ref{alg:FT})
    \item \algQvalFT: Imitate reward-to-go using non-stationary policy by \FT (Alg.~\ref{alg:FT})
    \item \algRewAgg: Imitate one-step-reward using stationary policy by \aggrevate (Alg.~\ref{alg:Agg})
    \item \algQvalAgg: Imitate reward-to-go using stationary policy by \aggrevate (Alg.~\ref{alg:Agg})
\end{enumerate}
Table.~\ref{tab:alg:mapping} shows the algorithm mapping.

 \begin{table}[!htbp]
    \centering
    \caption{Mapping from Problem and Policy type to Algorithm}
    \begin{tabulary}{0.8\textwidth}{L|CC}\toprule
       \diagbox{\bf Policy}{\bf Problem}       &   \hiddenunc         & \hiddencon        \\ \midrule
       Non-stationary policy             &    \algRewFT      & \algQvalFT           \\
       Stationary policy                     &    \algRewAgg     & \algQvalAgg          \\ \bottomrule
    \end{tabulary}
    \label{tab:alg:mapping}
\end{table}

\begin{algorithm}
\caption{\algQvalAgg \label{alg:qvalAgg}}
\begin{algorithmic}[1]
\State Initialize $\dataset \gets \emptyset$, $\policyLEARN_1$ to any policy in $\policySet$ 
\For{$i=1$ \textbf{to} $\numLearnIter$}
\State Initialize sub-dataset $\dataset_i \gets \emptyset$\; 
\State Let roll-in policy be $\policyMix{i} = \mixfrac{i} \policyOR + (1-\mixfrac{i}) \policyLEARN_i$ 
\State Collect $m$ data points as follows:
\For{$j=1$ \textbf{to} $\numDatapoints$}
\State Sample world $\world$ from dataset $P(\state)$ 
\State Sample start node $\vertexStart$ for $P(\vertexStart)$
\State Sample uniformly $t \in \{1,2,\dots,T\}$ 
\State Execute $\policyMix{i}$ up to time $t-1$
\item[]\hspace*{10mm} to get path $\{\vertex_1, \dots, \vertex_t\}$ and history ${\belief_t}$
\State Sample a random action $\action_t \in \actionSet$  
\item[] \hspace*{10mm}as the next vertex to visit $\vertex_{t+1} = \action_t$
\State Invoke Clairvoyant Reward-to-go oracle 
\item[]\hspace*{10mm} to get $\QVal_j^{\policyOR} = \QOR\{\world, \{\vertex_1, \dots, \vertex_t\}, \vertex_{t+1}\}$.
\State $\dataset_i \gets \dataset_i \cup \{\belief_t, \action_t, t, \QVal_j^{\policyOR}\}$ 
\EndFor
\State Aggregate datasets: $\dataset \gets \dataset \bigcup \dataset_i$ 
\State Train cost-sensitive classifier $\policyLEARN_{i+1}$ on $\dataset$
\EndFor
\State \textbf{Return} best $\policyLEARN_i$ on validation
\end{algorithmic}
\end{algorithm}

For completeness, we concretely define the training procedure for \algQvalAgg in Alg.~\ref{alg:qvalAgg}. The procedure for the remaining three algorithms can be inferred from this. The algorithm iteratively trains a sequence of policies $\seq{\policyLEARN}{N}$. At every iteration $i$, the algorithm conducts $m$ episodes. In every episode a different world map $\world$ and start vertex $(\vertexStart)$ is sampled from a database. The roll-in is conducted with a mixture policy $\policyMix{i}$ which blends the learner's current policy, $\policyLEARN_{i-1}$ and the oracle's policy, $\policyOR$ using blending parameter $\mixfrac{i}$. The blending is done in an episodic fashion, with probability $\mixfrac{i}$ the Clairvoyant Reward-to-go oracle is invoked to compute a path which is followed. With probability $1 - \mixfrac{i}$, the learner is invoked for the whole episode. In a given episode, the roll-in is conducted to a timestep $t$ which is uniformly sampled. At the end of the roll-in, we have a path $\{\vertex_1, \dots, \vertex_t\}$ and a history ${\belief_t}$. A random action $\action_t \in \actionSet$ is sampled which defines the next vertex to visit $\vertex_{t+1} = \action_t$. The Clairvoyant Reward-to-go oracle is invoked with the world $\world$ and the path already travelled $\{\vertex_1, \dots, \vertex_t\}, \vertex_{t+1}\}$. It then invokes a solver to \hiddencon to complete the path and return the reward to go $\QVal_j^{\policyOR}$ . This history action pair $(\belief_t, \action_t)$ is projected to a feature space along with label $\QVal_j^{\policyOR}$. The data is aggregated to the dataset which is eventually used to train policy $\policyLEARN_{i+1}$.
Fig.~\ref{fig:algorithm_qvalagg} illustrates this approach. 

\subsection{Application to Search Based Planning}
\begin{figure*}[!htp]
    \centering
    \includegraphics[width=\textwidth]{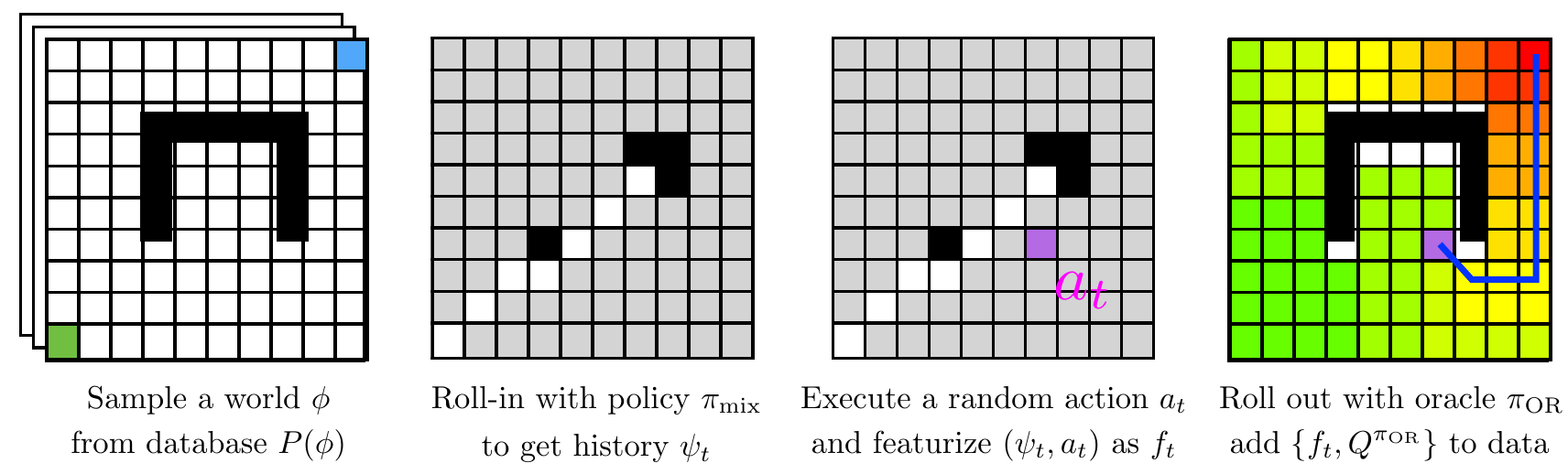}
    \caption{%
    An overview of \algSail in search based planning where a learner $\policyLEARN$ is trained to imitate a clairvoyant oracle $\policyOR$. There are 4 key steps. Step 1: A world map $\world$ is sampled from database representing $P(\world)$ along with start goal pair $\pair{\start}{\goal}$. Step 2: A mixture policy $\policyMix{}$ of the learner and oracle is used to roll-in on $\world$ to a timestep $t$ to get history $\belief_t$ which is the combination of open list, closed list and invalid edges. Step 3: A random vertex $\action_t$ from the open list is chosen and $(\belief_t, \action_t)$ is featurized as $f_t$. Step 4: A clairvoyant oracle $\policyOR$ is given full access to world map $\world$ to compute the cumulative cost to go $Q^{\policyOR}$. The pair $(f_t, Q^{\policyOR})$ is added to data to update the learner. This process is repeated to train a sequence of learners.
    }
    \label{fig:algorithm_sail}
\end{figure*}%

We now consider the applicability of Alg.~\ref{alg:Agg} for heuristic learning in search based planning. Unlike the IPP problem domain, there is no incentive to use a non-stationary policy or imitate Clairvoyant One-step-rewards. Hence we only consider training a stationary policy imitating Clairvoyant Reward-to-go. 

We first need to define a clairvoyant oracle for this problem. Given access to the world map $\world$, the oracle has to solve for the optimal number of expansions to reach the goal. This allows us to define a \emph{clairvoyant oracle planner} that employs a \emph{backward} Dijkstra's algorithm, which given a world $\world$ and a goal vertex $\goal$ plans for the optimal path from every $\vertex \in \vertexSet$ using dynamic programming. 

\begin{definition}[Clairvoyant Oracle Planner]
  \label{def:clairvoyant_oracle}
  Given full access to the state $\state$, which contains the open list $\openList$ and world $\world$, and a goal $\goal$, the oracle planner encodes the cost-to-go from any vertex $\vertex \in \vertexSet$ as the function $\QFn{\policyOR}{t}(\state, \action)$ which implicitly defines an oracle policy, $\policyOR(\state) \; = \; \argminprob{\vertex\in\openList} \; \QFn{\policyOR}{t}(\state, \action)$.
\end{definition}
The clairvoyant oracle planner provides a look-up table $\costToGoOracle\pair{\world}{\vertex}$ for the optimal cost-to-go from any vertex irrespective of the current state of the search. 

A key distinction between this oracle and the one defined for an IPP problem in Section~\ref{sec:approach:ipp} is that we are able to efficiently get the cost-to-go value for all states by dynamic programming - we do not need to repeatedly invoke the oracle. 
We exploit this fact by extracting multiple labels from an episode even though the oracle is invoked only once.
Additionally, this allows us a better roll-in procedure where the oracle and learner are interleaved. 
We adapt the \aggrevate framework to present an algorithm, \emph{Search as Imitation Learning} (\algSail).  

\begin{algorithm}
\caption{\algSail $(P(\world), P(\start, \goal), k)$ \label{alg:sail_alg}}
\begin{algorithmic}[1]
\State Initialize $\dataset \gets \emptyset$, $\policyLEARN_1$ to any policy in $\policySet$  \label{lst:line:}
\For{$i=1$ \textbf{to} $\numLearnIter$}
\State Initialize sub dataset $\dataset_i \gets \emptyset$ 
\State Collect $mk$ data points as follows:
\For{$j=1$ \textbf{to} $m$}
\State Sample world map $\world \sim P(\world)$ 
\State Sample $\pair{\start}{\goal} \sim P(\start, \goal)$
\State Invoke clairvoyant oracle planner 
\item[]\hspace*{10mm} to compute $\QFn{\policyOR}{}(\world, \vertex) \; \forall \; \vertex \in \vertexSet$
\State Sample uniformly $k$ timesteps $\seqset{t}{k}$
\item[]\hspace*{10mm} where each $t_{i} \in \ \set{1, \ldots ,\planTime}$
\State Rollout search with 
\item[]\hspace*{10mm} $\policyMix{i} = \mixfrac{i} \policyOR + (1-\mixfrac{i}) \policyLEARN_i$ 
\State At each $t\in\seqset{t}{k}$ pick a random 
\item[]\hspace*{10mm} action $\action_t$ to get corresponding $\pair{\belief_t}{\vertex}$
\State Query oracle for $\costToGoOracle\pair{\world}{\action_t}$
\State $\dataset_i \gets \dataset_i \cup \{\belief_t, \action_t, t, \costToGoOracle\pair{\world}{\action_t} \}$ 
\EndFor
\State Aggregate datasets: $\dataset \gets \dataset \bigcup \dataset_i$ 
\State Train cost-sensitive classifier $\policyLEARN_{i+1}$ on $\dataset$
\EndFor
\State \textbf{Return} best $\policyLEARN_i$ on validation
\end{algorithmic}
\end{algorithm}

Alg.~\ref{alg:sail_alg}, describes the $\algSail$ framework which iteratively trains a sequence of policies $\seq{\policyLEARN}{N}$. For training the learner, we collect a dataset $\dataset$ as follows - At every iteration \emph{i}, the agent executed \emph{m} different searches (Alg. \ref{alg:search}). For every search, a different world $\world$ and the pair $(\start, \goal)$ is sampled from a database.  The agent then rolls-out a search with a mixture policy $\policyMix{i}$ which blends the learner's current policy, $\policyLEARN_{i}$ and the oracle's policy, $\policyOR$ using blending parameter $\mixfrac{i}$. During the search execution, at every timestep in a set of $k$ uniformly sampled timesteps, we select a random action from the set of feasible actions and collect a datapoint $\{\belief_t, \action_t, t, \costToGoOracle\pair{\world}{\action_t} \}$. The policy $\policyMix{i}$ is rolled out till the end of the episode and all the collected data is aggregated with dataset $\dataset$. At the end of N iterations, the algorithm returns the best performing policy on a set of held-out validation environment or alternatively, a mixture of $\seq{\policyLEARN}{N}$. Fig.~\ref{fig:algorithm_sail} illustrates the \algSail framework.

Note that while the oracle is invoked once per $\world$, we obtain $k$ datapoints - this is critical for speeding up training.
We also note that even though the time complexity of $\select$ is $O \left(|\openList_{t}|\right)$ at timestep $t$, $\algSail$ can have better overall complexity if it can achieve a squared reduction in number of expansions compared to uninformed search as discussed more in Appendix~\ref{appendix:sail_complexity}.


\section{Experiments on Informative Path Planning}
\label{sec:res_ipp}

In this section, we extensively evaluate our approach on a set of 2D and 3D informative path planning problems across a spectrum of synthetic and real world environments. We examine a class of informative path planning problem where a robot, equipped with a range limited sensor, possibly constrained by time and fuel resources, is tasked with 3D reconstruction of structures in the world. We choose a variety of environments to highlight the importance of adaptive behaviours for information gathering. 
Our implementation is open sourced for both MATLAB and C++ (\url{https://bitbucket.org/sanjiban/matlab_learning_info_gain}). 

\subsection{Problem Details} 
\label{sec:res_ipp:problem}
We consider both 2D and 3D informative path planning problems. The world map $\world$ is represented as a 2D or 3D binary grid, i.e. a grid cell is either occupied or free. The candidate set of sensing locations $\vertexSet$ is generated by uniformly randomly sampling nodes in the configuration space of the robot. For 2D problems, the configuration space of the robot is $SE(2)$, for 3D it is $SE(3)$. We assume for simplicity that the robot can teleport between any two nodes $\vertex_i$ and $\vertex_j$ and the cost of travel is the 2D/3D euclidean straight-line distance $\costFnDef(\{ \vertex_i, \vertex_j \}, \world) = \norm{\vertex_i - \vertex_j}{2}$. It would be straightforward to incorporate practical constraints such as collision avoidance by only allowing motion between vertices that are known to collision free and computing travel cost to be the arc length distance of a collision free path.

We assume that the robot is equipped with a field-of-vision (FOV) and range limited sensor. When a robot visits a node $\vertex$ in a world map $\world$, the measurement received by the robot, $\meas = \measFn{\vertex}{\world}$, is computed by ray-casting the sensor on the world and obtaining a scan line (2D) or a depth-image (3D). 

The utility function $\utilityFnDef$ is selected to be the fractional coverage function (similar to \cite{isler2016information}) which is defined as follows. Let the robot traverse a path $\Path = \seq{\vertex}{p}$ in a world $\world$. For each node $\vertex_i \in \Path$ we have a corresponding measurement $\meas_i$. Let the coverage map $C_i$ be a binary grid whose cells are $1$ iff the corresponding cell in $\world$ is occupied and $\meas_i$ contains a point in that cell. The total coverage map of a path $\Path$ is a union of all coverage maps $C = \bigcup\limits_{i = 1}^{p} C_i$. Then the utility function is the ratio of the total coverage and the total occcupied cells in the world map, i.e. $\utilityFnDef(\Path, \world) = \frac{\norm{C}{1}}{\norm{\world}{1}}$. 

While we assume the objective of the robot is to `uncover' every cell of the hidden world map, this framework can also allow a more task specific objective. For example, if the objective is to perform surface reconstruction of a specific object (and not of every surface in the world map), the utility function can be modified to only cover gridcells belonging to that object. The quality of an observation can also be included in the utility, i.e. measurements at close range can be weighted more than measurements taken from far away.

The values of total time step $T$ and travel budget $\costBudget$ vary with problem instances and are specified along with the results.

The history of events $\belief_t$ is represented as an occupancy grid $\occGridDef$ where each grid cell $\gridCell \in \occGridDef$ corresponds to an occupancy value $\probOcc{\gridCell} \in [0, 1]$. Every time a new measurement is received, $\occGridDef$ is updated by ray-casting and applying Bayes' rule~\citep{thrun2005probabilistic}. The policy $\policy(\belief_t)$ takes as input the occupancy grid and selects an action $\action_{t+1}$ that corresponds to the next node $\vertex$ to be visited.

\subsection{Baseline: Information Theoretic Heuristics}
\label{sec:res_ipp:baseline}

\citet{isler2016information} propose a set of information theoretic heuristics that quantify the information gain of obtaining a measurement for the task of volumetric reconstruction which include visibility likelihood and the likelihood of seeing new parts of the object. These heuristics are variants of Shannon's entropy where cells are weighted by an importance function. All of the heuristics are myopic, i.e. given the current occupancy grid, each candidate node is evaluated and the best node is selected as the next action. 
We briefly describe these heuristics and ask the reader to refer to \citet{isler2016information} for further details.

To evaluate a node $\vertex$, a set of rays $\raySet(\vertex)$ are cast from the node using the specifications of the sensor model. A ray $\ray \in \raySet$ corresponds to a set of grid cells in the occupancy grid $\occGrid{\ray}$. Given a grid cell $\gridCell$, the probability of it being occupied is $\probOcc{\gridCell}$ and being free is $\probFree{\gridCell}$. This can be used to compute various information gain metrics according to different heuristics. Let $\infoGainFn{}{\gridCell}$ be the information stored in the grid cell $\gridCell$. Then the information gain for a node is given by 
\begin{equation}
  \infoGainVertFn{}{\vertex} = \sum\limits_{\forall \ray \in \raySet(\vertex)} \sum\limits_{\forall \gridCell \in \occGrid{\ray}} \infoGainFn{}{\gridCell}
\end{equation}

Depending on the type of information gain $\infoGainFnDef$, there can be several information gain functions
\begin{enumerate}
\item Average Entropy: $\infoGainVertFn{o}{\vertex}$

This corresponds to the entropy
\begin{equation}
  \infoGainFn{o}{\gridCell} = -\probOcc{\gridCell} \log \probOcc{\gridCell} - \probFree{\gridCell} \log \probFree{\gridCell}
\end{equation}

\item Occlusion Aware Entropy: $\infoGainVertFn{v}{\vertex}$

This corresponds to considering the visibility likelihood of a grid cell
\begin{equation}
  \infoGainFn{v}{\gridCell} = P_v(\gridCell) \infoGainFn{o}{\gridCell}
\end{equation}
where $P_v(\gridCell)$ is the likelihood of the ray $\ray$ leading to the $\gridCell$ being free. 

\item Unobserved Voxel: $\infoGainVertFn{u}{\vertex}$

This corresponds to only considering unknown grid cells
\begin{equation}
  \infoGainFn{u}{\gridCell} = \begin{cases}
1 &\text{if $\gridCell$ is unknown}\\
0 &\text{otherwise}
\end{cases}
\end{equation}

\item Unobserved Entropy: $\infoGainVertFn{k}{\vertex}$

This is the composition of unobserved voxel with occlusion aware entropy
\begin{equation}
  \infoGainFn{k}{\gridCell} = \infoGainFn{u}{\gridCell} \infoGainFn{v}{\gridCell}
\end{equation}

\item Rear Side Voxel: $\infoGainVertFn{b}{\vertex}$

Let $RS$ be the set of \emph{rear-side} grid-cells defined as occluded, unknown gird cells adjacent on the ray to an occupied grid cell. Then
\begin{equation}
  \infoGainFn{b}{\gridCell} = \begin{cases}
1 &\text{if $\gridCell \in RS$}\\
0 &\text{otherwise}
\end{cases}
\end{equation}

\item Rear Side Entropy: $\infoGainVertFn{n}{\vertex}$

This is the composition of rear side voxel with occlusion aware entropy
\begin{equation}
  \infoGainFn{n}{\gridCell} = \infoGainFn{b}{\gridCell} \infoGainFn{v}{\gridCell}
\end{equation}
\end{enumerate}

The heuristics are used in a greedy fashion as follows. Given the robot has already visited nodes $\vertex_1, \dots, \vertex_{t-1}$, it decides to visit node $\vertex_t$ according to the following rule

\begin{equation}
  \vertex_t = \argmaxprob{\vertex_t \in \vertexSet} \frac{ \infoGainVertFn{}{\vertex_t} }{ \sum_{\vertex \in \vertexSet} \infoGainVertFn{}{\vertex} } - 
  \lambda \frac{ \norm{ \vertex_t - \vertex_{t-1}}{2} }{ \sum_{\vertex \in \vertexSet} \norm{ \vertex - \vertex_{t-1}}{2} }
\end{equation}

When applied to the Problem \hiddenunc, we de-activate the penalization and set $\lambda = 0$. 

\subsection{Imitation Learning Details}
\subsubsection{Feature Extraction and Learner}
The policy maps the history $\belief$ to an action $\action$ by learning a function approximation for the action value function $\hat{Q}(\belief, \action)$. The tuple $\left(\action, \belief \right)$ is mapped to a vector of features $\feature =  \bbm \featureIG^T & \featureMot^T \ebm^T$. The first set of features $\featureIG \in \real^6$ are the information gain heuristics defined in Section~\ref{sec:res_ipp:baseline}. These heuristics are computed using the occupancy map corresponding to history $\belief$ and the candidate node corresponding to action $\action$. There are several reasons for using these heuristics as the feature vector. They allow generalization across different instance of the world map. They also allow for fare comparison against the heuristics as baseline  approaches - the learner learns a trade-off between heuristics.

$\featureMot \in \real^7$ encodes the distance already travelled by the robot $(\real^1)$, the relative translation $(\real^3)$ and rotation $(\real^3)$ to visit the candidate node from the current node. These set of features capture the travel cost trade-off for visiting a node. 

We use random forest regression as a function approximator~\citep{liaw2002classification}.

\subsubsection{Dataset Creation}
The 2D world maps are created by randomly distributing geometric objects such as rectangles and circles according to hand design parametric distribution. The 3D world maps are created using the ROS-Gazebo simulator and randomly distributing 3D object meshes. Depending on the environment (such as construction site or office-desk), different collection of objects and parametric distributions are selected. 

For the experiment on a real dataset, we used registered RGBD data collected by \cite{sturm12iros}. The original dataset is a set of registered point cloud along with the measurement pose. This dataset can be used to create the world map $\world$.
The set of poses are used to create a fully connected graph $\vertexSet$. The algorithm is then restricted to choosing a subset of these poses to maximize the utility. Every time the algorithm visits a node $\vertex_i$, the corresponding measurement $\meas_i$ is returned. 
We found that this setup allowed us to easily evaluate information gathering algorithms on real data in a completely decoupled manner from the data collection process. 

This process of dataset creation motivates the applicability of our method in practical settings. Given a new environment, we can envision collecting a dataset open-loop, either via manual operation or via some base exploration policy. We can then learn an efficient policy on this dataset and subsequently used the learnt policy for future operations. The generalization capability of the learner allows performance to be transferred to environments with similar object configurations.

\begin{figure*}[t]
    \centering
    \includegraphics[width=\textwidth]{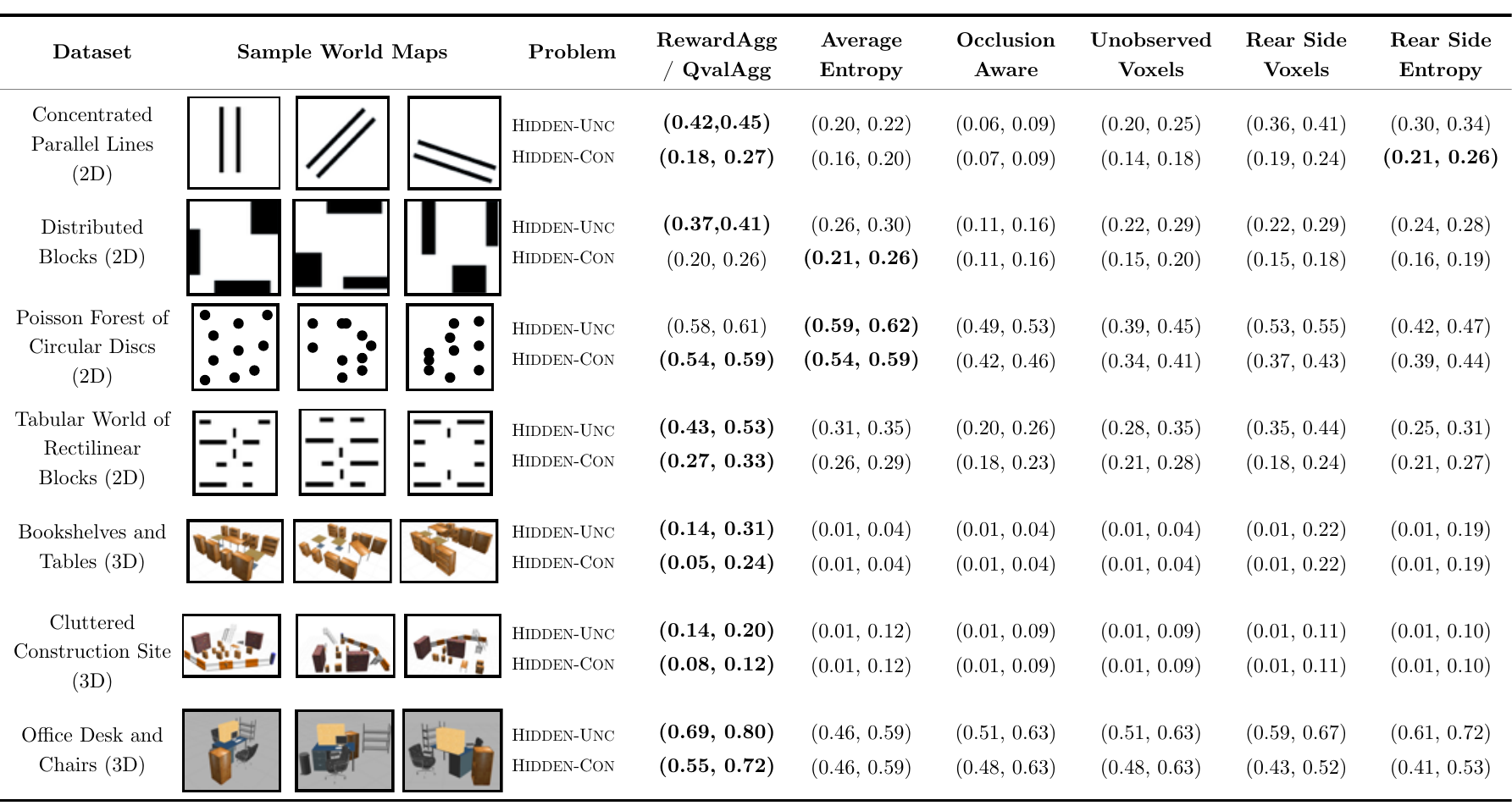}
    \caption{%
    Results for Problems \hiddenunc and \hiddencon on a spectrum of 2D and 3D exploration problems. The train size is $100$ and test size is $10$. Numbers are the confidence bounds (for 95\% CI) of cumulative reward at the final time step. Algorithm with the highest median performance is emphasized in bold.}
    \label{fig:results:extra}
\end{figure*}%

\subsubsection{Clairvoyant Oracle}

For algorithms \algRewAgg and \algRewFT, the clairvoyant oracle is simply the one-step-reward function, i.e. the marginal utility of visiting a node given the history of nodes visited. An important implementation detail is that when using the one-step-reward oracle, the call to the oracle is inexpensive. Hence, instead of sampling a random action and obtaining its value, all actions can be queried. This dramatically improves the convergence due to the increase in data size.

For \algQvalAgg and \algQvalFT, the clairvoyant oracle needs to solve the submodular routing problem (Problem \knowncon). We use the Generalized Cost Benefit (GCB)~\citep{zhang2016submodular} algorithm - an efficient greedy algorithm with bi-criterion approximation guarantees. The core idea of the algorithms is very simple: at iteration $i$ select a node $\vertex_i$ that maximizes the ratio of the marginal gain in utility and the marginal gain in travel cost

\begin{equation}
  \label{eq:gcb}
  \vertex_i = \argmaxprob{\vertex \in \vertexSet} 
  \frac{ \utilityFn{ \vertex_i \cup \{ \vertex_j \}_{j=1}^{i-1} }{\world} - \utilityFn{ \{ \vertex_j \}_{j=1}^{i-1} }{\world} }      
       { \costFn{ \vertex_i \cup \{ \vertex_j \}_{j=1}^{i-1} }{\world}    - \costFn{ \{ \vertex_j \}_{j=1}^{i-1} }{\world}  }
\end{equation}

Once a vertex $\vertex_i$ is selected, a TSP solver is invoked to find the minimum cost route through nodes $\vertex_1, \dots, \vertex_i$ and the vertices are re-ordered accordingly. The process is repeated till the travel budget constraints are met. Note that computing the denominator exactly in (\ref{eq:gcb}) might be expensive since it involves a call to a TSP solver. We can instead approximate it by the distance to the node $\vertex_i$ from the last node in the route $\vertex_{i-1}$.

\begin{figure*}[!htbp]
    \centering
    \includegraphics[width=\textwidth]{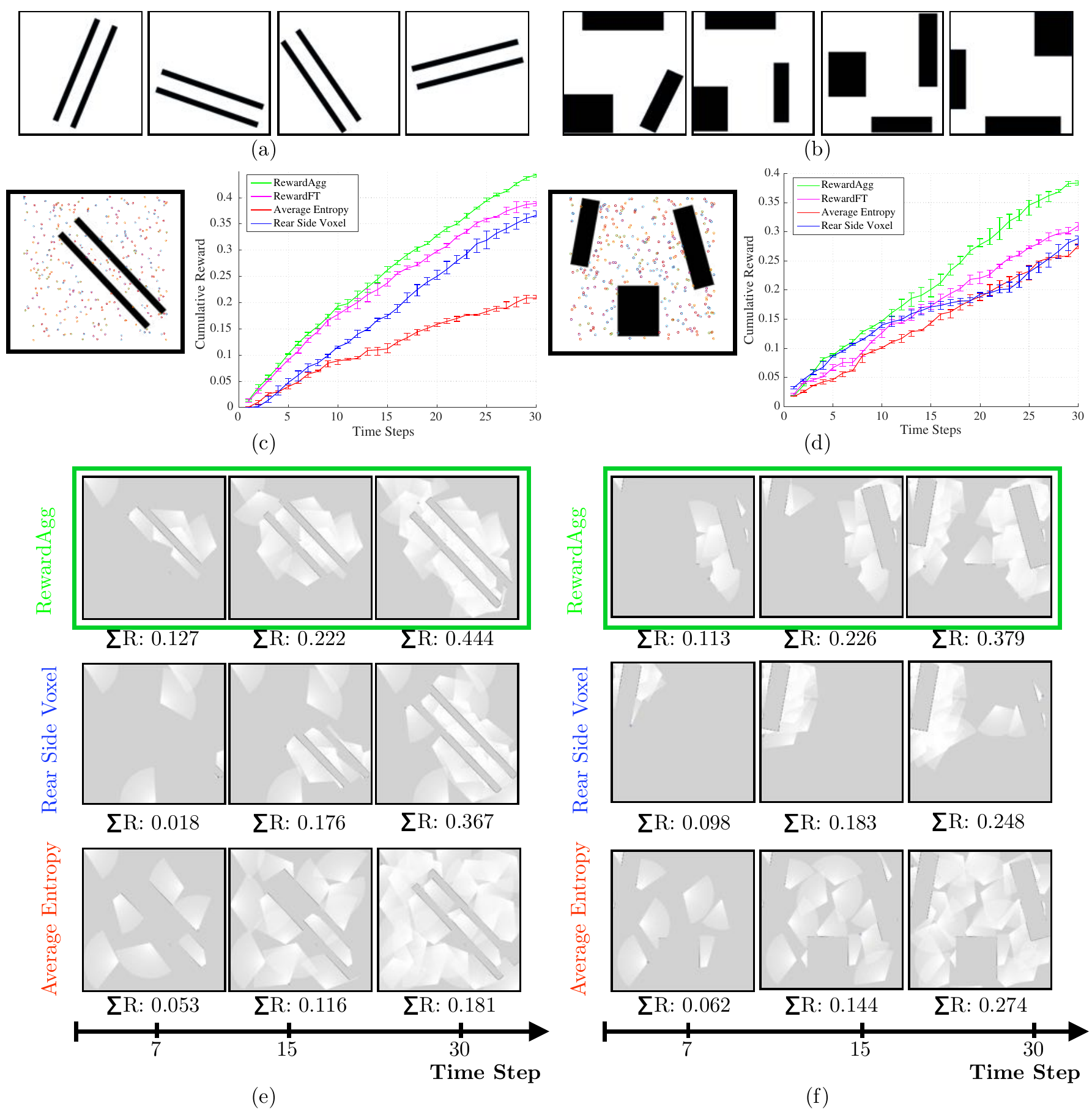}
    \caption{%
    Case study of Problem \hiddenunc using \algRewAgg, \algRewFT and baseline heuristics. Two different datasets of 2D exploration are considered - (a) dataset 1 (parallel lines) and (b) dataset 2 (distributed blocks). Problem details are: $T=30, |\actionSet|=300$, $100$ train and $100$ test maps. A sample test instance is shown along with a plot of cumulative reward with time steps for different policies is shown in (c) and (d). The error bars show $95\%$ confidence intervals. (e) and (f) show snapshots of the execution at time steps $7, 15$ and $30$. 
        \label{fig:results:matlab_unc}}
\end{figure*}%

\begin{figure*}[!htbp]
    \centering
    \includegraphics[width=\textwidth]{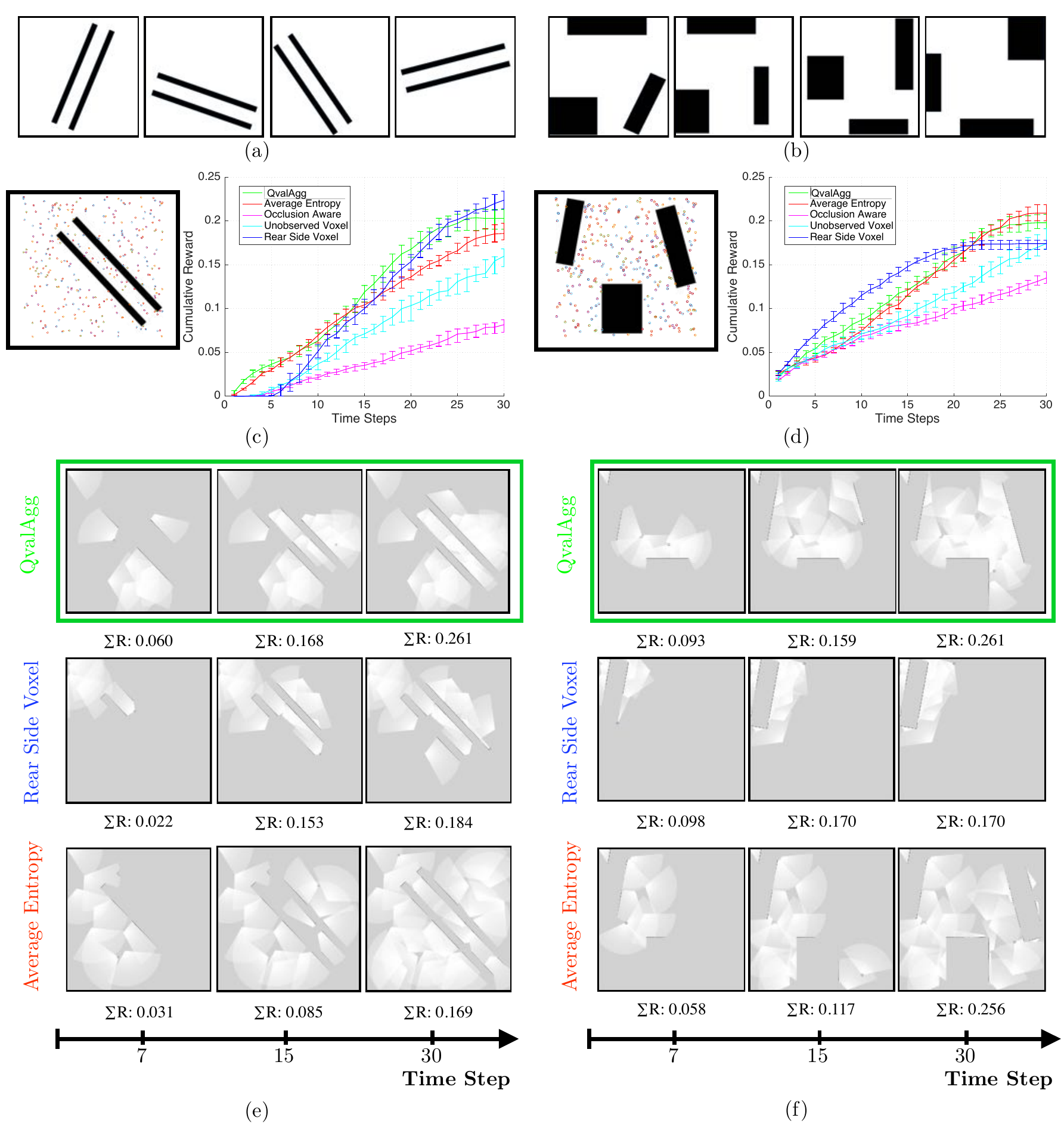}
    \caption{%
    Case study of Problem \hiddenunc using\algQvalAgg with baseline heuristics on a 2D exploration problem on 2 different datasets - dataset 1 (concentrated information) and dataset 2 (distributed information). The problem details are: $T=30, \costBudget=2500, |\actionSet|=300$, $100$ train and $100$ test maps.
    A sample test instance is shown along with a plot of cumulative reward with time steps for different policies is shown in (c) and (d)
    The error bars show $95\%$ confidence intervals
    Snapshots of execution of \algQvalAgg, \RearSideVoxel and \AverageEntropy are shown for (e) dataset 1 and (f) dataset 2. The snapshots show the evidence grid at time steps $7, 15$ and $30$. 
        \label{fig:results:matlab_con}}
\end{figure*}%

\begin{figure*}[t]
    \centering
    \includegraphics[width=\textwidth]{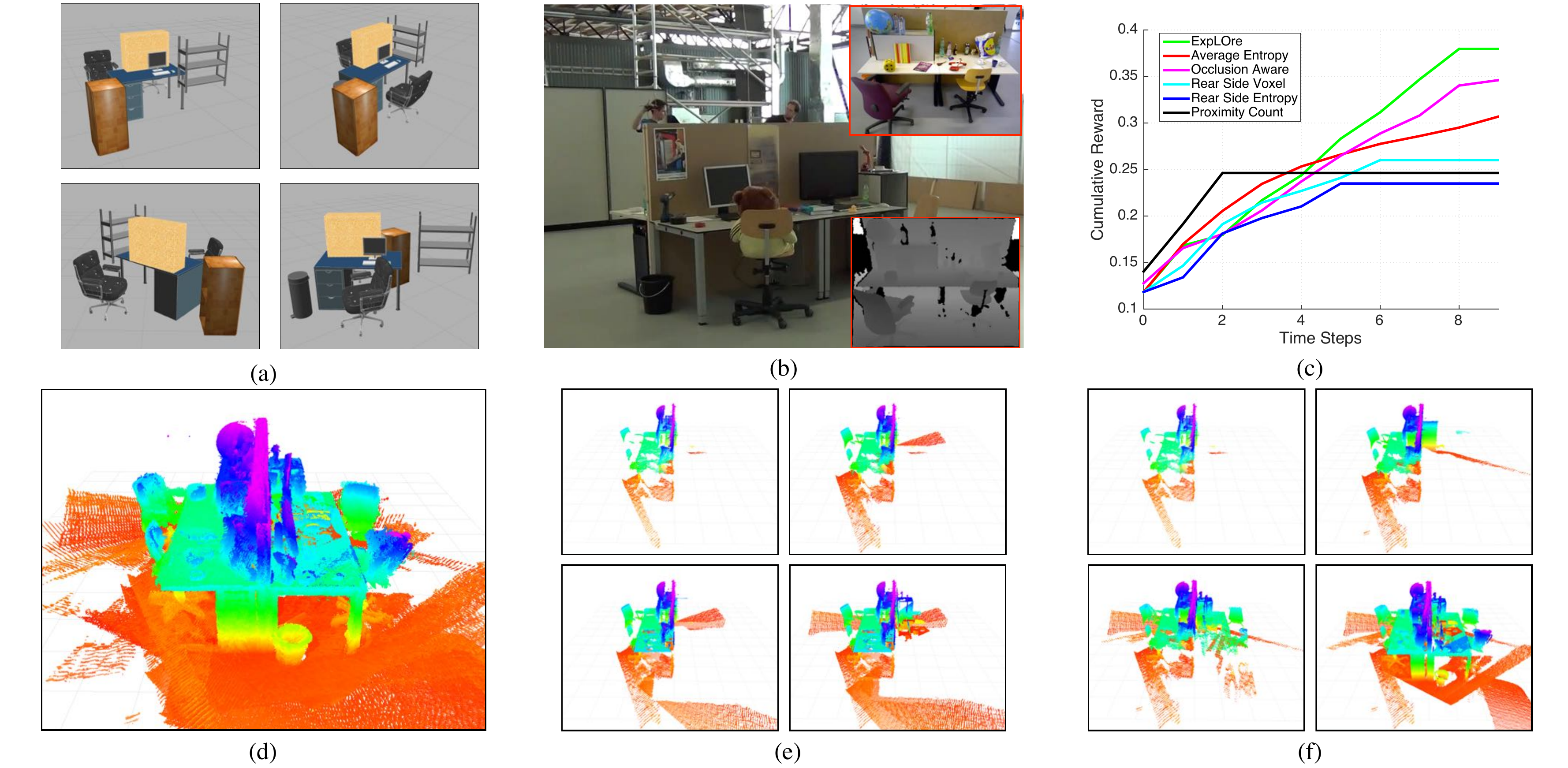}
    \caption{%
        Comparison of \algQvalAgg with baseline heuristics on a 3D exploration problem where training is done on simulated world maps and testing is done on a real dataset of an office workspace. The problem details are: $T=10$, $\costBudget=12$, $|\actionSet|=50$. 
    (a) Samples from $100$ simulated worlds resembling an office workspace created in Gazebo.
    (b) Real dataset collected by \cite{sturm12iros} using a RGBD camera. 
    (c) Plot of cumulative reward with time steps for \algQvalAgg and baseline heuristics on the real dataset.
    (d) The 3D model of the real office workspace formed by cumulating measurements from all poses. 
    (e) Snapshots of execution of \OcclusionAware heuristic at time steps $1,3, 5, 9$.
    (f) Snapshots of execution of \algQvalAgg heuristic at time steps $1,3, 5, 9$. 
    \label{fig:results:cpp}}
\end{figure*}%

\begin{figure*}[t]
    \centering
    \includegraphics[width=\textwidth]{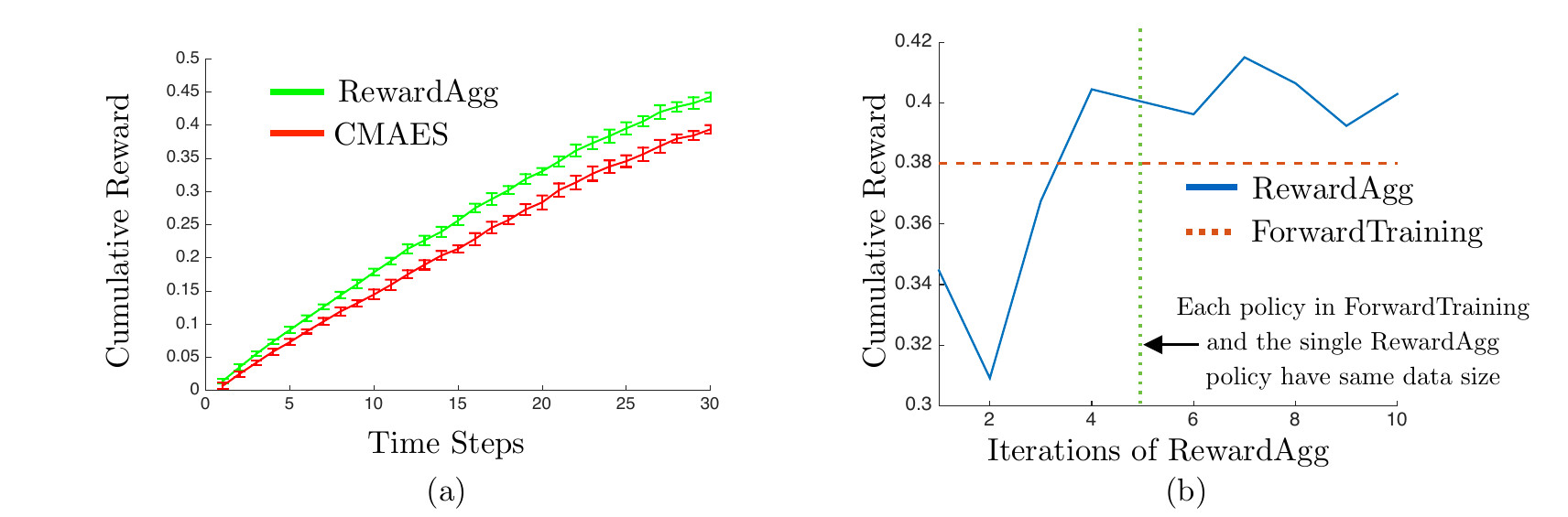}
    \caption{%
    (a) Comparison of \algRewAgg with CEM policy search. Both algorithms are given access to the same amount of data. The final policy from CEM and the best validation policy of \algRewAgg are then executed on a test dataset. \algRewAgg outperforms CEM not only overall but pointwise at each timestep. 
    (b) Comparison of \algRewAgg with \FT. Each policy in \FT is trained with a dataset size of $500$. \algRewAgg is trained with $100$ samples per iteration for 10 iteration. The performance of both policies on test dataset is shown.  \algRewAgg surpasses \FT at the $4^{th}$ iteration and never drops below. At iteration 5 the single policy of \algRewAgg has the same dataset size as each policy of the $10$ policies of \FT. However the single policy still outperforms the nonstationary policy.
    \label{fig:result:agg}}
\end{figure*}%

\subsection{Analysis of Results}
Fig.~\ref{fig:results:extra} shows the utility of all algorithms on various synthetic datasets. The two numbers are lower and upper $95\%$ confidence intervals of the episodic utility of each algorithm. The best performance on each dataset is highlighted. For Problem \hiddenunc, \algRewAgg is employed along with baseline heuristics. For Problem \hiddencon, \algQvalAgg is employed with baseline heuristic augmented with motion penalization. The train size is $100$ and test size is $10$. We present a set of observations to interpret these results.

\begin{observation}
The learnt policy from \algRewAgg / \algQvalAgg has a consistently competitive performance across all datasets.
\end{observation}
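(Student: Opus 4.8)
The plan is to establish this observation empirically by inspecting the tabulated results of Fig.~\ref{fig:results:extra}, which reports the lower and upper $95\%$ confidence intervals of episodic utility for every algorithm on every synthetic dataset, with the best median performer on each dataset highlighted. First I would fix a precise meaning of \emph{competitive}: on a given dataset a policy is competitive if either it attains the highest median utility, or its confidence interval overlaps that of the highest-median performer so that the gap is not significant at the $95\%$ level. The observation then reduces to the claim that \algRewAgg (on \hiddenunc instances) and \algQvalAgg (on \hiddencon instances) are competitive in \emph{every} column of the table, rather than the stronger and false claim that they strictly dominate everywhere.

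Next I would argue the claim column by column. The structural fact I would lean on is that each baseline information-theoretic heuristic of Section~\ref{sec:res_ipp:baseline} is tuned to one geometric regime---\RearSideVoxel favours concentrated, occluded structure while \AverageEntropy favours dispersed information---so that no single baseline stays near the top across all datasets, and each is badly beaten on at least one dataset. By contrast, since the learnt policy regresses a trade-off over the full six-dimensional heuristic feature vector $\featureIG$, it can specialize per distribution. Concretely I would, for each dataset, compare the learner's interval against the best baseline's interval and verify the difference is non-positive or statistically insignificant, while simultaneously exhibiting the dataset on which each baseline collapses; it is this \emph{rank-consistency} (never far from the best), not raw peak performance, that is the content of the observation.

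Finally I would tie the empirical claim to the theory already in hand. Theorem~\ref{theorem:hidden_unc} guarantees that on \hiddenunc the \algRewAgg policy achieves value at least $(1-\tfrac{1}{e})\valuePol{\policy^*}$ up to the regression and online-learning regret terms, and Theorem~\ref{theorem:aggrevate} gives the analogous competitiveness against the hallucinating oracle for \hiddencon. These bounds are exactly what we expect to surface as consistent near-top performance once $N$ and $m$ are large enough to drive $\errclass$, $\errreg$ and the residual $\bigo{\cdot}$ terms toward zero, so the theory supplies a mechanism for \emph{why} consistency, rather than mere peakedness, should be observed.

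The main obstacle is that ``competitive'' is a soft notion and the confidence intervals on several datasets overlap, so the argument cannot be a clean domination statement. The honest version is a claim about rank-consistency backed by per-dataset interval comparisons, and special care is needed on any dataset where a baseline's point estimate slightly exceeds the learner's: there the competitiveness must be read off the overlap of intervals rather than the medians, and I would have to be explicit that the evidence supports ``never significantly worse and frequently best'' rather than ``always strictly best.''
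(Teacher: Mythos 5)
Your approach matches the paper's: the observation is justified purely empirically by reading off Fig.~\ref{fig:results:extra}, and the paper's version is simply that the learner beats every heuristic on $8$ of the $10$ datasets while losing to \AverageEntropy by only a small margin on the remaining $2$ (environments that lack spatial correlation or have dispersed objects). Your additional machinery --- the confidence-interval-overlap definition of ``competitive'' and the appeal to Theorems~\ref{theorem:aggrevate} and~\ref{theorem:hidden_unc} --- is consistent with but goes beyond what the paper states here; the theoretical tie-in is instead used by the paper to support the later observation about the performance margin of \algRewAgg versus \algQvalAgg.
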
 

Fig.~\ref{fig:results:extra} shows the performance of all algorithms on a set of 2D and 3D datasets. We see that out of the $10$ datasets, the learners perform better than any heuristic on $8$. On $2$ of the datasets, the \AverageEntropy heuristic outperforms the learner by a small margin. On examining the datasets, we see that the unknown space exploration behaviour of \AverageEntropy results in good performance in environments that either lack spatial correlation or contain objects distributed in the environment.

\begin{observation}
The performance of heuristics vary widely across datasets, however, the performance of the learner is robust.
\end{observation}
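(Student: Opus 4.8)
The plan is to establish this two-part empirical observation directly from the episodic-utility data reported in Fig.~\ref{fig:results:extra}, treating the ten datasets as the sample over which ``variability'' and ``robustness'' are measured. First I would make the two qualitative notions quantitative by fixing a common normalization: for each dataset $d$, divide every policy's median cumulative reward by the largest median achieved on $d$, producing a \emph{relative score} in $[0,1]$ for every policy--dataset pair. Under this normalization, ``performance varies widely'' becomes the statement that a policy's relative score has large \emph{range} (or variance) across the ten datasets, while ``the learner is robust'' becomes the statement that the learner's relative score stays uniformly close to $1$.

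For the first half of the claim I would tabulate the relative scores of each information-theoretic heuristic (\AverageEntropy, \OcclusionAware, \RearSideVoxel, and the remaining entropy variants of Section~\ref{sec:res_ipp:baseline}) across all ten datasets and exhibit that each heuristic attains a near-top score on some datasets but a markedly lower score on others, so that the identity of the \emph{winning} heuristic changes with the dataset. The quantity to report is the per-heuristic range $\max_d - \min_d$ of the relative score; showing this range is large for every baseline establishes that no single hand-designed heuristic is uniformly good, which is precisely the first assertion.

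For the second half I would show that the learner's relative score admits a lower bound close to $1$ \emph{uniformly} over the ten datasets, leveraging the fact already reported that \algRewAgg{} and \algQvalAgg{} are best on $8$ of the $10$ datasets and are beaten only by a small margin on the two \AverageEntropy{} cases. Combining a uniform lower bound on the learner's relative score with the large per-heuristic ranges yields the asserted contrast. I would then attach statistical significance using the $95\%$ confidence intervals already present in Fig.~\ref{fig:results:extra}, verifying that the learner's lower confidence bound overlaps the top performer's interval on every dataset, which rules out the possibility that the apparent robustness is an artifact of sampling noise.

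The main obstacle I expect is not any single calculation but making ``robust'' and ``varies widely'' unambiguous and fair: choosing a normalization that compares sensibly across datasets whose reward scales and spatial structure differ drastically (concentrated versus dispersed information), and ruling out the competing explanation that the learner merely reproduces one dominant heuristic. I would address the latter by appealing to the case studies in Fig.~\ref{fig:results:matlab_unc} and Fig.~\ref{fig:results:matlab_con}, where the learner's effective behaviour differs between the concentrated-information and distributed-information datasets, demonstrating that its robustness stems from \emph{adapting} the heuristic trade-off to the inferred environment rather than from any one heuristic happening to win everywhere.
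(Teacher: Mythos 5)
Your proposal is correct and rests on essentially the same evidence and argument as the paper: the paper supports this observation by pointing to Fig.~\ref{fig:results:extra}, noting that the relative ranking of \AverageEntropy and \RearSideVoxel interchanges between datasets while the learner adapts and stays consistently near the top (even pointwise across datasets). Your normalization scheme, per-heuristic range statistic, and confidence-interval check are a more quantitative formalization of the same comparison, but the paper itself only makes the qualitative version of this argument.
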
 

We can see that the relative ranking of \AverageEntropy and \RearSideVoxel interchanges from Dataset 1 to 2. This motivates the need for adaptive policies that assign different utility to unknown cells conditioned on the environment in which the robot is operating. The learner's policy on the other hand adapts to different environments and hence maintains a consistently good performance. Interestingly, it also outperforms the heuristic pointwise across datasets, which is indicative of the fact that the adaptation happens during exploration as well. 

\begin{observation}
The performance margin of \algRewAgg in Problem \hiddenunc as compared to heuristics is much larger than that of \algQvalAgg in Problem \hiddencon
\end{observation}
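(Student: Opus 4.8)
The final claim is an \emph{empirical} observation rather than a formal theorem, so the "proof" I propose is an explanatory argument that localizes the margin difference to the two error sources distinguishing the guarantees of Theorem~\ref{theorem:hidden_unc} and Theorem~\ref{theorem:aggrevate}, corroborated by measurements on the datasets of Fig.~\ref{fig:results:extra}. The plan is to decompose the learner-versus-best-heuristic gap in each problem class into (i) the quality of the clairvoyant oracle being imitated and (ii) the learnability (regression difficulty) of the imitation target, and then to argue that both factors favor a larger margin in \hiddenunc than in \hiddencon.

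First I would contrast the two performance bounds directly. For \algRewAgg on \hiddenunc, Theorem~\ref{theorem:hidden_unc} gives an \emph{absolute} near-optimality guarantee $\valuePol{\policyLEARN} \geq (1-\tfrac{1}{e})\valuePol{\policy^*} - (\text{small terms})$ with \emph{no} oracle-suboptimality penalty, precisely because the clairvoyant one-step-reward oracle coincides with the greedy policy on an adaptive-submodular utility and is therefore provably near-optimal. For \algQvalAgg on \hiddencon, only the weaker Theorem~\ref{theorem:aggrevate} applies: it guarantees competitiveness with the hallucinating oracle $\valuePol{\policyORBel}$ and carries an additive penalty $T\errhor$, since the reward-to-go oracle must solve the NP-hard routing Problem~\knowncon and does so only approximately (via GCB), so $\errhor > 0$. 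The absence of the $\errhor$ term in the \hiddenunc bound is the first reason to expect a larger margin there.

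Second I would argue that the imitation target is strictly easier to learn in \hiddenunc. The one-step-reward is a single marginal gain, so the regression error $\errclass$ and the target range $R$ are both small, as already noted after Theorem~\ref{theorem:hidden_unc}; the reward-to-go, by contrast, is a complex function of the entire future trajectory, inflating $\errclass$ and $R$ and hence the $2T\sqrt{\abs{\actionSet}(\errclass+\errreg+\cdots)}$ term. Thus the learner in \hiddenunc both imitates a better oracle and imitates it more faithfully, compounding the margin. I would then make this quantitative: instantiate $\errclass$, $\errhor$, and $R$ empirically in each domain and read off the learner-minus-heuristic gap from Fig.~\ref{fig:results:extra}, checking that the predicted ordering of margins matches.

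The main obstacle is a confound: the two problem classes do not use the same baseline, since \hiddencon augments the heuristics with a motion-penalization term that already encodes the travel-cost trade-off and is therefore a stronger competitor, independently shrinking the observed gap. Because the oracle, the learning target, \emph{and} the baseline all change between the two settings, isolating how much of the margin difference is attributable to oracle quality and target learnability (rather than to the differing strength of the heuristic baselines) is the step where the explanation is hardest to make rigorous. To address it I would run an ablation that holds the baseline fixed, or compare both algorithms on a common environment, so that the decomposition above can be asserted with confidence rather than merely read off the aggregate numbers.
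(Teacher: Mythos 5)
Your proposal is correct and takes essentially the same route as the paper: the paper attributes the larger margin to exactly your two factors, namely (i) the near-optimality guarantee of Theorem~\ref{theorem:hidden_unc} for imitating the clairvoyant one-step-reward (with no oracle-suboptimality penalty), and (ii) the lower empirical regression regret of predicting immediate utility versus reward-to-go. Your additional points about the $T\errhor$ term, the range $R$, and the baseline confound go beyond what the paper states but are consistent elaborations rather than a different argument.
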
 

This is seen to be especially true in Dataset 1, 2 and 4. As conjectured in Section~\ref{sec:approach:ipp:one_step_reward}, this can be attributed to two reasons. Firstly, the near-optimality guarantee in Theorem~\ref{theorem:hidden_unc} of imitating a Clairvoyant one-step-reward bounds the performance of the learner. Secondly, the empirical regression regret of imitating one step reward values will be much lower than trying to estimate the action values using features from the history $\belief_t$, i.e. it is easier to predict the immediate utility of going to a sensing location than trying to predict the future utility.

\begin{observation}
The performance of \AverageEntropy in the Poisson Forest dataset is at par with the learner.
\end{observation}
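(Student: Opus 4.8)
The plan is to read this as the degenerate regime in which the adaptive structure that our learner exploits simply is not present, so that the best achievable behaviour collapses onto pure unexplored-volume reduction --- precisely the quantity \AverageEntropy\ computes. First I would characterise the Poisson forest prior: obstacles are placed by a spatially homogeneous process, so that outside the finite extent of any single object the occupancies of disjoint regions are essentially independent. Under such a prior the posterior $P(\world \mid \belief)$ carries almost no cross-region information --- a measurement taken at one node tells the robot little about occupancy far from its footprint, in contrast to the ladder or corridor distributions where observing one part sharply constrains the rest.

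Next I would push this through the hallucinating-oracle picture of Section~\ref{sec:imitation_learning}. By Lemma~\ref{lemma:hallucinating} the learner trained by \algRewAgg\ imitates the hallucinating oracle, whose action is $\argmax_{\vertex \in \vertexSet} \expect{\world \sim P(\world \mid \belief)}{\marginalGain{\vertex \mid \vertexSet_\belief}{\world}}$, i.e. the node of largest \emph{expected} marginal coverage under the posterior. When the posterior has no exploitable cross-cell correlation, this expectation reduces (up to the occupancy-prior weighting) to a count of the currently-unknown cells that a node's rays would resolve. That count is exactly what \AverageEntropy\ optimises: its per-cell information $\infoGainFn{o}{\gridCell} = -\probOcc{\gridCell}\log\probOcc{\gridCell} - \probFree{\gridCell}\log\probFree{\gridCell}$ is maximised at the uniform prior occupancy and integrates over a footprint to the unresolved-volume score $\infoGainVertFn{o}{\vertex}$. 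Hence on this distribution the two policies optimise nearly the same greedy criterion at every step, so neither a learned adaptive correction nor the $(1-\tfrac{1}{e})$ guarantee of Theorem~\ref{theorem:hidden_unc} leaves any structural margin for the learner to exceed the heuristic; the two track each other and the difference stays inside the reported confidence interval, which is what ``at par'' means.

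The main obstacle is making the no-correlation claim precise rather than heuristic. Real Poisson-forest instances contain objects of finite extent, and the ray-casting sensor induces occlusion dependencies, so occupancies are not exactly independent: nearby cells of the same object are correlated, and cells behind an observed surface become deterministically shadowed. The crux is to show these correlations are short-range enough that they never assemble into the long-range, learnable structure needed for an adaptive policy to beat a myopic entropy rule. I would make this quantitative by bounding the mutual information between a candidate node's footprint and the already-observed region under the Poisson prior, and showing it is negligible relative to that footprint's own entropy; then the adaptive term the learner could add to the myopic score is $o(1)$, and the two objectives coincide to within finite-sample noise. Turning that information-theoretic gap into a rigorous bound --- rather than appealing to it informally --- is the hard part.
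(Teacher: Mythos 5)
Your proposal captures exactly the paper's own justification: the observation is an empirical one, and the paper explains it in a single sentence by noting that the Poisson forest's lack of spatial correlation means objects are equally likely to be found anywhere, which is precisely the assumption \AverageEntropy optimizes, leaving the learner no adaptive structure to exploit. Your elaboration through the hallucinating-oracle reduction and the proposed mutual-information bound goes well beyond the paper's informal remark, but it is the same core argument and correctly identifies why ``at par'' is the expected outcome on this distribution.
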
 

The Poisson Forest dataset is created by sampling circles in the environment from a spatial Poisson distribution where the density of the forest is specified. The lack of spatial correlation, implies it is equally likely to find objects anywhere in the world - an assumption that \AverageEntropy optimizes.  

\subsection{Case study A: Adaptation to Different Environments}
We created a set of 2D exploration problems to gain a better understanding of the learnt policies and baseline heuristics. We did this both for Problem \hiddenunc (Fig.~\ref{fig:results:matlab_unc}) and \hiddencon (Fig.~\ref{fig:results:matlab_con}). The dataset comprises of 2D binary world maps, uniformly distributed nodes and a simulated laser. The problem details are $T=30$ and $|\actionSet|=300$. The cost budget for \hiddencon is $\costBudget = 2500$. The train size is $100$, test size is $100$. \algRewAgg and \algQvalAgg is executed for $10$ iterations.

\subsubsection{Dataset 1: Parallel Lines}
We first examined Problem \hiddenunc.
Fig.~\ref{fig:results:matlab_unc} (a) shows a dataset created by applying random affine transformations to a pair of parallel lines. 
This dataset is representative of information being concentrated in an area in the environment, e.g. powerline inspection. 
Fig.~\ref{fig:results:matlab_unc} (c) shows a comparison of \algRewAgg, \algRewFT with baseline heuristics. While \RearSideVoxel outperforms \AverageEntropy, \algRewAgg outperforms both.
Fig.~\ref{fig:results:matlab_unc} (e) shows progress of each. \AverageEntropy explores the whole world without focusing, \RearSideVoxel exploits early while \algRewAgg trades off exploration and exploitation.

The same trend can be observed in Problem \hiddencon. 
Fig.~\ref{fig:results:matlab_con} (c) shows a comparison of \algQvalAgg with baseline heuristics. The heuristic \RearSideVoxel performs the best, while \algQvalAgg is able to match the heuristic. 
Fig.~\ref{fig:results:matlab_con} (e) shows progress of \algQvalAgg along with two relevant heuristics - \RearSideVoxel and \AverageEntropy. \RearSideVoxel takes small steps focusing on exploiting viewpoints along the already observed area. \AverageEntropy aggressively visits the unexplored area which is mainly free space. \algQvalAgg initially explores the world but on seeing parts of the lines reverts to exploiting the area around it.

\subsubsection{Dataset 2: Distributed Blocks}
We first examined Problem \hiddenunc.
Fig.~\ref{fig:results:matlab_unc} (b) shows a dataset created by randomly distributing rectangular blocks around the periphery of the map.
This dataset is representative of information being distributed around.
Fig.~\ref{fig:results:matlab_unc} (d) shows that \RearSideVoxel saturates early, \AverageEntropy eventually overtaking it while \algRewAgg outperforms all.
Fig.~\ref{fig:results:matlab_unc} (f) shows that \RearSideVoxel gets stuck exploiting an island of information. \AverageEntropy takes broader sweeps of the area thus gaining more information about the world. \algQvalAgg shows a non-trivial behavior exploiting one island before moving to another.

The same trend can be observed in Problem \hiddencon. 
Fig.~\ref{fig:results:matlab_con} (d) shows that the heuristic \AverageEntropy performs the best, while \algQvalAgg is able to match the heuristic. \RearSideVoxel saturates early on and performs worse. 
Fig.~\ref{fig:results:matlab_con} (f) shows a similar trend as Fig.~\ref{fig:results:matlab_unc} (f).

\subsection{Case study B: Train on Synthetic, Test on Real}
To show the practical impact of our framework, we show a scenario where a policy is trained on synthetic data and tested on a real dataset. 
Fig.~\ref{fig:results:cpp} (a) shows some sample worlds created in Gazebo to represent an office desk environment on which \algQvalAgg is trained. 
Fig.~\ref{fig:results:cpp} (b) shows a dataset of an office desk collected by TUM Computer Vision Group \cite{sturm12iros}. The dataset is parsed to create a pair of pose and registered point cloud which can then be used to evaluate different algorithms.
Fig.~\ref{fig:results:cpp} (c) shows that \algQvalAgg outperforms all heuristics. 
Fig.~\ref{fig:results:cpp} (f) shows how \algQvalAgg learns a desk exploring policy by circumnavigating around the desk. This shows the powerful generalization capabilities of the approach. In contrast, the best heuristic \OcclusionAware gets stuck in a local minima(Fig.~\ref{fig:results:cpp} (e))

\subsection{Case study C: Policy Search vs Imitation Learning}
We compared our approach to a baseline approach of policy search. We picked the problem setting \hiddenunc, the dataset `Concentrated Parallel Lines' and the trained policy using \algRewAgg. We created a parametrized policy which was linear on the space of the information gain heuristics. The policy, parameterized by $\theta \in \real^6$, assigns at time $t$ to each vertex $v$, picks the action with the highest score as follows
\begin{equation*}
  \argmaxprob{\vertex_t \in \vertexSet} \theta^T  \infoGainVertFn{}{\vertex_t}
\end{equation*}
We train such a policy using a black-box sample efficient policy search method, Covariance Matrix Adaptation Evolution Strategy (CMAES)~\cite{hansen2016cma}. CMAES is allowed $1000$ roll-outs, the same number of calls to oracle as \algRewAgg (Note that CMAES actually has access to more information as they are full rollouts compared to single reward calls in \algRewAgg). 
Fig.~\ref{fig:result:agg}(a) shows comparison between the final policy trained by CMAES and the best policy on validation trained by \algRewAgg on a held out test dataset. We see that \algRewAgg outperforms CMAES not only on the cumulative reward by also at each time step. This confirms our hypothesis that model free policy improvement is slow to converge on account of sample inefficiency. It should be noted that the CMAES policy outperforms all the baseline heuristics as expected. 

\subsection{Case study D: \FT vs \aggrevate}
We compared the training framework of \FT, which trains a different policy for every time-step with \aggrevate that trains a single policy across all time steps. We wished to examine the following question - `How much data does a the single \aggrevate policy need to be competitive with \FT ?'. We picked the problem setting \hiddenunc and the dataset `Concentrated Parallel Lines'.
We trained \FT where each policy $\policy_t$ is given $500$ datapoints (hence for episode length $T=30$, a total of $15,000$ datapoints are used). We trained \algRewAgg where each iteration has $100$ datapoints, and the the number of iterations is $10$. Hence the \algRewAgg policy matches the same datasize as \FT at iteration $5$.
Fig.~\ref{fig:result:agg}(b) shows a comparison between \FT and \algRewAgg. We see that \algRewAgg outperforms \FT by iteration $4$, following which the performance converges and oscillates at values above \FT. Interestingly, at iteration $5$ \algRewAgg outperforms \FT even though each policy in \FT has access to the same dataset size as \algRewAgg. We conjecture that this might be because of the generalization effect across time-steps - \FT might be over-fitting as it reasons about timesteps individually. 

\section{Experiments on Search Based Planning}
\label{sec:res_search}

In this section, we extensively evaluate our approach on a set of search based planning problems for 2D planning on synthetic problems and more realistic 4D nonholonomic path planning problems encountered by UAVs flying at various speed regimes. We choose a wide variety of world distributions ranging from simple and intuitive environments, chosen to highlight the importance of exploiting environment structure in motion planning, to complex, heterogenous environments for analyzing scalability and robustness. We also present closed loop results on a UAV flying outdoors at high speeds.

Additionally, we have developed a simple and intuitive Python based planning pipeline to serve as a backend for the Gym environment. The planning environment exposes search as a policy and makes it easy to incorporate standard machine learning libraries~\citep{2016arXiv160502688short, DBLP:journals/corr/AbadiABBCCCDDDG16} with custom planning graphs that requires only environment images as input. We use this planning pipeline to conduct all our experiments. Source code and instructions can be found via our project page at this link: \url{https://goo.gl/YXkQAC} 

\subsection{Problem Details} 
\label{sec:res_search:problem}
We first describe the 2D navigation task. Here, the world map $\world$ is a 2D binary map. The graph $\graph$ is a discrete lattice of size $200\times200$ where each node is connected to the $8$ neighbours. The robot has to plan from bottom-left to top-right of the lattice. Note that while the grid size for these problems are small, the edge evaluation for such a graph could be arbitrarily expensive in practice. For example, consider the problem of planning 2D routes for aircrafts. It is plausible to envision that the lattice resolution is $100m$ and the $200 \times 200$ lattice covers an area of $20km$. Evaluation of each edge of such a lattice requires collision checking with other dynamically moving aircrats, no-fly-zones and risk of flying over urban areas. This implies that a real time traffic control can only search a small fraction of the lattice. 

We now describe a more realistic 4D nonholonomic path planning problem on a state lattice for problems encountered by UAVs. The term \emph{nonholonomic path planning}~\citep{laumond1998guidelines} refers to the fact that certain class of dynamical systems are constrained in the range of feasible motions the robot can execute~\citep{KelNag03}. It is a common practice to approximate UAVs moving at high speeds as curvature constrained systems with unicycle dynamics~\citep{dugar2017smooth, dugar2017kappaite,Choudhury_2014_7588}. We consider the problem of path planning for such systems by planning on a state-lattice~\citep{pivtoraiko2009differentially}. We consider two classes of UAVs : an autonomous helicopter moving at speeds of $30 m/s$ and a quadrotor (DJI M100) flying at $5 m/s$.

The autonomous helicopter has a minimum radius of $50m$ and plans on a state-lattice $\graph$ of resolution $25m$. The average degree of a node is $21$. The distance between start and goal is $600m$. The world $\world$  is represented as a 3D binary grid map and a set of 3D no-fly-zones (represented as polygons with a height range). An edge evaluation requires that every state on an edge is at a clearance distnce from all obstacles. Expansion of each node takes $~1 ms$ on average. The robot is required to plan within a time budget of $500 ms$ thus corresponding to maximum of $500$ expansions. 

The quadrotor has a minimum radius of $12.5m$ and plans on a state-lattice $\graph$ of resolution $12.5m$. The average degree of a node is $9$. The distance between start and goal is $300 m$. The world is represented as a 3D binary grid map and a set of 3D no-fly-zones. Expansion of each node takes $~1 ms$ on average. The robot is required to plan within a time budget of $1000 ms$ thus corresponding to a maximum of $1000$ expansions. 

\subsection{Baseline Approaches For Search Based Planning} 
\label{sec:res_search:baseline}

\subsubsection{Motion Planning Baselines}

For 2D navigation, we compare against greedy best-first search with 2 commonly used heuristics - the euclidean distance ($\greedyEuc$) and the manhattan distance ($\greedyMan$). We also use A* algorithm as a baseline with $\greedyEuc$ heuristic. Additionally, we compare against the MHA* algorithm~\citep{aine2016multi} which has been proven to be an effective way of combining multiple, often unrelated, heuristics providing bounds on solution quality~\citep{phillips2015efficient}. We use a simplified version which expands three different heuristics in a round-robin fashion - $\left[\greedyEuc, \greedyMan, \distObs\right]$, where $\distObs$ is the euclidean distance to closest, \emph{known} obstacle cell in $\closedObsList$.

For 4D nonholonomic planning problems, we use the Dubins distance~\citep{dubins1957curves} as a heuristic.

\subsubsection{Machine Learning Baselines}

We consider two learning baselines (a) Supervised Learning (SL) with data from roll-outs with $\policyOracle$ and (b) Reinforcement Learning using evolutionary strategies (CEM) and Q-Learning (QL) with function approximation. These methods are explained in detail in Appendix~\ref{appendix:ml_baseline_search}. 

\subsection{Imitation Learning Details}
\label{sec:res_search:learning}

\subsubsection{Feature Extraction and Learner}
\label{sec:res_search:learning:feature}

The policy maps the history $\belief$ to an action $\action$ by learning a function approximation for the action value function $\hat{Q}(\belief, \action)$. The tuple $\left(\action, \belief \right)$ is mapped to a vector of features $\featureVec$. Here the history $\belief$ is represented as a concatenation of all lists, i.e $\belief_t = \{ \openList, \closedList, \closedObsList \}$. The action $\action$ is the vertex $\vertex$ to expand.

We now describe the feature extraction for 2D navigation problems. Although technically, the features for a vertex $\vertex$ should depend on the parent edge $\edge$ that leads to the vertex, we ignore this in practice and consider a vertex in isolation to calculate features. 
It is important to note that the features used must be easy to calculate (no high computational burden) and should only require information uncovered by search until that point in time(else it would count as extra expansions). We define the feature vector to be a concatenation of the two vectors i.e, $\featureVec = \left[\featureVec_{S}, \featureVec_{E}\right]$.
\emph{Search Based Features:} $\featureVec_{S}\pair{\vertex}{\belief}$. These features depend on the state of the search only and does not probe the environment
\begin{enumerate}
	\item[-] $(x_{\vertex}, y_{\vertex})$ - location of node in coordinate axis of occupancy map. 
	\item[-] $(x_{\goal}, y_{\goal})$ - location of goal in coordinate axis of occupancy map.
	\item[-] $\gVal$ - cost(number of expansions) of shortest path to start.
	\item[-] $\greedyEuc$ - Euclidean distance to goal.
	\item[-] $\greedyMan$ - Euclidean distance to goal.
	\item[-] $\treeDepth$ - Depth of $\vertex$ in the search tree so far.
\end{enumerate}
\emph{Environment Based Features:} $\featureVec_{E}\pair{\vertex}{\state}$. These features depend upon the environment uncovered so far, more specifically the vertices in $\closedObsList$. \\
\begin{enumerate}
	\item[-] $x_{OBS}, y_{OBS}, d_{OBSX}$ - coordinates and distance of closest node in $\closedObsList$ to $\vertex$
	\item[-] $x_{OBSX}, y_{OBSX}, d_{OBSX}$ - coordinates and distance of closest node in $\closedObsList$ to $\vertex$ in terms of  x-coordinate. 
	\item[-] $x_{OBSY}, y_{OBSY}, d_{OBSY}$ - coordinates and distance of closest node in $\closedObsList$ to $\vertex$ in terms of y-coordinate
\end{enumerate} 
We discuss more about alternate representations and feature extraction ideas in Appendix~\ref{appendix:representation_search}.

For the 4D planning problems, we use a slightly altered feature representation which is $8$-dimensional.
\begin{enumerate}
	\item[-] Normalized Euclidean distance to start. 
	\item[-] Normalized Euclidean distance to goal.
	\item[-] Dot product between start, vertex and goal.
	\item[-] Normalized Eubins distance to start. 
	\item[-] Normalized Eubins distance to goal. 
	\item[-] Normalized heading of vertex.
	\item[-] Normalized distance of vertex from closest obstacle.
	\item[-] Dot product between distance to obstacle and heading of vertex. 
\end{enumerate}
Such a feature representation is chosen as these terms are easy to compute and are informative in estimating the utility of expanding a vertex.

The learner is represented using a feed-forward neural network with two fully connected hidden layers containing [100, 50] units and ReLu activation. The model takes as input a feature vector $\featureVec \in \featureSpace$ for the pair $(\vertex, \state)$ and outputs a scalar cost-to-go estimate. The network is optimized using RMSProp \cite{rmsprop}. A mini-batch size of 64 and a base learning rate of 0.01 is used. The network architecture and hyper-parameters are kept constant across all environments. For experiments with the UAV, we use a random forest regression~\citep{liaw2002classification}.

\begin{figure*}[!htbp]
	\centering
	\includegraphics[width=\textwidth]{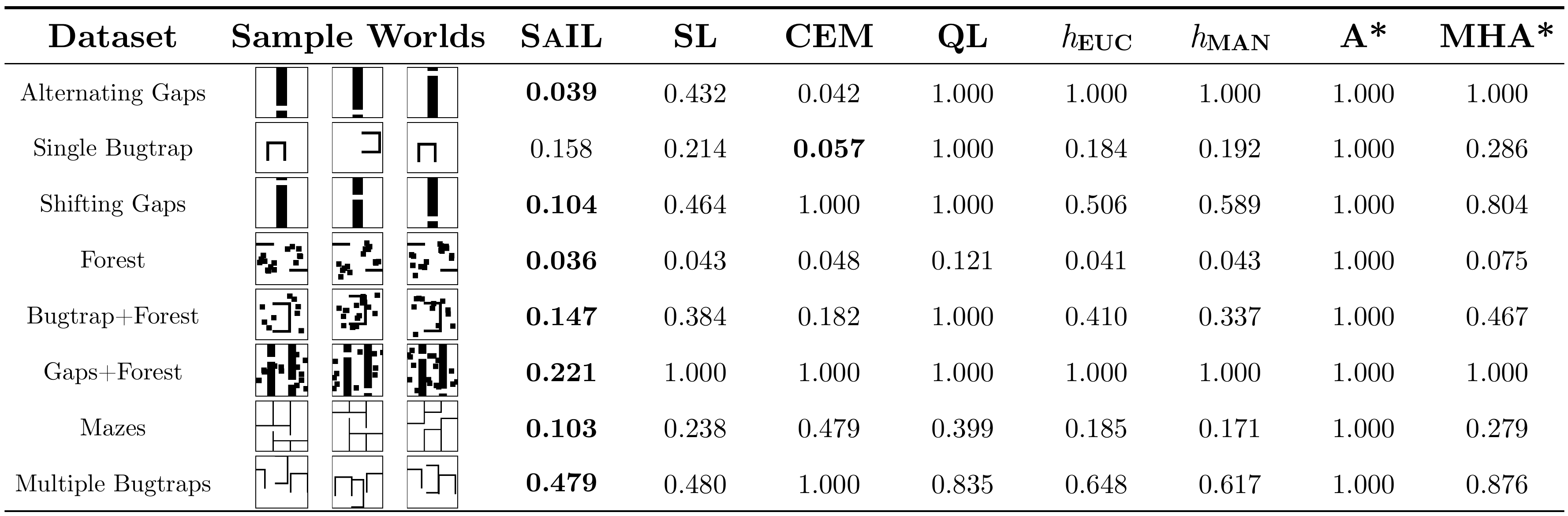}
	
	\caption{\label{fig:results_table} 
		Normalized cost of baselines on different environments (best in bold). 
		The cost corresponds to average expansions on a test set of planning problems normalized between [200, 5000] (max possible: 40000).
		Planning parameters are - map size: $200\times200$,$\planTime_{train}=1100$, $\planTime_{test}=20000$. 
		Data sizes are: train($200$), test($100$), validation($70$). 
		\algName parameters are - $\dataPointsParam: 50, \beta_{0} = 0.7$.
		\algName, CEM and QL are run for $N: 15$ iterations.
		SL uses $m:600$. }
\end{figure*}%

\begin{figure*}[!htbp]
	\centering
	\includegraphics[width=\textwidth]{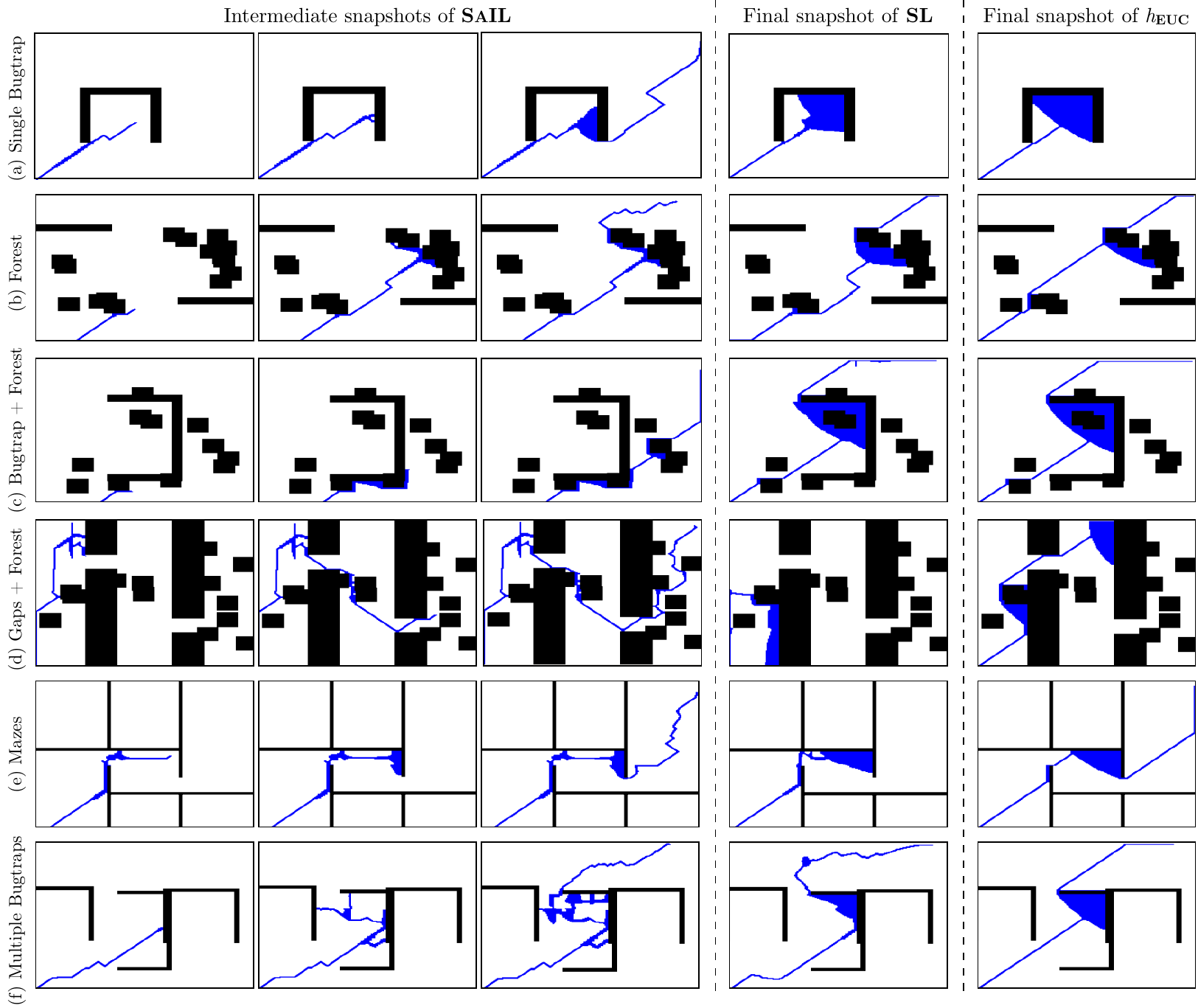}
	\caption{\label{fig:benchmark_results}
	Evolution of search frontier (expanded(blue), invalid(black), unexpanded(white)) of \algSail compared with final snapshot of supervised learning (SL) and $\greedyEuc$ across all environments. \algSail expands far less states.}
\end{figure*}%

\begin{figure*}[!htbp]
	\centering
	\includegraphics[width=\textwidth]{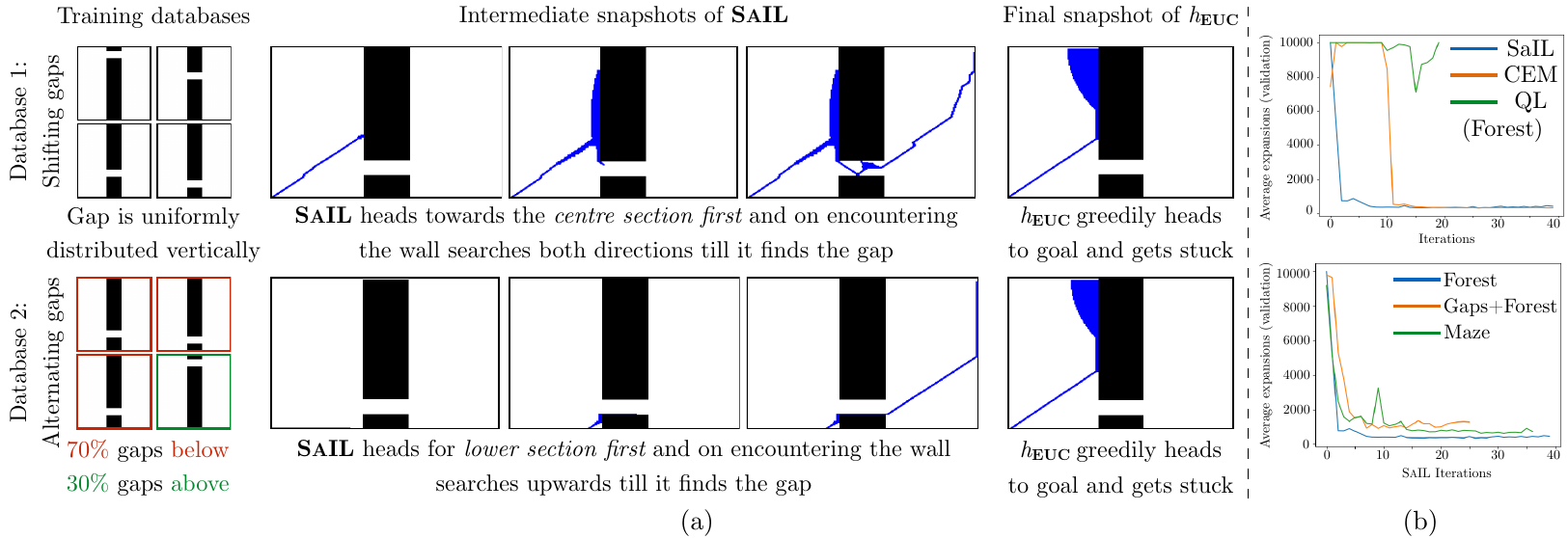}
	\caption{\label{fig:explanatory_result}
	(a) \algSail learns to adapt to different environment distributions by directing search to areas where it expects to find gaps.
	 Note \algSail does not have information about the entire environment, only the explored part. 
	 (b) On the `Forest' dataset, \algSail converges faster that CEM and QL to a good policy. \algSail also converges consistently to a good policy across environments `Gaps', `Gaps+Forest', `Maze.'}
\end{figure*}%

\subsection{Case Study B: Helicopter Path Planning}
\begin{figure*}[!htbp]
	\centering
	\includegraphics[page = 1, width=\textwidth]{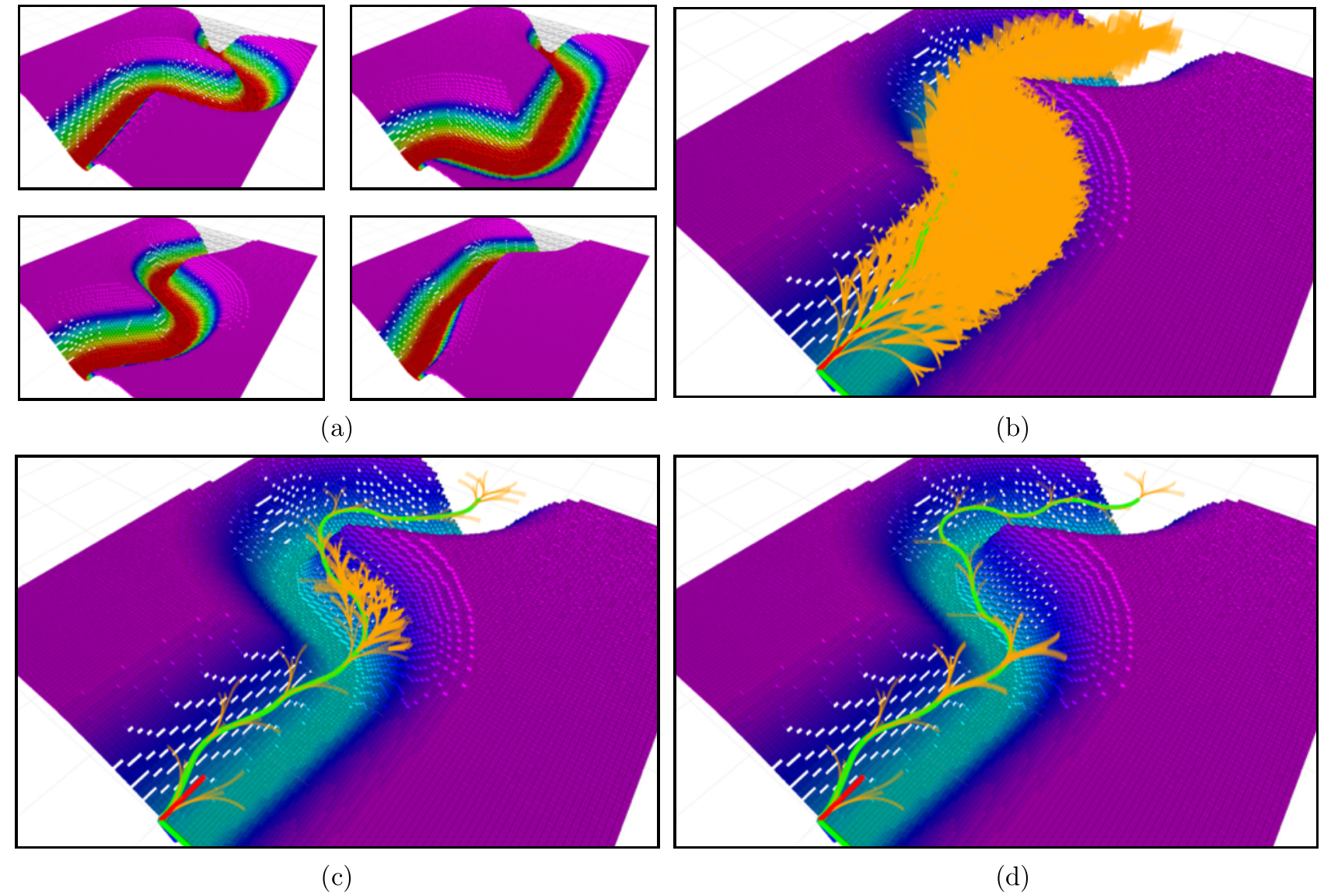}
	\caption{\label{fig:helicopter_results}
	Experiments on path planning for an autonomous helicopter in a canyon environment. The environment is motivated from planning challenges as described in~\cite{Choudhury_2014_7588}. (a) Dataset of canyon-like environments generated by a parametric distribution. (b) The search tree from A* with Dubins distance heuristic on a test environment. The start point is shown by the axes. The expanded edges are shown in yellow. The planned path is shown in green. A* expands $2531$ vertices and takes $7000 ms$. (c) The search tree for greedy search with Dubins distance heuristic. It expands $142$ vertices and takes $500 ms$. Note that most of the wasted expansions are where the tree tries to search through the canyon wall (d) \algSail  expands only $18$ vertices and takes $100 ms$. It hugs the canyon wall till it reaches the goal.
	}
\end{figure*}%

\begin{figure*}[!htbp]
	\centering
	\includegraphics[width=\textwidth]{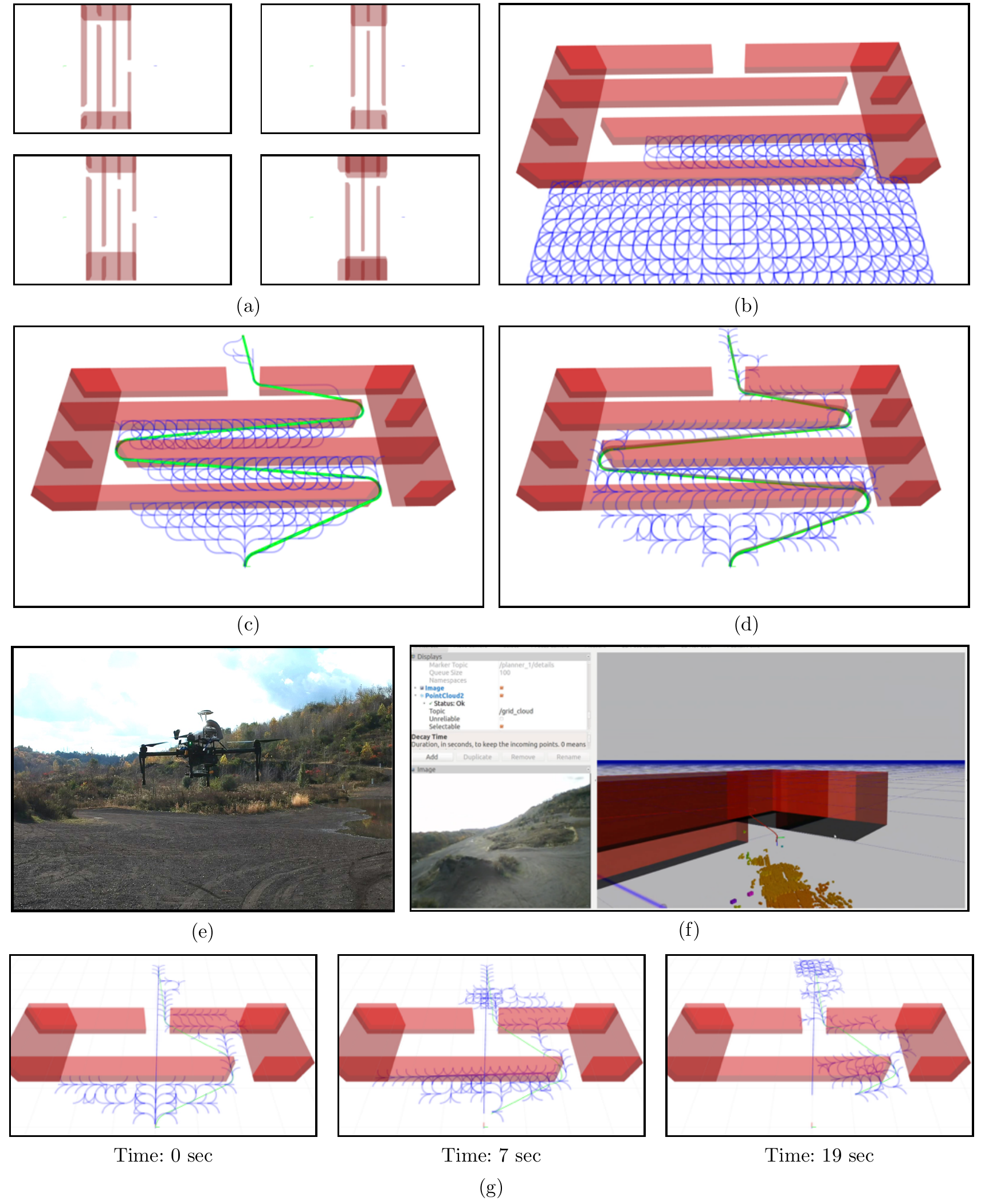}
	\caption{\label{fig:uav_results_sail}
	Experiments on path planning for a real quadrotor flying at high speed $5 m/s$ while avoiding no fly zones that represent a maze like scenario. (a) A dataset of mazes created from a parametric distribution (b) The search graph of A* on the environment. It expands $1910$ states in the $1000 ms$ time budget without finding a path (c) The greedy search with Dubins distance expands $661$ vertices and takes  $400 ms$. The remaining time is used to relax the path shown in green. (d) \algSail outperforms both and finds a path by expanding only $180$ vertices in $120 ms$. (e) The DJI M100 used for our experiments (f) An experiment where \algSail is running onboard the robot. A set of no fly zones is created and the robot has to fly through it. The robot view and onboard imagery is shown (g) A time lapse of the search tree as the robot replans while performing the mission. We can see that the search tree remains sparse through out and \algSail is always able to find a path.
	}
\end{figure*}%

\subsubsection{Dataset Creation}
The 2D world maps are created by randomly distributing geometric objects such as rectangles and circles according to hand design parametric distribution. Each environment class is representative of challenging artifacts in motion planning such as narrow corridors, local minimas, single homotopies. Hetergenous environments are created to show that the heuristic can deal with such problems as well. 

For the experiment with a real robot, a dataset of mazes was created and a real life maze was simulated using no-fly-zones.

\subsubsection{Clairoyant Oracle}
We use the backward Djikstra algorithm as the clairvoyant oracle. It is executed till it expands to all states, or till it reaches a cost-to-go limit. We note that using such an oracle for higher dimensions might be infeasible in higher dimensions and discuss remedies in Section~\ref{sec:discussion}.

\subsubsection{Practical Algorithm Implementation}
Since the size of the action space changes as more states are expanded, the \algSail algorithm requires a forward pass through the model for every action individually unlike the usual practice of using a network that outputs cost-to-go estimate for all actions in one pass as in \cite{mnih-dqn-2015}. This can get computationally demanding as the search progresses ($\bigo{N}$  in actions). Instead, we use a \emph{priority queue} as $\openList$ which sorts vertices in increasing order of the cost-to-go estimates as is usually done in search based motion planning. The vertex on the top of the list is then expanded. We use two priority queues, sorted by the learner and oracle's cost-to-go estimates respectively. This allows us to take actions in $\bigo{1}$ but forces us to freeze the \emph{Q}-value for a vertex to whenever it is inserted in $\openList$. Despite this artificial restriction over the policy class $\policySet$, we are able to learn efficient policies in practice. However, we wish to relax this requirement in future work. We also analyze the time complexity in Appendix~\ref{appendix:sail_complexity}.

\subsection{Analysis of Results}

Fig.~\ref{fig:results_table} shows the normalized evaluation cost of all algorithms on various datasets. 
Snapshots of planning with different heuristics are shown in Fig.~\ref{fig:benchmark_results} and Fig.~\ref{fig:explanatory_result} (a). Convergence of different learning algorithms are shown in Fig.~\ref{fig:explanatory_result} (b). 
We present a set of key observations to summarize these results.

\begin{observation}
\algSail has a consistently competitive performance across all datasets.
\end{observation}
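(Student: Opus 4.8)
The plan is to substantiate this observation empirically, in the same manner the paper justifies its other observations, by appealing to the aggregate results in Fig.~\ref{fig:results_table}, which report the normalized expansion cost of every algorithm across the full spectrum of 2D environment distributions. First I would read off, dataset by dataset, which method attains the lowest cost and argue that \algSail either achieves the best normalized cost (is bolded) or lies within the reported confidence margin of the best method on essentially every environment. The crucial contrast to draw out is that no single fixed heuristic is uniformly good: \greedyEuc wins where greedy descent suffices but floods in bugtrap- and maze-like distributions, while \greedyMan, A*, and MHA* trade places depending on whether the bottleneck is a local minimum, a narrow corridor, or a heterogeneous mixture. Establishing consistency therefore reduces to showing that \algSail never pays the large worst-case cost that each static heuristic incurs on the distribution adversarial to it.

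Next I would corroborate the tabulated numbers with the qualitative search-frontier evidence in Fig.~\ref{fig:benchmark_results}, which overlays the expanded, invalid, and unexpanded wavefronts of \algSail against supervised learning (SL) and \greedyEuc on each environment. The point to extract is that \algSail expands a visibly smaller frontier in every panel, and in particular that it avoids the pathological flooding that SL and \greedyEuc exhibit when the static heuristic is mismatched to the obstacle configuration. I would pair this with Fig.~\ref{fig:explanatory_result}(a) to exhibit the mechanism behind the robustness claim---the learned policy redirects search toward gaps it anticipates from the training distribution---and with Fig.~\ref{fig:explanatory_result}(b) to show that \algSail converges consistently to a good policy across \emph{Gaps}, \emph{Gaps+Forest}, and \emph{Maze}, and faster than the CEM and QL reinforcement-learning baselines on \emph{Forest}.

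Finally, to argue that the consistency extends beyond the synthetic 2D lattices to distinct dynamics and dimensionality regimes, I would invoke the higher-dimensional case studies: Fig.~\ref{fig:helicopter_results}, where on 4D nonholonomic helicopter planning \algSail expands only $18$ vertices against $2531$ for A* and $142$ for greedy Dubins search, and Fig.~\ref{fig:uav_results_sail}, where \algSail finds a path with $180$ expansions in a maze on which A* exhausts its entire budget of $1910$ expansions without success. Together these establish competitiveness across both the breadth of 2D distributions and the separate 4D curvature-constrained regime.

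The main obstacle is that ``consistently competitive'' is a distributional, worst-case-over-environments claim rather than a pointwise victory: on any individual dataset some specialized baseline may edge out \algSail, so the argument must frame competitiveness carefully in terms of the reported confidence intervals and, more importantly, in terms of \algSail never suffering the catastrophic blowups that each fixed heuristic incurs on its adversarial distribution. Tying this empirical robustness back to the adaptivity afforded by imitating the clairvoyant oracle---and to the performance guarantee of Theorem~\ref{theorem:aggrevate}---is what would elevate the observation from an enumeration of table entries into an explanation of \emph{why} a single learned policy dominates the envelope of its non-adaptive competitors.
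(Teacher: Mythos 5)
Your proposal is correct and takes essentially the same approach as the paper, which supports this observation purely by appeal to Fig.~\ref{fig:results_table}, noting that \algSail learns a better search policy than every baseline on all but one environment and maintains performance from homogeneous to heterogeneous distributions. The additional evidence you marshal (the frontier snapshots in Fig.~\ref{fig:benchmark_results}, the convergence plots in Fig.~\ref{fig:explanatory_result}, and the 4D helicopter and quadrotor case studies) is accurate but is used by the paper to support its \emph{other} observations and case studies rather than this one, and note that the search-based results table reports normalized average expansions rather than confidence intervals.
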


Fig.~\ref{fig:results_table} shows that \algSail learns a better search policy than any other baseline across all but one environments. It maintains performance from homogenous to heterogenous environments. 

\begin{observation}
\algSail has faster convergence than all learning baselines.
\end{observation}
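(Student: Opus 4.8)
The plan is to establish this observation empirically, since it concerns the convergence behavior of a training procedure rather than a closed-form quantity; the supporting evidence will be the convergence curves in Fig.~\ref{fig:explanatory_result}(b), comparing \algSail against the three learning baselines for search --- supervised learning (SL), evolutionary search (CEM), and Q-learning (QL). First I would fix a common protocol: the same per-iteration sample budget and the same held-out validation set of environments, running the iterative learners (\algSail, CEM, QL) for a matched number of iterations $\numLearnIter$. Convergence is then measured by tracking validation cost (average node expansions to reach the goal) against iteration index, reporting iterations-to-threshold as well as the asymptotic plateau so that ``faster'' is pinned to a concrete, reproducible statistic. Since SL is trained once on a fixed dataset and does not improve interactively, the claim against SL reduces to \algSail surpassing SL's fixed performance within a small number of iterations.

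The second step is to explain \emph{why} \algSail ought to dominate on this axis, which is where the theory enters. \algSail reduces the sequential decision problem to cost-sensitive regression against the clairvoyant oracle's cost-to-go $\costToGoOracle\pair{\world}{\action}$, so every sampled state yields a dense, correctly-labeled supervision signal, and Theorem~\ref{theorem:aggrevate} already shows $\valuePol{\policyLEARN}$ approaching the hallucinating-oracle value with the online-learning regret contribution decaying as $\bigo{\nicefrac{1}{\numLearnIter}}$. By contrast, CEM must discover good parameters through undirected black-box search and QL must propagate a sparse $-1$-per-expansion signal through bootstrapped targets; both therefore pay the sample-complexity penalty that \cite{sun2017deeply} formalize as at least a polynomial and possibly exponential gap in favor of imitation whenever an unbiased optimal-policy estimate is available at train time. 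The backward-Dijkstra oracle supplies exactly such an unbiased target, so the faster convergence is the expected signature of the IL-versus-RL separation rather than an artifact of tuning.

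The main obstacle will be ensuring the comparison is genuinely fair, because the algorithms consume information in structurally different ways --- CEM evaluates full rollouts, QL bootstraps from single transitions, and \algSail extracts $k$ labels per oracle call from a single solve. Equating ``amount of data'' across these is delicate, and the honest route is to match the number of environment interactions (or oracle queries) and then separately acknowledge the cheap multi-label extraction noted after Alg.~\ref{alg:sail_alg} as an additional structural advantage rather than folding it silently into the budget. A secondary subtlety is that the CEM and QL curves are noisy and initialization-sensitive, so I would average over multiple seeds and environments and display confidence intervals, confirming that the separation persists beyond sampling variability across the ``Gaps'', ``Gaps+Forest'', and ``Maze'' distributions before declaring the observation supported.
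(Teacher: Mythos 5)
Your proposal matches the paper's own support for this observation, which is purely empirical: Fig.~\ref{fig:explanatory_result}(b) showing \algSail converging by the $6^{\text{th}}$ iteration on the `Forest' dataset while CEM takes $12$ iterations and QL fails to converge, with consistent behaviour across the `Gaps', `Gaps+Forest', and `Maze' distributions. The additional theoretical framing you give (the $\bigo{\nicefrac{1}{\numLearnIter}}$ regret decay from Theorem~\ref{theorem:aggrevate} and the IL-versus-RL sample-complexity gap of \cite{sun2017deeply}) and the fair-budget caveats are sensible supplements but go beyond what the paper offers for this particular observation.
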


Fig.~\ref{fig:explanatory_result} (b) shows that on the `Forest' dataset, \algSail converges by $6^\text{th}$ iteration, while CEM takes $12$ and QL does not converge. \algSail also converges quickly (by the $8^\text{th}$ iteration) across datasets.

\begin{observation}
\algSail is able to detect and escape local minima.
\end{observation}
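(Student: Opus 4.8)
The plan is to establish this observation the same way the preceding ones are supported: by exhibiting world distributions that contain local minima and demonstrating empirically that \algSail escapes them while myopic baselines do not. A local minimum here is a concave obstacle pocket (a ``bug trap'') into which a distance-based heuristic such as $\greedyEuc$ or $\greedyMan$ pours expansions, because the distance-to-goal decreases monotonically into the pocket even though every state inside is a dead end requiring backtracking. First I would point to the trap-bearing datasets already in the benchmark suite --- notably `Gaps' and `Maze' from Fig.~\ref{fig:results_table} --- and read off the normalized expansion costs, which should show the greedy heuristics incurring a large multiplicative penalty relative to \algSail.

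Second, to argue that \algSail genuinely \emph{detects} the trap rather than merely being a sharper greedy heuristic, I would rely on the qualitative search-frontier snapshots. Fig.~\ref{fig:benchmark_results} renders the evolving expanded/invalid/unexpanded sets, and the diagnostic signature is that once the invalid edges forming the mouth of the pocket enter $\closedObsList$, the \algSail frontier stops filling the concavity and redirects along the free boundary (see also Fig.~\ref{fig:explanatory_result}(a)). This is exactly the behaviour the environment-based features $\featureVec_{E}$ of Section~\ref{sec:res_search:learning:feature} are built to enable, since they encode distances to the nearest cell in $\closedObsList$; and it is the behaviour one expects from imitating the backward-Dijkstra oracle, whose cost-to-go $\QFn{\policyOR}{t}(\state,\action)$ already assigns large values to states deep inside a dead end.

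Third, I would corroborate across the higher-dimensional case studies to rule out a $2$D-lattice artefact. Fig.~\ref{fig:helicopter_results}(c) shows greedy search wasting most of its $142$ expansions probing through the canyon wall --- a textbook local minimum --- whereas \algSail in (d) hugs the wall and reaches the goal in $18$ expansions; and Fig.~\ref{fig:uav_results_sail} shows A* failing to escape a maze within the $1000\,ms$ budget (expanding $1910$ states) while \algSail finds a path with only $180$ expansions. Together these span the $2$D, $4$D helicopter, and $4$D quadrotor domains.

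The main obstacle is establishing causation rather than correlation: a lower expansion count is by itself consistent with \algSail simply being a better greedy estimator, not with genuine trap \emph{detection}. The hard part will be isolating the moment of detection --- ideally aligning the frontier snapshots with the iteration at which the first trap-boundary edges are uncovered and showing that the policy's selection reacts to that event. Because the learner's cost-to-go depends only on information revealed so far, any such reaction must be inferred from the uncovered $\closedObsList$, so the frontier visualizations must carry the explanatory burden that a formal argument cannot for a purely empirical claim.
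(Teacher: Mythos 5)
Your proposal is correct and takes essentially the same route as the paper, which supports this observation by citing the bugtrap distribution (Fig.~\ref{fig:marquee}(b)) and the search-frontier snapshots in Fig.~\ref{fig:benchmark_results}(a) and (f) showing \algSail detecting the trap and redirecting its expansions while greedy search stalls inside it. Your additional corroboration from the helicopter and quadrotor case studies and the mechanistic reading via $\featureVec_{E}$ and the backward-Dijkstra oracle go beyond what the paper offers for this particular observation, but they do not change the underlying empirical argument.
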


A classic case in motion planning is the bugtrap (Fig.~\ref{fig:marquee} (b) ) which traps a greedy search in a local minimum. Fig.~\ref{fig:benchmark_results} (a) and Fig.~\ref{fig:benchmark_results} (f) shows that when trained on such distributions, \algSail is able to detect these artifacts and smartly escape them by exploring in different directions.

\begin{observation}
	\algSail is able to exploit the relative configuration of obstacles and environment structure. 
\end{observation}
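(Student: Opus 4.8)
The plan is to establish this observation empirically, since the claim concerns the learned behaviour of \algSail rather than a closed-form guarantee. First I would make the statement operational: ``exploiting the relative configuration of obstacles'' should mean that the vertex chosen for expansion at each step depends not merely on distance-to-goal but on the geometry of the already-revealed obstacle set $\closedObsList$, so that the search frontier deforms in response to local obstacle structure. Recall from Section~\ref{sec:res_search:learning:feature} that the environment-based features $\featureVec_{E}$ explicitly encode the coordinates and distances of the closest revealed obstacle cells; the first step is therefore to confirm that these features carry non-trivial weight in the learned value estimate, ruling out the degenerate case where \algSail collapses to a pure distance heuristic such as $\greedyEuc$.

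The core of the demonstration is a controlled comparison on environments whose efficient search strategy is dictated by obstacle geometry. I would point to the canyon dataset (Fig.~\ref{fig:helicopter_results}) and the maze dataset (Fig.~\ref{fig:uav_results_sail}), where the decisive qualitative evidence is that \algSail hugs the canyon wall and threads the maze corridors while expanding an order of magnitude fewer vertices than greedy Dubins search or A*. The complementary piece is Fig.~\ref{fig:explanatory_result}(a), which shows the frontier being steered toward predicted gaps even though only the explored portion of the world is visible to the policy --- precisely the signature of using the relative configuration of revealed obstacles rather than the full map. I would then tie these back to Fig.~\ref{fig:results_table} to confirm the behaviour yields consistently low expansion counts across the structured datasets, and invoke Lemma~\ref{lemma:hallucinating} with Theorem~\ref{theorem:aggrevate} to argue that, up to the stated regret terms, the learned policy is imitating the hallucinating (QMDP) oracle whose action is conditioned on the posterior over worlds given the revealed structure.

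The hard part will be distinguishing genuine structural exploitation from coincidental efficiency: a policy might expand few nodes simply because the greedy direction happens to align with the corridor, without truly responding to obstacle configuration. To rule this out, the key step is to exhibit adaptation \emph{across} distributions --- the same training procedure producing bug-trap-escaping behaviour on one distribution and greedy behaviour on another, as previewed in Fig.~\ref{fig:marquee}(b) and substantiated by the per-dataset frontier snapshots of Fig.~\ref{fig:benchmark_results}. Since the feature set and network architecture are held fixed across all environments, any difference in behaviour must be attributable to learned weights responding to the obstacle statistics of each distribution, which is exactly the claim. A secondary obstacle is that the frozen-\emph{Q} priority-queue implementation restricts the policy class $\policySet$; I would observe that the adaptive behaviour persists despite this restriction, so the supported conclusion is if anything conservative.
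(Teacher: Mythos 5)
Your proposal matches the paper's own justification, which is purely empirical: the observation is supported by pointing to the frontier snapshots in Fig.~\ref{fig:benchmark_results}, noting that in the maze world (panel (e)) \algSail quickly finds a wall and then concentrates the search along the axes, and that in panel (d) it focuses only on regions where there is a high probability of a gap and skids along obstacles otherwise. The extra machinery you propose --- checking feature weights, the canyon and quadrotor case studies, and the appeal to Lemma~\ref{lemma:hallucinating} and Theorem~\ref{theorem:aggrevate} --- goes beyond the paper's two-sentence support for this observation but is consistent with it and not needed to reproduce the paper's argument.
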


In a maze world with rectilinear hallways (Fig.~\ref{fig:benchmark_results} (e)), \algSail learns to quickly find a wall and then concentrate the search along the axes. In Fig.~\ref{fig:benchmark_results} (d), \algSail focuses only on regions where there is a high probability of a gap and skids along obstacles otherwise. 

\subsection{Case Study A: Adaptive behaviour of \algSail}

We take a closer look at the behaviour of \algSail in response to a change in the distribution of worlds that it is being trained on $P(\world)$. Consider the scenario illustrated in Fig.~\ref{fig:explanatory_result} (a). We create two datasets. Both datasets have a wall in the middle of the environment, with a gap in the wall. For dataset $1$, the gap can occur uniformly randomly along the y-axis. For dataset $2$, the gap either occurs with $70\%$ probability at the bottom and $30\%$ probability at the top. 

For dataset $1$, \algSail learns to approach the centre of the environment first and then search along the wall till it finds a gap. This is in response to the fact that the gap can occur anywhere and hence this is a cost efficient strategy. Contrast this to a greedy search that get stuck expanding states near the top of the wall.

For dataset $2$, \algSail learns to approach the bottom of the environment first and then search along the wall. This is in response to the gaps occuring at the bottom of the wall. The greedy search is non responsive to the change in distribution and gets stuck expanding states near the top again. 

An important application of heuristic learning is to speed up high dimension search. An application of particular relevance to us is an autonomous helicopter~\cite{Choudhury_2014_7588}. A class of environment in which the helicopter has to plan in is a canyon like environment. Since the system moves at a speed of $30 m/s$, it has to produce a plan in real-time (within $200 ms$) otherwise it risks reaching states from which collision is inevitable. 

We use \algSail to learn a heuristic that guides search in such environments. We collect a dataset by generating canyons using a parametric distribution as showing in Fig.~\ref{fig:helicopter_results} (a). A lattice, with the specifications described in Section~\ref{sec:res_search:problem} is created. As a baseline, we run A* with Dubins distance as the heuristic on this problem. As shown in Fig.~\ref{fig:helicopter_results} (b), this ends up expanding a large number of vertices ($2531$). This is because the Dubins distance is not the optimal cost to do. The under-estimation of this heuristic results in a large number of vertices being expanded and hence a long planning time ($7000 ms$).

We also run a greedy search using the Dubins distance as a heuristic. We see that for these kind of environments, greedy search performs pretty good - the number of vertices expanded is $142$ and planning time is $500 ms$. However, the greedy search expends search effort trying to search for a tunnel through the canyon. 

\algSail has much better performance than either of these baselines. It is able to learn a heuristic that expands only $18$ vertices with a search time of $100 ms$. The features used by \algSail are minimialistic and are enlisted in Section~\ref{sec:res_search:learning:feature}. Among those features are the Dubins distance to the goal and the direction vector to the nearest obstacle. By examining the search tree produced by \algSail, we observe that it learns a trade-off between following the Dubins distance heuristic and not expanding states that are pointing into the canyon wall (as such states would not result in a feasible path eventually).

\subsection{Case Study C: Quadrotor Path Planning}

We also applied this approach to a real quadrotor which has to navigate in an environment at high speed $5 m/s$ while avoiding no fly zones. No fly zones can result from areas that a UAV cannot fly over because of risks to property or from other vehicles in the area. These no fly zones can be arbitrary in complexity thus creating artifacts such as a maze as shown in Fig~\ref{fig:uav_results_sail}.

We create a dataset of such mazes by means of a parametric distribution as shown in Fig~\ref{fig:uav_results_sail} (a). We give a time budget of $1000 ms$ for planners to solve the problem. A* with Dubins heuristic is unable to solve the problem in the time limit as shown in Fig~\ref{fig:uav_results_sail} (b). This is because the Dubins distance vastly under-estimates the distance to the goal in this environment. A* expands $1910$ states before being terminated. 

Greedy search with Dubins heuristic is able to find a path after $661$ expansions within the time budget (in $400 ms$). The remaining time is spent relaxing the path found. The greedy behaviour is beneficial in this environment because it results in a wall following like behaviour. However the algorithm wastes search effort expanding states perpendicular to the wall which would lead to inevitable collision. 

\algSail outperforms both algorithms by finding a path in $180$ expansions (in $120 ms$). The remaining time is spent relaxing the path. As can be seen for the search graph, it focuses on expanding paths perpendicular to the wall. It learns to not expand vertices that point into the wall since the oracle shows the cost to go of such nodes to be $\infty$. 

We also evaluated \algSail on board a DJI M100 quadrotor equipped with a TX2 computer. We created a synthetic maze with no fly zones and commanded the robot to fly through it (Fig~\ref{fig:uav_results_sail} (e-f) ). \algSail is able to find a path expanding a sparse number of vertices. As the robot follows the path, the algorithm is able to consistently replan and find a path consistently without expanding too many states (Fig~\ref{fig:uav_results_sail} (g) ).


\section{Discussion and Future Work}
\label{sec:discussion}

We presented a novel data-driven imitation learning framework to learning planning policies. Our approach trains a policy to imitate a clairvoyant oracle that has full information about the world and can compute optimal planning decisions. We examined two problem domains - informative path planning and search based planning. We evaluated our approach in both these domains and showed that the learnt policy can outperform state-of-the-art approaches. We now discuss a set of relevant questions and directions for future work.

\subsection{When does this framework lead to good policies? What are some failure cases?}
\label{sec:discussion:success_failures}

\begin{figure*}[t]
    \centering
    \includegraphics[width=\textwidth]{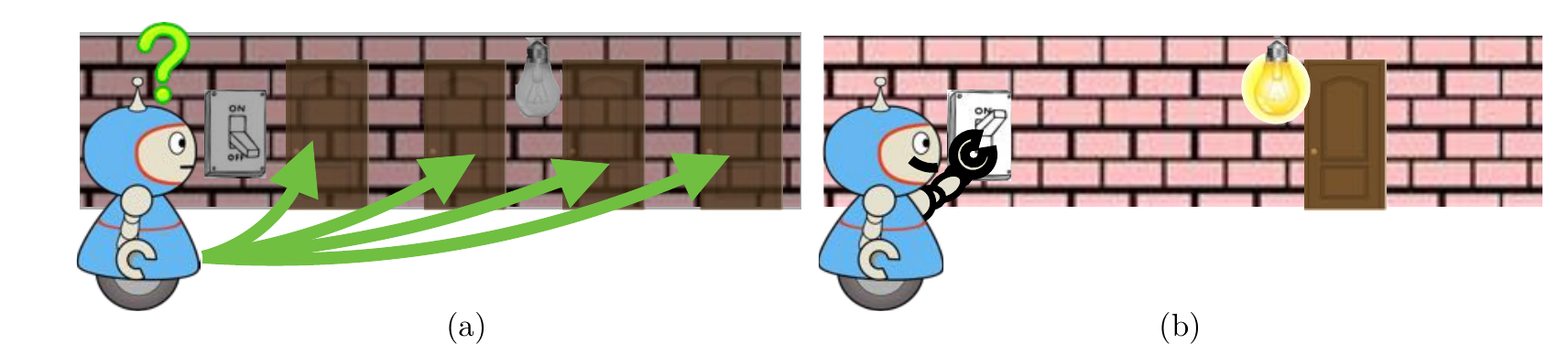}
    \caption{%
    The robot in a dark room problem. The robot is uncertain about the location of a door and the only way to collapse that uncertainty is to pull a light switch. (a) A clairvoyant oracle is not incentivized to flip the switch and hence the robot does not learn to collapse uncertainty (b) The optimal POMDP policy would be to flip the switch and then head for the door.}
    \label{fig:clairvoyant_failure_case}
\end{figure*}%

MDP framework provides an elegant way of posing problems where the complete state of the problem space is known. The value of an action for a given state in an MDP is given by equation \ref{eq:hallucinating_oracle_repeat}.
\begin{equation}
\label{eq:hallucinating_oracle_repeat}
\QFn{\pi}{t}(\state, \action) = \rewardFn{\state}{\action} + \expect{\state' \sim P(\state' \mid \state, \action)}{
\valueFn{\pi}{t-1}(\state')} 
\end{equation}
\begin{equation}
	\valueFn{\pi}{t}(\state) = \sum\limits_{i=t}^{T} \expect{\state_t \sim P(\state_{i}| \policy, i,\state)}{\rewardFn{\state_i}{\policy(\state_i)}}
\end{equation}

The optimal MDP policy maximizes the expected cumulative reward, i.e $\policy^*(\state_t) \in \argmaxprob{\policy \in \policySet} \valueFn{\pi}{t}(\state_t)$.

However there are 2 major challenges that POMDP solvers face-
\begin{itemize}
	\item Computing the expectation over the state space. Since the state space of most of the problems worth solving is large, computing an expectation over such state space needs a large number, making it expensive to evaluate online.
	\item  Keep track of evolving uncertainty about the state space over the planning horizon.  
\end{itemize}

Our approach solves the first challenge through data driven techniques - the MDP solvers are used over sampled MDP problems to train a policy on the expected distribution of problems. The hallucinating oracle is similar in nature to a QMDP algorithm~\citep{Littman95learningpolicies}, an effective approximate solution to POMDPs, which takes the best action on the current posterior. However, while QMDP requires maintaining an explicit posterior, our framework does not. QMDP has been shown to be very successful where explicit information gathering behaviour is not required~\citep{Koval-RSS-14,javdani2015shared} - the belief collapses irrespective of the action. 
Hence this optimization assumes a fixed belief and does not account for evolving belief over time, (which is challenge 2 for POMDP's). This implies there is no motivation for the MDP solver and hence the learnt policy to change the belief.

These kind of methods work quite well in POMDP problems where the required changes in belief can be attained by actions that are rewarding as well. This is very apt in the problem we address - as the set of actions are constrained to candidate nodes in the open list, no single action is very informative. It suffices to expand the best node under the current belief and continue to update the belief as the open list evolves. And there exists no action that is not rewarding while reducing the uncertainty. We note that this is not true for all learning in planning paradigms. For example, when learning to collision check~\citep{choudhury2017active}, a policy that actively reduces uncertainty about the world is effective.  

To illustrate the failure case, we present a simple scenario as shown in Fig~\ref{fig:clairvoyant_failure_case}. We have a `trapped robot' whose task is to escape from a room, i.e. it gets a reward for escaping and penalization for staying in the room. The room is dark, i.e. the robot cannot observe the location of the door. It can performa actions such as moving in the room. It can also perform an action of flicking on the light switch. On performing such an action, it receives an observation containing the location of the door. An optimal POMDP policy would always choose this action, collapse uncertainty about the door location and subsequently head straight for the door. However, imitation of clairvoyant oracles do not provide such behaviours. The oracle, at training, always guides the robot towards the door to maximize reward and is not incentivized to flip the light switch. The policy learns a blind search pattern
which takes a long time to find the door.

For such POMDP problems, one way forward would be to incentivize the oracle at train time to reduce the uncertainty as suggested by the POMDP-lite approach~\cite{chen2016pomdp}. While POMDP-lite quantifies uncertainty reduction as L1-norm of the belief change, this can be hard to compute for the space of world maps. Using approximations to this belief change would be an interesting direction of future work.

\subsection{How can we incorporate solution cost in addition to search effort in this framework?}
\label{sec:discussion:anytime_search}

While our framework ignores the cost of a solution, we note that finding feasible solutions quickly is the core motivation of a number of high dimensional planning problems which have historically resorted to sampling based approaches~\citep{kuffner2000rrt}. 
Hence, one can apply our framework to such problems to produce potentially faster solutions.
We also note that when planning on locally connected lattices for geometric planning problems, minimizing the number of expansions generally leads to near-optimal solutions (unit cost for each valid edge). 

However, if we really cared about near optimal solutions, the framework of Multi-heuristic A* (MHA*)~\citep{aine2016multi} can be easily adopted. In such a framework, any heuristic function~\citep{narayanan2015improved} can be used in tandem with an anchored search which uses an inflated admissible heuristic. Hence we can simply replace our $\search$ function with MHA*.

The bi-objective criteria of solution cost and search effort is best reasoned about in the paradigm of \emph{anytime planning}. In this paradigm, an algorithm traces out the \emph{pareto-frontier}~\citep{choudhury2016pareto} - finds a feasible solution quickly and iteratively improves it. In this paradigm, \algSail trains a heuristic that displays a behaviour we would expect in the first iteration. A direction of future work would be to learn \emph{anytime heuristics} that minimize search effort initially to and solution cost eventually.

\subsection{Can we generalize this framework to sampling based planners?}

The \algSail framework defines $\search$ in a very general way - the underlying implicit graph can also be a tree and the expansion operation can be a local steering operation akin to the framework of EST~\cite{hsu1999placing}. The oracle design is an open question - a plausible oracle is growing a backward tree from the goal and using a k-NN value function approximator. Another paradigm to consider is when the $\expand$ operation is a call to a \emph{sampler}. For example, the framework in Randomized A* (RA*)~\cite{diankov2007randomized} proceeds by selecting a node of the search tree using some criteria and sampling around it. 

Recently, \cite{ichter2017learning} proposed a framework for learning sampling distributions from optimal paths during training by using a conditional variational auto-encoder (CVAE). However, in this framework sampling and planning are decoupled, i.e. the sampling policy learns a good stationary distribution from which samples are generated and provided to the planner. Hence the planner does not adapt during the planning cycle. Such a stationary distribution can be very hard to learn as directly predicting the optimal path requires conditioning on a lot of information about the environment. 

\algSail can be extended to learn sampling policies that address this problem. The CVAE can condition on the state of the search (similar to the feature vector used by \algSail). The labels can be obtained by a backward tree from the goal grown during training. The iterative learning process of \algSail will ensure that the CVAE is trained on the distribution of search state actually encountered rather than simply using the optimal path.

\subsection{Incorporating noise in transition and observation for IPP problems}
\label{sec:discussion:noisy}
The informative path planning problem that we defined in Section~\ref{sec:background:ipp:framework} and subsequently mapped to a POMDP in Section~\ref{sec:problem_formulation:ipp_mapping} consider a deterministic measurement and utility function. This can always be relaxed in an ad-hoc way: the occupancy map used to represent $\belief_t$ is essentially a Bayes' filter and can handle noisy observations, and the policy can also handle motion uncertainty since during the training phase, data collected in the initial stages is from random motions of the learner. 

However, if one is to formally incorporate noise, the mapping needs to be re-examined. The crucial change arises from the fact that the utility function $\utilityFnDef$ is no longer dependent only on the sequence of vertices visited $\{ \vertex_i \}_{i=1}^t$ and the world $\world$. It also depends on the actual observations received $\{ \meas_i \}_{i=1}^t$, i.e. the utility function needs to be redefined to have the following arguments $\utilityFn{\{ \vertex_i, \meas_i \}_{i=1}^t}{\world}$.

To provide a concrete example, we re-examine our application of 3D reconstruction of objects in the environment presented in Section.~\ref{sec:res_ipp:problem}. We had assumed that each vertex $\vertex_i$ in a path $\Path$ is associated with a unique measurement $\meas_i$. The union of all measurements defined the coverage map $C$ which in turn defined the utility. Since this unique measurement assumption is no longer true, the coverage map has to explicitly consider the actual measurements received. This results in a utility function $\utilityFn{\{ \vertex_i, \meas_i \}_{i=1}^t}{\world}$ that depends on measurements as well.

Keeping this important change in mind, we redefine the mapping to POMDP. 
The state is defined to contain all information that is required to define the reward, observation and transition functions. 
Let the state be the set of nodes visited \emph{and measurements received} as well as the underlying world, $\state_t = \{ \{ \vertex_i, \meas_i \}_{i=1}^t, \world\}$.
At the start of an episode, a world is sampled from a prior distribution $\world \sim P(\world)$ along with a graph $\graph \sim P(\graph)$. 
The initial state is assigned by setting $\state_1 = \{ \vertex_1, \meas_1, \world\}$.
Note that the state $\state_t$ is partially observable due to the hidden world map $\world$.

We define the action $\action_t$ to be the next \emph{desired node} to visit. The reward function is now defined as a function of the state $\state_t$ only as the marginal gain in utility on receiving $\{\vertex_t, \meas_t\}$. The marginal gain of the utility function $\utilityFnDef$ is $\marginalGain{\{ \vertex_t, \meas_t \} \mid \{ \vertex_t, \meas_t \}_{i=1}^{t-1}}{\world} = \utilityFn{ \{ \vertex_i, \meas_i \}_{i=1}^t }{\world} - \utilityFn{ \{ \vertex_i, \meas_i \}_{i=1}^{t-1} }{\world}$. 
Additionally, the reward is set to $-\infty$ whenever the cost budget is violated, i.e. $\rewardFn{\state}{\action} = $
\begin{equation}
	\begin{cases}
	\marginalGain{\{ \vertex_t, \meas_t \} \mid \{ \vertex_t, \meas_t \}_{i=1}^{t-1}}{\world} &\text{if $\costFn{\{ \vertex_i\}_{i=1}^t}{\world} \leq \costBudget $} \\
	-\infty & \text{otherwise}
	\end{cases}
\end{equation}

The state transition function, $\transFn{\state}{\action}{\state'}$, is defined by the execution model $P(\vertex_{t+1} | \action_t, \world)$ and the measurement model $P(\meas_{t+1} | \vertex_{t+1}, \world)$. Given state $\state_t$, the observation is now deterministic because it is contained in the state, i.e. $\obs_t = \meas_t$.

\section{Acknowledgement}
We would like to thank Silvio Maeta, Vishal Dugar and Brian McAllister for help with flight test results on the UAV.
We would like to thank Shushman Choudhury for insightful discussions and feedback on the paper. 
We  would like to acknowledge the support from ONR grant N000141310821 and NASA contract NNX17CS56C.

\bibliography{all}
\bibliographystyle{plainnat}

\begin{appendices}

\section{Proof of Lemma 1}
\label{appendix:lemma_hallucinating}

\begin{lemma*}
The \textbf{offline} imitation of \textbf{clairvoyant} oracle (\ref{eq:imitateClairvoyantOracle}) is equivalent to sampling \textbf{online} a world from the posterior distribution and executing a \textbf{hallucinating} oracle as shown

\begin{equation*}
\policyLEARN = \argmax\limits_{\policy \in \policySet} \expect{
\substack{
t\sim U(1:T), \\
\belief_t \sim P(\belief | \policy, t)}}
{\QFnBel{\policyORBel}{T-t+1}(\belief_t, \policy(\belief_t))}
\end{equation*}
\end{lemma*}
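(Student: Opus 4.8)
The plan is to rewrite the offline clairvoyant objective via the law of total expectation, peeling off the state variable so that the outer expectation is over histories alone while the inner conditional expectation collapses exactly into the hallucinating oracle value of Definition~\ref{def:halluc}.

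First, I would note that both the clairvoyant objective~(\ref{eq:imitateClairvoyantOracle}) and the target hallucinating objective share the same distribution over timesteps $t \sim U(1:T)$ and the same roll-in distribution over histories $P(\belief | \policy, t)$. Hence it suffices to establish, for each fixed $t$ and each $\policy$, the pointwise identity
\begin{equation*}
\expect{\state_t, \belief_t \sim P(\state, \belief | \policy, t)}{\QFn{\policyOR}{T-t+1}(\state_t, \policy(\belief_t))} = \expect{\belief_t \sim P(\belief | \policy, t)}{\QFnBel{\policyORBel}{T-t+1}(\belief_t, \policy(\belief_t))}.
\end{equation*}
Second, I would factor the joint roll-in distribution as $P(\state_t, \belief_t \mid \policy, t) = P(\belief_t \mid \policy, t)\, P(\state_t \mid \belief_t, \policy, t)$ and apply the tower property to condition the state expectation on the history, giving
\begin{equation*}
\expect{\state_t, \belief_t}{\QFn{\policyOR}{T-t+1}(\state_t, \policy(\belief_t))} = \expect{\belief_t \sim P(\belief | \policy, t)}{ \expect{\state_t \sim P(\state \mid \belief_t, \policy, t)}{\QFn{\policyOR}{T-t+1}(\state_t, \policy(\belief_t))}}.
\end{equation*}

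The crucial step, and the main obstacle, is to show the inner posterior is policy-independent, i.e. $P(\state_t \mid \belief_t, \policy, t) = P(\state_t \mid \belief_t)$. This is the statement that the action--observation history $\belief_t$ is a sufficient statistic for the current state in a POMDP. I would establish it by unrolling the recursive Bayes' filter of Section~\ref{sec:problem_formulation}: once the past actions and observations are fixed by conditioning on $\belief_t$, the belief is determined entirely by the prior $P(\state)$, the transition model $\transFnDef$, and the observation model $\obsFnDef$. Concretely, in the joint factorization the policy contributes factors $\policy(\action_i \mid \belief_i)$ that do not depend on $\state_t$; when forming $P(\state_t \mid \belief_t, \policy, t)$ by dividing the joint by $P(\belief_t \mid \policy, t)$, these factors appear identically in the numerator and the normalizer and therefore cancel. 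Intuitively, $\policy$ governs which histories are reached --- encoded in $P(\belief \mid \policy, t)$ --- but never the belief attached to any given history.

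Finally, substituting this policy-independence identity and recognizing the resulting inner expectation as exactly $\QFnBel{\policyORBel}{T-t+1}(\belief_t, \policy(\belief_t))$ from the definition~(\ref{eq:hallucinating_oracle}) yields the desired pointwise identity. Taking the expectation over $t \sim U(1:T)$ and then the $\argmax$ over $\policy \in \policySet$ on both sides gives the claimed equivalence of the two optimization problems. The only care required beyond this is confirming the cancellation argument handles stochastic policies (where the $\policy(\action_i \mid \belief_i)$ factors are genuine probabilities rather than indicators), which the argument above already covers since those factors are state-independent.
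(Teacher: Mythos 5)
Your proof is correct and follows essentially the same route as the paper's: both reduce the claim to the factorization $P(\state_t, \belief_t \mid \policy, t) = P(\belief_t \mid \policy, t)\,P(\state_t \mid \belief_t)$ (you go from the clairvoyant objective to the hallucinating one, the paper goes in the reverse direction) and then identify the inner conditional expectation with the hallucinating oracle value of Definition~\ref{def:halluc}. The only substantive difference is that you explicitly justify the policy-independence $P(\state_t \mid \belief_t, \policy, t) = P(\state_t \mid \belief_t)$ by cancellation of the state-independent policy factors, a step the paper's proof uses implicitly without comment.
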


\begin{proof}
We will define two loss functions on the policy. Let $\lossFnPolicyDef_1 (\policy)$ be the loss function corresponding to clairvoyant oracle, i.e.

\begin{equation}
	\lossFnPolicyDef_1 (\policy) = \expect{\substack{t\sim U(1:T), \\
\state_t, \belief_t \sim P(\state, \belief | \policy, t)}}{ \QFn{\policyOR}{T-t+1}(\state_t, \policy(\belief_t)) }
\end{equation}

Let $\lossFnPolicyDef_2 (\policy)$ be the loss function corresponding to the hallucinating oracle, i.e.
\begin{equation}
\label{eq:loss2}
	\lossFnPolicyDef_2 (\policy) = \expect{
\substack{
t\sim U(1:T), \\
\belief_t \sim P(\belief | \policy, t)}}
{\QFnBel{\policyORBel}{T-t+1}(\belief_t, \policy(\belief_t))}
\end{equation}

Substituting (\ref{eq:hallucinating_oracle}) in (\ref{eq:loss2}) we have

\begin{equation*}
\begin{aligned}
&\expect{
\substack{
t\sim U(1:T), \\
\belief_t \sim P(\belief | \policy, t)}}
{\QFnBel{\policyORBel}{T-t+1}(\belief_t, \policy(\belief_t))} \\
&= \expect{
\substack{
t\sim U(1:T), \\
\belief_t \sim P(\belief | \policy, t)}}
{ \expect{\state_t \sim P(\state_t | \belief_t)}{\QFn{\policyOR}{T-t+1}(\state_t, \policy(\belief_t))}  } \\
&= \expect{t\sim U(1:T)}
{ \sum\limits_{\belief_t, \state_t} P(\belief_t | \policy, t) P(\state_t | \belief_t) \QFn{\policyOR}{T-t+1}(\state_t, \policy(\belief_t))  } \\
&= \expect{t\sim U(1:T)}
{ \sum\limits_{\belief_t, \state_t} P(\state_t, \belief_t | \policy, t) \QFn{\policyOR}{T-t+1}(\state_t, \policy(\belief_t))  } \\
&= \expect{\substack{t\sim U(1:T), \\
\state_t, \belief_t \sim P(\state, \belief | \policy, t)}}{ \QFn{\policyOR}{T-t+1}(\state_t, \policy(\belief_t)) }
\end{aligned}
\end{equation*}

Hence $\lossFnPolicyDef_1 (\policy) = \lossFnPolicyDef_2 (\policy)$.

\end{proof}

\section{Proof of Theorem 1}
\label{appendix:theorem_ft}

We begin with a statement of the \emph{performance difference lemma} that is useful to bound the change in total reward-to-go. This general result bounds the difference in performance of any two policies. 
\begin{lemma}
\label{lemma:pd}
Let $\policy$ and $\policy'$ be any two policies and denote $\VBel_t'$ and $\QBel_t'$ be the t-step value function and action value function of policy $\policy'$ respectively, then:
\begin{equation*}
\begin{aligned}
	&\valuePol{\policy} - \valuePol{\policy'} \\
	& = \sum\limits_{t=1}^T \expect{\belief_t \sim P(\belief | \policy, t)}{ \QFnBel{\policy'}{T-t+1}(\belief_t, \policy(\belief_t)) - \valueFnBel{\policy'}{T-t+1}(\belief_t) }
\end{aligned}
\end{equation*}
\end{lemma}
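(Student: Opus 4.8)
The plan is to prove this \emph{performance difference lemma} by a telescoping argument over a family of hybrid policies that interpolate between $\policy'$ and $\policy$. For each $k \in \{0,1,\dots,T\}$ I would consider the (non-stationary) policy that follows $\policy$ at timesteps $1,\dots,k$ and then switches to $\policy'$ at timesteps $k+1,\dots,T$, and denote its total expected reward by $J_k$. By construction the two endpoints of this family recover the quantities of interest: $J_0 = \valuePol{\policy'}$ (follow $\policy'$ throughout) and $J_T = \valuePol{\policy}$ (follow $\policy$ throughout). Hence $\valuePol{\policy} - \valuePol{\policy'} = J_T - J_0 = \sum_{k=1}^{T}\left(J_k - J_{k-1}\right)$, and it remains to identify each consecutive difference with the corresponding advantage term on the right-hand side.

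The key step is to evaluate $J_k - J_{k-1}$. Both hybrid policies execute $\policy$ for the first $k-1$ timesteps, so they induce \emph{the same} history distribution $\belief_k \sim P(\belief \mid \policy, k)$ at step $k$ and accrue identical expected reward over steps $1,\dots,k-1$; these common contributions cancel in the difference. The two policies diverge only from step $k$ onward: the policy underlying $J_k$ takes action $\policy(\belief_k)$ at step $k$ and then follows $\policy'$ for the remaining $T-k$ steps, contributing reward-to-go $\QFnBel{\policy'}{T-k+1}(\belief_k, \policy(\belief_k))$, whereas the policy underlying $J_{k-1}$ follows $\policy'$ starting already at step $k$, contributing $\valueFnBel{\policy'}{T-k+1}(\belief_k)$. (Here the remaining horizon of steps $k,\dots,T$ has length $T-k+1$, which fixes the subscripts on $\tilde V$ and $\tilde Q$.) Taking expectations over the shared roll-in therefore gives
\begin{equation*}
J_k - J_{k-1} = \expect{\belief_k \sim P(\belief \mid \policy, k)}{ \QFnBel{\policy'}{T-k+1}(\belief_k, \policy(\belief_k)) - \valueFnBel{\policy'}{T-k+1}(\belief_k) },
\end{equation*}
and summing over $k = 1,\dots,T$ (relabelling the index as $t$) telescopes to exactly the claimed expression.

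I expect the main obstacle to be the careful bookkeeping in the partially observable, history-based setting: one must verify that the pre-$k$ rewards truly cancel — which hinges on both hybrid policies sharing the same roll-in $\policy$ and hence the same joint distribution over $(\state_k,\belief_k)$ — and that the reward-to-go of the tail is faithfully captured by $\QFnBel{\policy'}{T-k+1}$ and $\valueFnBel{\policy'}{T-k+1}$ evaluated at $\belief_k$, using the recursive definition of $\QFnBel{\policy'}{t}$ in (\ref{eq:pomdp:q}). A clean way to make the cancellation rigorous is to write each $J_k$ explicitly as the sum of an expected prefix reward (depending only on the roll-in through step $k-1$) and the expected tail value measured from $\belief_k$, so that only the tail terms survive the subtraction.
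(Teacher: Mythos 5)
Your proposal is exactly the paper's argument: the paper also introduces the hybrid policies $\policy_t$ that execute $\policy$ for the first $t$ steps and then switch to $\policy'$, notes $\valuePol{\policy}=\valuePol{\policy_T}$ and $\valuePol{\policy'}=\valuePol{\policy_0}$, and telescopes the differences into the stated advantage terms. Your write-up merely spells out the cancellation of the shared prefix rewards in more detail than the paper does, so it is correct and essentially identical in approach.
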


\begin{proof}
Let $\policy_t$ be the policy that executes $\policy$ in first t timesteps and then switches to $\policy'$ fromt $t+1$ to $T$. We then have $\valuePol{\policy} = \valuePol{\policy_T}$ and $\valuePol{\policy'}=\valuePol{\policy_0}$. Thus:
\begin{equation*}
\begin{aligned}
	&\valuePol{\policy} - \valuePol{\policy'} \\
	& = \sum\limits_{t=1}^T \left[ \valuePol{\policy_t} - \valuePol{\policy_{t-1}} \right] \\
	& = \sum\limits_{t=1}^T \expect{\belief_t \sim P(\belief | \policy, t)}{ \QFnBel{\policy'}{T-t+1}(\belief_t, \policy(\belief_t)) - \valueFnBel{\policy'}{T-t+1}(\belief_t) }
\end{aligned}
\end{equation*}
\end{proof}

We now state the theorem and the proof
\begin{theorem*}
\FT has the following guarantee
\begin{equation*}
\begin{aligned}
  \valuePol{\policyLEARN} \geq \valuePol{\policyORBel} -2 T \sqrt{\actionSet \; \errclass} + T\errhor
\end{aligned}
\end{equation*}
where $\errclass$ is the regression error of the learner, $\errhor$ is the local oracle suboptimality.
\end{theorem*}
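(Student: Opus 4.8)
The plan is to apply the performance difference lemma (Lemma~\ref{lemma:pd}) with the learner $\policyLEARN$ as the first policy and the hallucinating oracle $\policyORBel$ as the reference policy, and then to bound the resulting per-timestep terms. By Lemma~\ref{lemma:pd},
\begin{equation*}
\valuePol{\policyLEARN} - \valuePol{\policyORBel} = \sum_{t=1}^T \expect{\belief_t \sim P(\belief | \policyLEARN, t)}{ \QFnBel{\policyORBel}{T-t+1}(\belief_t, \policyLEARN(\belief_t)) - \valueFnBel{\policyORBel}{T-t+1}(\belief_t) }.
\end{equation*}
The key structural fact I would exploit is that \FT trains each stage policy $\policyLEARN^t$ on exactly the roll-in distribution $P(\belief | \policyLEARN, t)$ induced by the earlier stages; hence the expectations above are taken against the very distribution the learner was trained on, so the regression quantities introduced below apply on-policy with no distribution-mismatch penalty. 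This is precisely what makes the non-stationary \FT analysis clean relative to the stationary case.

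Next I would split each summand by inserting the greedy value $\max_{\action} \QFnBel{\policyORBel}{T-t+1}(\belief_t, \action)$, writing each term as the sum of (A) the learner's action-selection regret $\QFnBel{\policyORBel}{T-t+1}(\belief_t, \policyLEARN(\belief_t)) - \max_{\action} \QFnBel{\policyORBel}{T-t+1}(\belief_t, \action)$ and (B) the local oracle gap $\max_{\action} \QFnBel{\policyORBel}{T-t+1}(\belief_t, \action) - \valueFnBel{\policyORBel}{T-t+1}(\belief_t)$. I would absorb the timestep-averaged contribution of (B) into the definition of the local oracle suboptimality, setting
\begin{equation*}
T\errhor = \sum_{t=1}^T \expect{\belief_t \sim P(\belief|\policyLEARN,t)}{\max_{\action} \QFnBel{\policyORBel}{T-t+1}(\belief_t,\action) - \valueFnBel{\policyORBel}{T-t+1}(\belief_t)},
\end{equation*}
which captures exactly the QMDP-style mismatch between the one-step look-ahead value of $\policyORBel$ and the value it realizes when actually executed online.

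The core estimate is bounding (A) through the standard reduction from cost-sensitive classification to regression. Since $\policyLEARN$ selects the action maximizing its learned estimate $\hat{Q}$, for the true maximizer $\action^*$ and the learner's choice $\hat\action$ the cross term $\hat{Q}(\belief_t,\action^*) - \hat{Q}(\belief_t,\hat\action) \leq 0$, so $\QFnBel{\policyORBel}{T-t+1}(\belief_t, \action^*) - \QFnBel{\policyORBel}{T-t+1}(\belief_t, \hat\action) \leq \abs{\QFnBel{\policyORBel}{T-t+1}(\belief_t,\action^*) - \hat{Q}(\belief_t,\action^*)} + \abs{\hat{Q}(\belief_t,\hat\action) - \QFnBel{\policyORBel}{T-t+1}(\belief_t,\hat\action)}$. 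Each absolute error is at most $\sqrt{\abs{\actionSet}\,\errclass}$ by Cauchy--Schwarz, where $\errclass$ is the mean squared regression error over a uniformly sampled action. Here I would invoke Lemma~\ref{lemma:hallucinating} to argue that regressing on the clairvoyant targets $\QFn{\policyOR}{T-t+1}(\state_t,\action_t)$ yields, in expectation over $P(\state\mid\belief)$, the hallucinating oracle's Q-value, so that $\errclass$ is precisely the error relative to $\QFnBel{\policyORBel}{T-t+1}$. Summing the $-2\sqrt{\abs{\actionSet}\,\errclass}$ bound from (A) over the $T$ timesteps and combining with the definition of $\errhor$ yields $\valuePol{\policyLEARN} - \valuePol{\policyORBel} \geq -2T\sqrt{\abs{\actionSet}\,\errclass} + T\errhor$, which is the claim.

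The main obstacle is the interplay between the clairvoyant oracle that the algorithm actually queries and the hallucinating oracle that appears in the bound. The queried targets live on the full state $\state$, whereas the performance-difference expansion is entirely in belief space through $\QFnBel{\policyORBel}{T-t+1}$ and $\valueFnBel{\policyORBel}{T-t+1}$; Lemma~\ref{lemma:hallucinating} is the bridge that lets the regression error against the clairvoyant labels control the action-selection regret against the hallucinating oracle. The remaining delicate point is that $\policyORBel$ is a one-step QMDP construct, so $\max_\action \QFnBel{\policyORBel}{T-t+1}(\belief_t,\action)$ generally differs from $\valueFnBel{\policyORBel}{T-t+1}(\belief_t)$; isolating this discrepancy cleanly as $\errhor$, rather than assuming Bellman consistency of the reference policy as in the fully-observed \aggrevate analysis, is the conceptual crux of the argument.
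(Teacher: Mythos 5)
Your proposal is correct and follows essentially the same route as the paper's proof: apply the performance difference lemma between $\policyLEARN$ and $\policyORBel$, split each summand into the action-selection regret and the local oracle gap $\errhor$, and convert the regret into $2\sqrt{\abs{\actionSet}\,\errclass}$ via the cost-sensitive-classification-to-regression reduction (which you re-derive where the paper cites it). The only cosmetic difference is that you define $T\errhor$ as the on-policy expectation of the gap term, whereas the paper defines $\errhor$ via a minimum over histories and then lower-bounds that term by $T\errhor$; both yield the stated inequality.
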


\begin{proof}

In \FT, the distribution of history $P(\belief | \policyLEARN, t)$ is generated by the learner directly. Let the cost sensitive classification error $\errcs$ be the expected difference in action value selected by the policy and the best action, $\errcs = $
\begin{equation*}
\begin{aligned}
& \expect{\substack{t\sim U(1:T), \\
\belief_t \sim P(\belief | \policyLEARN, t)}}{\max_{\action \in \actionSet} \QFnBel{\policyORBel}{T-t+1}(\belief_t, \action) -\QFnBel{\policyORBel}{T-t+1}(\belief_t, \policyLEARN(\belief)) }
\end{aligned}
\end{equation*}

We also define the local oracle suboptimality $\errhor$ being the minimum gap between oracle value and the best action value averaged over all time-steps, i.e. $\errhor = $
\begin{equation*}
\expect{t\sim U(1:T)}{\min_{\belief_t} \left( \max_{\action \in \actionSet} \QFnBel{\policyORBel}{T-t+1}(\belief_t, \action) - \valueFnBel{\policyORBel}{T-t+1}(\belief_t) \right)}
\end{equation*}
This can be non-zero when the oracle is sub-optimal with respect to itself at any time-step. This is true in this setting as there is no guarantee that the hallucinating oracle will pick a locally optimal actions with respect to its own value function. This is true even if the clairvoyant oracle was locally optimal as in the case of search based planning.

Applying Lemma \ref{lemma:pd} with $\policy = \policyLEARN$, $\policy' = \policyORBel$, we have 
\begin{equation*}
\begin{aligned}
	&\valuePol{\policyLEARN} - \valuePol{\policyORBel} \\
	& = \sum\limits_{t=1}^T \expect{\belief_t \sim P(\belief | \policyLEARN, t)}{ \QFnBel{\policyORBel}{T-t+1}(\belief_t, \policyLEARN(\belief_t)) - \valueFnBel{\policyORBel}{T-t+1}(\belief_t) } \\
	& = \sum\limits_{t=1}^T \expectS{\belief_t \sim P(\belief | \policyLEARN, t)} [ \QFnBel{\policyORBel}{T-t+1}(\belief_t, \policyLEARN(\belief_t)) \\
	& - \max_{\action \in \actionSet} \QFnBel{\policyORBel}{T-t+1}(\belief_t, \action) ]  \\
	& + \sum\limits_{t=1}^T \expect{\belief_t \sim P(\belief | \policyLEARN, t)}{ \max_{\action \in \actionSet} \QFnBel{\policyORBel}{T-t+1}(\belief_t, \action) - \valueFnBel{\policyORBel}{T-t+1}(\belief_t) } \\
	& \geq -T \errcs + \\
	& \sum\limits_{t=1}^T \expect{\belief_t \sim P(\belief | \policyLEARN, t)}{ \max_{\action \in \actionSet} \QFnBel{\policyORBel}{T-t+1}(\belief_t, \action) - \valueFnBel{\policyORBel}{T-t+1}(\belief_t) } \\
	& \geq -T \errcs + T\errhor
\end{aligned}
\end{equation*}

Hence we have the performance bound
\begin{equation*}
\valuePol{\policyLEARN} \geq \valuePol{\policyORBel} -T \errcs + T\errhor
\end{equation*}

Interestingly, note that if $\errhor \geq \errcs$, we would be guaranteed to do \emph{better} than the hallucinating oracle. 

Since we reduce cost sensitive classification to regression by uniformly sampling actions, we can express $\errcs$ in terms of the regression error $\errclass$ using the reduction bound from (\cite{langford2005sensitive}) 

\begin{equation*}
	\errcs \leq 2 \sqrt{\actionSet \;\errclass}
\end{equation*}

Hence we have the performance bound
\begin{equation*}
\valuePol{\policyLEARN} \geq \valuePol{\policyORBel} -2 T \sqrt{\actionSet \; \errclass} + T\errhor
\end{equation*}

\end{proof}


\section{Proof of Theorem 2}
\label{appendix:theorem_aggrevate}

We follow the analysis of \cite{ross2014reinforcement} with two main difference:
\begin{enumerate}
\item \cite{ross2014reinforcement} examine an MDP on states, we translate that to an MDP on history
\item \cite{ross2014reinforcement} consider one step cost minimization, we consider one step reward maximization
\end{enumerate}

We borrow a couple important Lemmas from \cite{ross2014reinforcement}.

\begin{lemma}
\label{lemma:dist_mismatch}
Let $P$ and $Q$ be any two distributions over $x$, let $f(x)$ be a bounded function with range $r$. We then have
\begin{equation*}
	\abs{ \expect{x \sim P}{f(x)} - \expect{x \sim Q}{f(x)} } \leq \frac{r}{2} \norm{P-Q}{1}	
\end{equation*}
\end{lemma}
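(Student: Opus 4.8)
The plan is to exploit the freedom to recenter $f$ by an additive constant, since both $P$ and $Q$ are probability distributions and hence the difference of expectations annihilates any constant. First I would let $a = \inf_x f(x)$ and $b = \sup_x f(x)$, so that by hypothesis $b - a = r$ is the range, and define the centered function $g(x) = f(x) - c$ with $c = \tfrac{1}{2}(a+b)$ the midpoint. The key elementary observation is that $g$ is now symmetrically bounded: $\abs{g(x)} \leq \tfrac{r}{2}$ for every $x$.

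Next I would argue that subtracting the constant $c$ leaves the quantity of interest unchanged. Since $\expect{x\sim P}{c} = c = \expect{x\sim Q}{c}$, we have
\begin{equation*}
\expect{x\sim P}{f(x)} - \expect{x\sim Q}{f(x)} = \expect{x\sim P}{g(x)} - \expect{x\sim Q}{g(x)}.
\end{equation*}
Writing this difference against the signed measure $P - Q$ and applying the triangle inequality (in the discrete case a sum, in the continuous case an integral against $\abs{P-Q}$), I would bound
\begin{equation*}
\abs{\expect{x\sim P}{g(x)} - \expect{x\sim Q}{g(x)}} = \abs{\sum_x (P(x) - Q(x))\, g(x)} \leq \sum_x \abs{P(x) - Q(x)}\, \abs{g(x)}.
\end{equation*}
Substituting the uniform bound $\abs{g(x)} \leq \tfrac{r}{2}$ and pulling it out of the sum yields $\tfrac{r}{2}\sum_x \abs{P(x)-Q(x)} = \tfrac{r}{2}\norm{P-Q}{1}$, which is exactly the claimed inequality.

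There is no genuine obstacle here; the result is a one-line consequence of Hölder's inequality once the recentering is performed. The only point requiring a small amount of care is the interpretation of ``range $r$'': the centering step is precisely what converts a one-sided bound (which would otherwise give a factor of $r$) into the sharp factor of $\tfrac{r}{2}$, so I would emphasize that the symmetric bound $\abs{g(x)} \leq \tfrac{r}{2}$ rather than a cruder $\abs{f(x)} \leq r$ is what produces the stated constant. If one prefers to avoid assuming the supremum and infimum are attained, the same argument goes through with any $c$ such that $\abs{f(x) - c} \leq \tfrac{r}{2}$ holds almost everywhere, which exists whenever $f$ takes values in an interval of length $r$.
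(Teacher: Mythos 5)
Your proof is correct, but note that the paper does not actually prove this lemma at all: it is stated in Appendix C as one of two results ``borrowed'' from \citet{ross2014reinforcement}, so there is no in-paper argument to compare against. Your self-contained proof is the standard one and it is complete. The essential point is exactly the one you emphasize: because $P$ and $Q$ are both probability measures, the difference of expectations annihilates constants, so you may recenter $f$ to $g = f - \tfrac{1}{2}(a+b)$ with $\abs{g(x)} \leq \tfrac{r}{2}$, and then the H\"{o}lder ($L^\infty$--$L^1$) pairing against the signed measure $P-Q$ gives the constant $\tfrac{r}{2}$ rather than the cruder $r$ that a one-sided bound would yield. Equivalently, this is the familiar identity relating total variation distance to the $L^1$ norm, $\mathrm{TV}(P,Q) = \tfrac{1}{2}\norm{P-Q}{1}$, combined with $\abs{\expect{x \sim P}{f(x)} - \expect{x \sim Q}{f(x)}} \leq r \cdot \mathrm{TV}(P,Q)$. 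Your closing remark about replacing the attained $\inf$/$\sup$ by any almost-everywhere midpoint $c$, and the caveat that in the non-discrete case the sum becomes an integral against $\abs{P-Q}$, cover the only technical subtleties; nothing is missing.
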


\begin{lemma}
\label{lemma:mix_divergence}
Let $P(\belief | \policyMix{i})$ be the distribution of history encountered by the mixture policy over all time steps and $P(\belief | \policyLEARN_i)$ be the distribution encountered by the learner. We have
\begin{equation*}
	\norm{ P(\belief | \policyMix{i}) - P(\belief | \policyLEARN_i) }{1} \leq 2 \min (1, T \mixfrac{i})
\end{equation*}
\end{lemma}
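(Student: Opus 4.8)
The plan is to prove the bound timestep-by-timestep and then average, exploiting the fact that the mixture policy $\policyMix{i}$ coincides exactly with the pure learner $\policyLEARN_i$ on any rollout in which its internal coin never selects the oracle. First I would recall that $\policyMix{i}$ flips an independent coin at each step, executing $\policyOR$ with probability $\mixfrac{i}$ and $\policyLEARN_i$ with probability $1-\mixfrac{i}$, and that the aggregate history distribution decomposes as $P(\belief \mid \policy) = \frac{1}{T}\sum_{t=1}^T P(\belief \mid \policy, t)$, where $P(\belief \mid \policy, t)$ is the history distribution induced at the $t$-th step. Fix a timestep $t$; reaching $\belief_t$ requires $t-1$ action choices. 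I would couple the mixture rollout with a pure-learner rollout so that the two processes agree as long as the mixture's coin never lands on the oracle. Since transitions and observations depend only on the current history and the executed action --- not on which policy produced that action --- conditioning the mixture on the event ``all $t-1$ coins selected the learner'' reproduces exactly the learner's own history distribution $P(\belief \mid \policyLEARN_i, t)$.

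Next I would write the decomposition
\[
P(\belief \mid \policyMix{i}, t) = (1-\mixfrac{i})^{t-1}\, P(\belief \mid \policyLEARN_i, t) + \left(1 - (1-\mixfrac{i})^{t-1}\right) D_t,
\]
where $D_t$ is the conditional distribution of $\belief_t$ given that at least one oracle action was taken in the first $t-1$ steps. Subtracting $P(\belief \mid \policyLEARN_i, t)$ from both sides, the leading term cancels and the $L_1$ norm reduces to $\left(1-(1-\mixfrac{i})^{t-1}\right)\norm{D_t - P(\belief \mid \policyLEARN_i, t)}{1}$. Because both $D_t$ and $P(\belief \mid \policyLEARN_i, t)$ are probability distributions, their $L_1$ distance is at most $2$, so
\[
\norm{P(\belief \mid \policyMix{i}, t) - P(\belief \mid \policyLEARN_i, t)}{1} \leq 2\left(1 - (1-\mixfrac{i})^{t-1}\right).
\]
A union bound over the $t-1$ independent coin flips gives $1 - (1-\mixfrac{i})^{t-1} \leq (t-1)\mixfrac{i}$, hence each per-timestep distance is bounded by $\min\!\left(2,\, 2(t-1)\mixfrac{i}\right)$.

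Finally I would average over $t$ using the triangle inequality on $P(\belief \mid \policy) = \frac{1}{T}\sum_t P(\belief \mid \policy, t)$, obtaining $\norm{P(\belief \mid \policyMix{i}) - P(\belief \mid \policyLEARN_i)}{1} \leq \frac{1}{T}\sum_{t=1}^T \min\!\left(2,\, 2(t-1)\mixfrac{i}\right)$, and then bounding every summand by $2\min(1, T\mixfrac{i})$ via $(t-1) \leq T$, which yields the claim. The only delicate point is the coupling step --- rigorously justifying that conditioning the mixture process on ``no oracle action was ever chosen'' reproduces the learner's history distribution exactly, which hinges on the transition and observation dynamics being agnostic to the identity of the acting policy. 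Everything after that is a union bound and the trivial $L_1 \leq 2$ estimate for probability distributions.
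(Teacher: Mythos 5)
Your proof is correct. Note, however, that the paper does not actually prove this lemma: it is stated as one of two results ``borrowed'' from \citet{ross2014reinforcement}, so the comparison is against the standard argument in that reference rather than a proof written out in this paper. Your argument is essentially that standard one, with one structural difference: Ross and Bagnell condition directly on the event that the mixture's coin never selects the oracle during the \emph{entire} $T$-step trajectory, writing $P(\belief \mid \policyMix{i}) = (1-\mixfrac{i})^T P(\belief \mid \policyLEARN_i) + \left(1-(1-\mixfrac{i})^T\right) D$ for some distribution $D$, and bounding the distance by $2\left(1-(1-\mixfrac{i})^T\right) \leq 2\min(1, T\mixfrac{i})$; you instead perform the same conditioning separately at each timestep $t$ (over the first $t-1$ coin flips) and then average over $t$. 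The two routes are interchangeable and rest on the identical coupling idea; your per-timestep version is marginally tighter before the final relaxation ($(t-1)\mixfrac{i}$ versus $T\mixfrac{i}$) and has the side benefit of explicitly establishing the per-timestep bound $\norm{P(\belief \mid \policyMix{i}, t) - P(\belief \mid \policyLEARN_i, t)}{1} \leq 2\min(1, T\mixfrac{i})$, which is the form actually invoked in the paper's proof of Theorem~\ref{theorem:aggrevate}. The ``delicate point'' you flag --- that conditioning on the coin outcomes reproduces the learner's history distribution because the flips are exogenous, independent of state and history, and the transition and observation dynamics depend only on the executed action --- is exactly the justification required, and it holds in this setting.
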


We now state the theorem we wish to prove
\begin{theorem*}
$N$ iterations of \aggrevate, collecting $m$ regression examples per iteration guarantees that with probability at least $1-\delta$
\begin{equation*}
\begin{aligned}
	\valuePol{\policyLEARN} \geq & \valuePol{\policyORBel} \\
	& - 2 T \sqrt{\abs{\actionSet} \left( \errclass + \errreg + \bigo{\sqrt{\nicefrac{\log \nicefrac{1}{\delta}}{N m}}} \right) } \\
	& - \bigo{\frac{R \; T \log T}{N}} + T\errhor\\
\end{aligned}
\end{equation*}
where $\errclass$ is the empirical regression regret of the best regressor in the regression class on the aggregated dataset, $\errreg$ is the empirical online learning average regret on the sequence of training examples, $R$ is the range of oracle action value and $\errhor$ is the local oracle suboptimality.
\end{theorem*}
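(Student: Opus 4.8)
The plan is to mirror the \aggrevate{} analysis of \cite{ross2014reinforcement}, making the two adaptations noted in the excerpt: lift the MDP-on-states argument to an MDP-on-histories, and flip the one-step-cost minimization to one-step-reward maximization. First I would set up the per-iteration loss functions. At iteration $i$, define the population loss of a policy $\policy$ against the \emph{hallucinating} oracle as
\begin{equation*}
\ell_i(\policy) = \expect{\substack{t\sim U(1:T),\\ \belief_t \sim P(\belief | \policyMix{i}, t)}}{\max_{\action \in \actionSet}\QFnBel{\policyORBel}{T-t+1}(\belief_t, \action) - \QFnBel{\policyORBel}{T-t+1}(\belief_t, \policy(\belief_t))}.
\end{equation*}
Here I crucially use Lemma~\ref{lemma:hallucinating}: because offline imitation of the clairvoyant oracle equals online imitation of the hallucinating oracle, the cost-to-go labels $\QVal_j^{\policyOR}$ collected in Alg.~\ref{alg:Agg} are unbiased regression targets for $\QFnBel{\policyORBel}{T-t+1}$ on the history distribution induced by the roll-in policy. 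This lets me phrase everything in terms of the hallucinating oracle, whose realizability error is governed purely by feature expressiveness.

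Next I would chain three reductions. (1) \emph{Regret to average loss:} since datasets are aggregated, the sequence of learners is an instance of Follow-The-Leader on the online sequence $\ell_1,\dots,\ell_N$, so the average cost-sensitive classification loss of the learners is bounded by $\min_{\policy}\frac1N\sum_i\ell_i(\policy)$ plus the no-regret term $\errreg$. (2) \emph{Finite-sample concentration:} each $\ell_i$ is estimated from $m$ samples, contributing the $\bigo{\sqrt{\log(1/\delta)/(Nm)}}$ deviation that holds with probability $1-\delta$ by a uniform concentration / Hoeffding argument over the $Nm$ examples. (3) \emph{Classification-to-regression:} I invoke the reduction bound $\errcs \le 2\sqrt{\abs{\actionSet}\,\errclass}$ of \cite{langford2005sensitive} used already in Theorem~\ref{theorem:ft}, which converts the cost-sensitive classification regret into the regression regret $\errclass$ and supplies the $2T\sqrt{\abs{\actionSet}(\cdots)}$ factor.

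The final step ties the learner's measured loss back to its \emph{value} via the performance difference lemma (Lemma~\ref{lemma:pd}) applied with $\policy = \policyLEARN$ and $\policy' = \policyORBel$, exactly as in the proof of Theorem~\ref{theorem:ft}. Decomposing each summand into a ``learner vs.\ best action'' gap (bounded by $-T\errcs$) and a ``best action vs.\ oracle value'' gap (bounded below by $T\errhor$ using the local oracle suboptimality definition) yields the $+T\errhor$ term. The only remaining piece is the mixing correction: the loss is measured on $P(\belief|\policyMix{i})$ but I need it on $P(\belief|\policyLEARN_i)$, so I apply Lemma~\ref{lemma:dist_mismatch} together with Lemma~\ref{lemma:mix_divergence}, which bounds the total-variation gap by $2\min(1,T\mixfrac{i})$; summing $\mixfrac{i}=(1-\alpha)^{i-1}$ over iterations and scaling by the oracle value range $R$ produces the $\bigo{R\,T\log T / N}$ term. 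Collecting all contributions gives the stated bound.

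I expect the main obstacle to be bookkeeping the history-distribution lift carefully: unlike the state-based argument of \cite{ross2014reinforcement}, the mismatch lemmas and the performance difference lemma must all be stated over the belief/history space, and I must verify that Lemma~\ref{lemma:hallucinating} legitimately lets the clairvoyant labels stand in for hallucinating-oracle action values inside every expectation (rather than only in the final objective). The mixing-term accounting, i.e.\ showing $\sum_{i=1}^{N}\min(1,T\mixfrac{i}) = \bigo{T\log T}$ for the geometric schedule, is the other place where care is needed but is essentially the same computation as in the original \aggrevate{} proof.
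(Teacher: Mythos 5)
Your proposal follows essentially the same route as the paper's proof: the performance difference lemma applied against the hallucinating oracle, the decomposition into a cost-sensitive classification gap (bounded via the Langford--Beygelzimer regression reduction and the Ross--Bagnell no-regret/concentration bound) plus the $T\errhor$ local-suboptimality term, and the distribution-mismatch correction via Lemmas~\ref{lemma:dist_mismatch} and~\ref{lemma:mix_divergence} summed over the geometric mixing schedule to give the $\bigo{R\,T\log T/N}$ term. The only cosmetic difference is ordering (the paper applies the performance difference lemma per iteration $\policyLEARN_i$ and then averages, invoking the online-learning bound on $\errcs$ at the end), and your flagged concern about carrying Lemma~\ref{lemma:hallucinating} inside the per-iteration expectations is handled exactly as you suspect, by the same marginalization over $P(\state_t \mid \belief_t)$.
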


\begin{proof}

We first define the local oracle suboptimality $\errhor$ as in Appendix~\ref{appendix:theorem_ft}
\begin{equation*}
\errhor =  \expect{t\sim U(1:T)}{\min_{\belief_t} \left( \max_{\action \in \actionSet} \QFnBel{\policyORBel}{T-t+1}(\belief_t, \action) - \valueFnBel{\policyORBel}{T-t+1}(\belief_t) \right)}
\end{equation*}

We define the average cost sensitive classification error $\errcs$
\begin{equation*}
\begin{aligned}
& \errcs = \frac{1}{N} \sum_{i=1} ^ N\\
& \expect{\substack{t\sim U(1:T), \\
\belief_t \sim P(\belief | \policyMix{i}, t)}}{\max_{\action \in \actionSet} \QFnBel{\policyORBel}{T-t+1}(\belief_t, \action) -\QFnBel{\policyORBel}{T-t+1}(\belief_t, \policyLEARN(\belief)) }
\end{aligned}
\end{equation*}

Applying the performance difference lemma in Lemma~\ref{lemma:pd} with $\policy = \policyLEARN_i$, $\policy' = \policyORBel$, we have 

\begin{equation*}
\begin{aligned}
	&\valuePol{\policyLEARN_i} - \valuePol{\policyORBel} \\
	& = \sum\limits_{t=1}^T \expect{\belief_t \sim P(\belief | \policyLEARN_i, t)}{ \QFnBel{\policyORBel}{T-t+1}(\belief_t, \policyLEARN_i(\belief_t)) - \valueFnBel{\policyORBel}{T-t+1}(\belief_t) } \\
	& = \sum\limits_{t=1}^T \expectS{\belief_t \sim P(\belief | \policyLEARN_i, t)} [ \QFnBel{\policyORBel}{T-t+1}(\belief_t, \policyLEARN_i(\belief_t))  \\
	& - \max_{\action \in \actionSet} \QFnBel{\policyORBel}{T-t+1}(\belief_t, \action) ] \\
	& + \sum\limits_{t=1}^T \expect{\belief_t \sim P(\belief | \policyLEARN_i, t)}{ \max_{\action \in \actionSet} \QFnBel{\policyORBel}{T-t+1}(\belief_t, \action) - \valueFnBel{\policyORBel}{T-t+1}(\belief_t) } \\
	& \geq \sum\limits_{t=1}^T \expectS{\belief_t \sim P(\belief | \policyLEARN_i, t)} [ \QFnBel{\policyORBel}{T-t+1}(\belief_t, \policyLEARN_i(\belief_t)) \\
	& - \max_{\action \in \actionSet} \QFnBel{\policyORBel}{T-t+1}(\belief_t, \action) ] \\
	& + T\errhor\\
\end{aligned}
\end{equation*}

We define the range $R$ of the maximum difference between the best and worst action value function of the oracle.

\begin{equation*}
R = \max_{t, \belief_t} \abs{ \max_{\action \in \actionSet} \QFnBel{\policyORBel}{T-t+1}(\belief_t, \action)  - \min_{\action \in \actionSet} \QFnBel{\policyORBel}{T-t+1}(\belief_t, \action)   }
\end{equation*}

We can then apply Lemma~\ref{lemma:dist_mismatch} and Lemma~\ref{lemma:mix_divergence} with $P(\belief | \policyLEARN_i, t)$ and $P(\belief | \policyMix{i}, t)$ to get 
\begin{equation*}
\begin{aligned}
	&\valuePol{\policyLEARN_i} - \valuePol{\policyORBel} \\
	& \geq \sum\limits_{t=1}^T \expectS{\belief_t \sim P(\belief | \policyMix{i}, t)}[ \QFnBel{\policyORBel}{T-t+1}(\belief_t, \policyLEARN_i(\belief_t)) \\ 
	& - \max_{\action \in \actionSet} \QFnBel{\policyORBel}{T-t+1}(\belief_t, \action) ] \\
	& - \frac{R}{2} \sum\limits_{t=1}^T \norm{P(\belief | \policyMix{i}, t) - P(\belief | \policyLEARN_i, t)}{1} + T\errhor\\
	& \geq \sum\limits_{t=1}^T \expectS{\belief_t \sim P(\belief | \policyMix{i}, t)}[ \QFnBel{\policyORBel}{T-t+1}(\belief_t, \policyLEARN_i(\belief_t)) \\
	& - \max_{\action \in \actionSet} \QFnBel{\policyORBel}{T-t+1}(\belief_t, \action) ] \\
	& - R \;T \;\min (1, T \mixfrac{i}) + T\errhor\\
\end{aligned}
\end{equation*}

If we now wish to bound the performance of the average learner over $N$ iterations
\begin{equation*}
\begin{aligned}
	&\valuePol{\policyLEARNAvg} - \valuePol{\policyORBel} \\
	& = \frac{1}{N} \sum_{i=1}^N \abs{ \valuePol{\policyLEARN_i} - \valuePol{\policyORBel} } \\
	& \geq - T \errcs  -  \frac{R \; T}{N} \sum_{i=1}^N \min (1, T \mixfrac{i}) + T\errhor\\
	& \geq - T \errcs  -   \frac{R \; T}{N} \frac{ \log(T) + 2}{\alpha}  + T\errhor\\
\end{aligned}
\end{equation*}

where the last inequality follows from \cite{ross2014reinforcement} after setting $\mixfrac{i} = (1 - \alpha)^{i-1}$.

To bound $\errcs$, we need to define two terms: $\errclass$, the empirical regression regret of the best regressor in the regression class on the aggregated dataset, and $\errreg$ the empirical online learning average regret on the sequence of training examples. We then use the following result from \cite{ross2014reinforcement} 
\begin{equation*}
\errcs \leq 2 \sqrt{\abs{\actionSet} \left( \errclass + \errreg + \bigo{\sqrt{\nicefrac{\log \nicefrac{1}{\delta}}{N m}}} \right) }
\end{equation*}
with probability $1-\delta$.

Also note that the performance of the best policy in the sequence $\policyLEARN$ is better than the average learner, i.e. $\valuePol{\policyLEARN} \geq \valuePol{\policyLEARNAvg}$.

This results in the following bound for \aggrevate with probability $1-\delta$
\begin{equation*}
\begin{aligned}
	\valuePol{\policyLEARN} \geq & \valuePol{\policyORBel} \\
	& - 2 T \sqrt{\abs{\actionSet} \left( \errclass + \errreg + \bigo{\sqrt{\nicefrac{\log \nicefrac{1}{\delta}}{N m}}} \right) } \\
	& - \bigo{\frac{R \; T \log T}{N}} + T\errhor\\
\end{aligned}
\end{equation*}

\end{proof}

\section{Proof of Theorem 3}
\label{appendix:theorem_hiddenunc}

\begin{theorem*}
$N$ iterations of \aggrevate with Clairvoyant one-step-reward collecting $m$ regression examples per iteration guarantees that with probability at least $1-\delta$
\begin{equation*}
\begin{aligned}
  \valuePol{\policyLEARN} \geq & \left(1 - \frac{1}{e}\right)\valuePol{\policy^*} \\
  & - 2 T \sqrt{\abs{\actionSet} \left( \errclass + \errreg + \bigo{\sqrt{\nicefrac{\log \nicefrac{1}{\delta}}{N m}}} \right) } \\
  & - \bigo{\frac{R \; T \log T}{N}}\\
\end{aligned}
\end{equation*}
where $\errclass$ is the empirical regression regret of the best regressor in the regression class on the aggregated dataset, $\errreg$ is the empirical online learning average regret on the sequence of training examples, $R$ is the maximum range of one-step-reward.
\end{theorem*}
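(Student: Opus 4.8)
The plan is to reduce this statement to the generic \aggrevate guarantee of Theorem~\ref{theorem:aggrevate} composed with the classical near-optimality of adaptive greedy selection. First I would pin down what the learner is actually imitating. By Definition~\ref{def:clair_onestep_rew} the Clairvoyant One-step-reward oracle returns only the marginal utility $\QFn{\policyOR}{t}(\state, \action) = \marginalGain{\action \mid \{\vertex_1,\dots,\vertex_t\}}{\world}$, so by Lemma~\ref{lemma:hallucinating} the corresponding hallucinating oracle evaluates $\QFnBel{\policyORBel}{T-t+1}(\belief, \action) = \expect{\world \sim P(\world \mid \belief)}{\marginalGain{\action \mid \vertexSet_\belief}{\world}}$, the expected one-step marginal gain under the posterior. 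The policy that greedily maximizes this quantity at every step is exactly the adaptive greedy policy $\policyGR$, so $\policyORBel = \policyGR$.

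Second, I would invoke the adaptive submodularity of $\utilityFnDef$. Under the assumed adaptive monotonicity and adaptive submodularity, the result of \cite{golovin2011adaptive} guarantees that adaptive greedy selection is near-optimal, i.e. $\valuePol{\policyORBel} = \valuePol{\policyGR} \geq \left(1 - \tfrac{1}{e}\right)\valuePol{\policy^*}$. This is the step that produces the $\left(1-\tfrac{1}{e}\right)$ factor and replaces the comparison against the oracle's own value in Theorem~\ref{theorem:aggrevate} with a comparison against the true optimum $\valuePol{\policy^*}$.

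Third, I would apply Theorem~\ref{theorem:aggrevate} essentially verbatim, since \aggrevate here is run exactly as in that theorem but with the one-step-reward oracle. This yields $\valuePol{\policyLEARN} \geq \valuePol{\policyORBel} - 2T\sqrt{\abs{\actionSet}(\errclass + \errreg + \bigo{\cdots})} - \bigo{\frac{R \; T \log T}{N}} + T\errhor$, where now $R$ is the range of the one-step reward rather than of the cumulative oracle value. The key observation is that the local-suboptimality term $\errhor$ vanishes: it measures the gap between $\max_\action \QFnBel{\policyORBel}{T-t+1}(\belief_t,\action)$ and $\valueFnBel{\policyORBel}{T-t+1}(\belief_t)$, i.e. the failure of the imitated oracle to act locally optimally with respect to the imitated values, but the greedy oracle selects precisely the maximizer of the one-step quantity it is scored against, so this gap is zero. (Even without this, $\errhor \geq 0$ appears with a $+$ sign and could simply be dropped to preserve a valid lower bound.) Substituting $\valuePol{\policyORBel} \geq \left(1-\tfrac{1}{e}\right)\valuePol{\policy^*}$ then gives the stated inequality.

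The main obstacle is the first and third steps working in concert: I must confirm that the hallucinating oracle associated with the one-step-reward oracle is genuinely the adaptive greedy policy whose value the adaptive-submodular guarantee lower-bounds, and that the approximate nature of the learner (it is only greedy up to the cost-sensitive error $\errcs$) is already absorbed into Theorem~\ref{theorem:aggrevate}'s error terms rather than requiring a separate robust version of the adaptive-submodular bound. Getting the bookkeeping of $\errhor$ and the redefinition of $R$ exactly right, and checking that the posterior-expectation formulation of the gain matches the adaptive-submodular hypotheses stated just before the theorem, is where the care lies.
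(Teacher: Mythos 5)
Your proposal is correct and follows essentially the same route as the paper's proof: identify the hallucinating oracle induced by the one-step-reward oracle with the adaptive greedy policy $\policyGR$, invoke the $(1-\nicefrac{1}{e})$ guarantee of \cite{golovin2011adaptive}, and substitute into Theorem~\ref{theorem:aggrevate} with $\errhor = 0$ and $R$ re-interpreted as the range of the one-step reward. Your additional remark that $T\errhor \geq 0$ could simply be dropped is a harmless alternative to the paper's exact claim that $\errhor = 0$.
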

\begin{proof}
We use an important result from \cite{golovin2011adaptive} about the near-optimality properties of greedy maximization of an adaptive montonone and adaptive submodular set function. We define a greedy policy 

The greedy algorithm selects a node to visit that has the highest expected marginal gain under the conditional distribution of world maps given the history. If the history of vertices visited and measurements received are where $\belief = \{ \vertex_i \}_{i=1}^t, \{ \meas_i \}_{i=1}^t$, the greedy algorithm $\policyGR(\belief_t)$ selects node to visit $\vertex_{t+1}$ with the highest expected marginal gain
\begin{equation}
\label{eq:adaptive_greedy}
\vertex_{t+1} = \argmaxprob{\vertex \in \vertexSet} \expect{\world \sim P(\world | \belief_t)}{\marginalGain{\vertex | \{ \vertex_i \}_{i=1}^t }{\world}}
\end{equation}

\cite{golovin2011adaptive} show that the greedy algorithm $\policyGR$  has the following guarantee
\begin{lemma}
\label{lemma:adapt_sub}
If $\utilityFnDef$ is adaptive monotone and adaptive submodular with respect to $P(\world)$ and $\policyGR$ is a greedy policy, then for all policies $\policy^*$ we have
\begin{equation*}
	\expect{\world \sim P(\world)}{\utilityFn{\policyGR}{\world}} \geq \left( 1 - \frac{1}{e} \right) \expect{\world \sim P(\world)}{\utilityFn{\policy^*}{\world}} 
\end{equation*}
\end{lemma}

We note that for the Clairvoyant one-step-reward oracle is defined such that 
\begin{equation}
\begin{aligned}
&\policyORBel(\belief_t) = \argmaxprob{\action \in \actionSet} \QFnBel{\policyORBel}{T-t+1}(\belief_t, \action) \\
&	= \argmaxprob{\vertex_{t+1} \in \vertexSet} \expect{\world \sim P(\world | \belief)}{  \marginalGain{\vertex_{t+1} | \{ \vertex_i \}_{i=1}^t }{\world} }
\end{aligned}
\end{equation}
where the second inequality uses Definition~\ref{def:clair_onestep_rew} along with Definition~\ref{def:halluc}. Hence $\policyORBel = \policyGR$. Also the local suboptimality is $\errhor = 0$ since the oracle selects actions that maximize one-step-reward. Finally the range $R$ is that of the one step reward. 

Hence applying these terms along with Lemma~\ref{lemma:adapt_sub} in Theorem~\ref{theorem:aggrevate}, we have 
\begin{equation*}
\begin{aligned}
  \valuePol{\policyLEARN} \geq & \left(1 - \frac{1}{e}\right)\valuePol{\policy^*} \\
  & - 2 T \sqrt{\abs{\actionSet} \left( \errclass + \errreg + \bigo{\sqrt{\nicefrac{\log \nicefrac{1}{\delta}}{N m}}} \right) } \\
  & - \bigo{\frac{R \; T \log T}{N}}\\
\end{aligned}
\end{equation*}
\end{proof}
\section{Machine Learning Baselines for Search Based Planning}
\label{appendix:ml_baseline_search}

\subsection{Supervised Learning (Behavior Cloning)}
The supervised learning algorithm is identical to \algSail with the key difference that roll-outs are made with $\policyOR$ and not $\policyMix{i}$. This is equivalent to setting the mixing parameter $\mixParam = 1$ across all environments. For completeness, we present the algorithm below in Alg.~\ref{alg:supervised_learning}
\begin{algorithm}
\caption{Supervised Learning $(P(\world), P(\start, \goal))$ \label{alg:supervised_learning}}
\begin{algorithmic}[1]
\State Initialize $\dataset \leftarrow \emptyset$ 
\State Collect datapoints as follows:
\For{$i = 1, \ldots, m$}
\State Initialize sub-dataset $\dataset_{i} \leftarrow \emptyset$
\State Sample $\world \sim P(\world)$ 
\State Sample $\pair{\start}{\goal} \sim P(\start)$
\State Invoke clairvoyant oracle planner 
\item[]\hspace*{5mm} to compute $\costToGoOracle\pair{\vertex}{\world} \forall \vertex \in \vertexSet$
\State Rollout a new search with $\policyOracle$
\State At each timestep $t$ pick a random action $\action_t$ 
\item[]\hspace*{5mm} to get corresponding $\pair{\vertex}{\state_t}$
\State Query oracle for $\costToGoOracle\pair{\vertex}{\world}$ 
\State $\dataset_i \gets \dataset_{i} \cup \left< \vertex, \state_t, \costToGoOracle\pair{\vertex}{\world} \right>$
\State Continue roll-out with $\policyOR$ till end of episode.
\EndFor
\State Append to c.s classification data $\dataset \leftarrow \dataset \cup \dataset_{i}$ 
\State Train on $\dataset$ to get $\policyLEARN$
\State \textbf{Return} $\policyLEARN$
\end{algorithmic}
\end{algorithm}
We use $m = 600$ for all the environments. The network architecture and hyper-parameters used are the same as \algSail.

\subsection{Q Learning with Function Approximation}
We use an episodic implementation of the Q-learning algorithm which collects data in an iteration-wise manner similar to \algSail. The learner is trained on the aggregated dataset across all iterations by regressing to the TD-error. The aggregated dataset $\dataset$ effectively acts as an experience replay buffer to which helps in stabilizing learning when using neural network function approximation as has been suggested in recent work~\cite{mnih-dqn-2015}. However,we do not use a target network or any other extensions over the original qlearning algorithm in our baselines~\cite{DBLP:journals/corr/HasseltGS15, DBLP:journals/corr/SchaulQAS15}. We also use only a single observation to take decisions and not a history length of past $h$ observations for a fair comparison with \algSail which also uses a single observation. Alg.~\ref{alg:q_learning} describes the training procedure for the Q-learning baseline.
\begin{algorithm}
\caption{Q-Learning $(P(\world), P(\start, \goal), k)$ \label{alg:q_learning}}
\begin{algorithmic}[1]
\State	Initialize $\dataset \leftarrow \emptyset,\; \policyLEARN_{1}$ to any policy in $\policySet$ 
\For{$i = 1, \ldots, N$}
\State Initialize sub-dataset $\dataset_{i} \leftarrow \emptyset$ 
\State Let mixture policy be 
\item[]\hspace*{5mm} $\policyMix{i} = \epsilon  \text{-greedy on} \;  \policyLEARN_{i} \; \text{with} \; \epsilon_{i} $ 
\State Collect \emph{mk} datapoints as follows:
\For{$j = 1,\ldots,m$} 
\State Sample $\world \sim P(\world)$  
\State Sample $\pair{\start}{\goal} \sim P(\start)$
\State Sample uniformly \emph{k} timesteps $\seqset{t}{k}$ 
\item[]\hspace*{10mm} where each $t_{i} \in \ \set{1, \ldots ,\planTime}$
\State Rollout a new search with $\policyMix{i}$
\State At each $t\in\seqset{t}{k}$, 
\item[]\hspace*{10mm} $\dataset_i \gets \dataset_{i} \cup \left< \vertex, \state_t, C, \vertex_{t+1} \right>$ 
\State Continue roll-out with $\policyMix{i}$ till end of episode.
\EndFor
\State Append to dataset $\dataset \leftarrow \dataset \cup \dataset_{i}$
\State Train learner by minimizing T.D error  on $\dataset$ 
\item[]\hspace*{5mm} to get $\policyLEARN_{i+1}$
\EndFor
\State \textbf{Return} Best $\policyLEARN$ on validation
\end{algorithmic}
\end{algorithm}
$C$ is the one step cost which is 1 for every expansion till goal is added to the open list. We use $k = 100$ and $\epsilon_{0} = 0.9$. Epsilon is decayed after every iteration in an exponential manner. Network architecture and params are kept the same as \algSail.

\subsection{Cross Entropy Method (C.E.M)}
We use C.E.M as a derivative free optimization method for training~\citep{goodfellow2016deep}. At each iteration of the algorithm we sample $\mathrm{batch_{size}} = 40$ set of parameters from a Gaussian Distribution. Each parameter is used to roll-out a policy on 5 environments each and the total cost is collected. The total cost (number of expansions) is used as the fitness function and the 
the best performing, $n_{elite} = 20\%$ of the parameters are selected. These elite parameters are then used to create a new Gaussian distribution (using sample mean and standard deviation) for the next iteration. At the end of all iterations, the best performing policy on a set of held-out states is returned. For this baseline, we use a simpler neural network architecture with one hidden layer of 100 units and ReLu activation.
\section{Representation for Search Based Planning}
\label{appendix:representation_search}

In order to overcome the changing sizes of the observation and action spaces in our setting, we use insight from motion planning literature and represent an entire search state in terms of closest nodes in $\openList$ to a set of pre-defined \emph{attractor states} and \emph{attractor paths}. Attractor states are manually defined states that can be thought of as landmarks trying to pull the search cloud in different directions. Such states can be useful in pulling the search out of local minima such as a bugtrap or they could be strategic orientations of the robot or an object the robot is trying to manipulate that lead to faster solutions~\citep{aine2016multi}. Attractor paths on the other hand are solutions to a small subset of environments from the training dataset. In many episodic tasks, where the structure of the environment does not change drastically between planning iterations, such \emph{path-reuse} can be very useful in finding solutions faster~\citep{phillips2012graphs}. The planning algorithm is built into the environment, and the agent only receives as an observation the nodes in the open list closest to each attractor paths/states. At each iteration then, the action that the agent performs is to select a node from the observation to expand.  

Although this is a generic framework that can be applied to many different problems, we chose not to use it for this work. The reason for this choice was that in this paper, our aim was to build the foundation for learning graph search heuristics as sequential decision making problem and clearly demonstrate the efficacy of the imitation learning paradigm in this domain. We found that using attractor paths/states would distract from the effectiveness of $\algSail$ and also make learning easier for other baseline methods.  

In our final experiments, we instead featurize every pair $\pair{\vertex}{\state}$ using simple information based on the search tree and the environment uncovered up until that point. 

\section{Analyzing the time complexity of \algSail}
\label{appendix:sail_complexity}

The computational bottleneck in \algSail is the $\select$ function which requires estimating the Q-value for every node in the open list $\openList$. Contrast this with something like Dijkstra's algorithm which selects a node to expand in very little time, but wastes a lot of computation in excessively expanding nodes and evaluating edges. In order to analyze the usefulness of \algSail in terms of computational gains, we make the following assumptions. Firstly, we assume that the computational cost of calculating the Q-value of a single node(including feature calculation and forward pass through function approximator) is equal to the computational cost of $\expand$ function (involves checking all edges coming out of a node for collision and calculating edge costs). This is in reality a very conservative approximation as in many high-dimensional planning problems, collision checking is way more computationally demanding as it requires expensive geometric intersection computations. We also ignore the computational cost of re-ordering the priority-queue whenever a node is popped which means Dijkstra's algorithm can select a node to expand in $O(1)$. Given a graph with cardinality $k$, we obtain the following time complexities for algSail and Dijkstra's algorithm.
	
\subsection*{\algSail test time complexity:} Assume \algSail did $A$ expansions before it found a solution starting from an empty $\openList$. Also assume that states are never removed from $\openList$. 
	Total $\select$ complexity: $O\left(k + 2k + 3k + \ldots + Ak\right) = O\left(kA^{2}\right)$
	Total expansion complexity: $O(\sum_{i=1}^{A} k) = O(Ak)$ 
	From this we get total complexity of \algSail to be $O\left(kA^{2}\right)$. 
\subsection*{Dijkstra's test time complexity:} Assume Dijkstra's algorithm does $B$ expansions before finding a path. As mentioned earlier, we assume that priority-queue reordering can be achieved in constant time. 
	Total $\select$ complexity: $O(1)$
	Total expansion complexity: $O\left(kB\right)$
	From this we obtain total complexity for Dijkstra's algorithm to be $O\left(kB\right)$.
	
	From the above analysis, for \algSail to have lesser overall computational complexity than uninformed search we require the following condition to be satisfied:
	\begin{equation}
	A^{2} < B
	\end{equation}
	Thus, \algSail must obtain a squared reduction in total number of expansions for it to be computationally better than uninformed search. We argue that this strengthens the case for using \algSail in higher dimensional search graphs as in uninformed search expands a very large number of nodes as the total number of graph nodes increases. 
\end{appendices}

\end{document}